\newtheorem{theorem}{Theorem}
\newtheorem{lemma}{Lemma}
\newtheorem{corollary}{Corollary}
\newtheorem{remark}{Remark}
\newcommand{\relu}[1]{\left[ #1 \right]_+}
\newcommand{\set}[1]{\left\lbrace#1\right\rbrace}
\newcommand{\p}[1]{\left( #1 \right)}
\newcommand{\pcc}[1]{\left[ #1 \right]}
\newcommand{\direction}{\bu}
\newcommand{\weights}{\bw_1^n}
\newcommand{\weightstilde}[1]{\tilde{\bw}_{1,#1}^n}
\newcommand{\weightsprime}{\bw_1^{\prime n}}
\newcommand{\weightsstar}{\bw_1^{* n}}
\newcommand{\spnorm}[1]{\left|\left|#1\right|\right|_{\text{sp}}}
\newcommand{\reals}{\mathbb{R}}
\newcommand{\E}{\mathbb{E}}
\newcommand{\abs}[1]{\left| #1 \right|}
\newcommand{\bI}{\mathbf{I}}
\newcommand{\be}{\mathbf{e}}
\newcommand{\bx}{\mathbf{x}}
\newcommand{\bw}{\mathbf{w}}
\newcommand{\bu}{\mathbf{u}}
\newcommand{\bv}{\mathbf{v}}
\newcommand{\bn}{\mathbf{n}}
\newcommand{\Ncal}{\mathcal{N}}
\newcommand{\norm}[1]{\left|\left|#1\right|\right|}
\newtheorem{example}{Example}
\newcommand{\secref}[1]{Sec.~\ref{#1}}
\newcommand{\subsecref}[1]{Subsection~\ref{#1}}
\newcommand{\figref}[1]{Fig.~\ref{#1}}
\renewcommand{\eqref}[1]{Eq.~(\ref{#1})}
\newcommand{\lemref}[1]{Lemma~\ref{#1}}
\newcommand{\thmref}[1]{Thm.~\ref{#1}}
\title{Spurious Local Minima are Common\\ in Two-Layer ReLU Neural Networks}
\author{Itay Safran\\Weizmann Institute of 
Science\\\texttt{itay.safran@weizmann.ac.il}\and
	Ohad Shamir
	\\Weizmann Institute of Science\\{\texttt{ohad.shamir@weizmann.ac.il}}
}
\date{}
\begin{document}

\maketitle
\begin{abstract}
	We consider the optimization problem associated with training simple ReLU 
	neural networks of the form $\bx\mapsto \sum_{i=1}^{k}\max\{0,\bw_i^\top 
	\bx\}$ with respect to the squared loss. We provide a computer-assisted 
	proof that even if the input distribution is standard Gaussian, even if the 
	dimension is arbitrarily large, and even if the target values are generated by 
	such a network, with orthonormal parameter vectors, the problem can 
	still have spurious local minima once $6\le k\le 20$. By a concentration of measure argument, 
	this implies that in high input dimensions, \emph{nearly all} target networks of 
	the relevant sizes lead to spurious local minima. Moreover, 
	we conduct experiments which show that the probability of hitting such 
	local minima is quite high, and increasing with the network size. On the 
	positive side, mild over-parameterization appears to drastically reduce 
	such local minima, indicating that an over-parameterization assumption is 
	necessary to get a positive result in this setting. 
\end{abstract}

\section{Introduction}

One of the biggest mysteries of deep learning is why neural networks are 
successfully trained in practice using gradient-based methods, despite the 
inherent non-convexity of the associated optimization problem. For example, 
non-convex problems can have poor local minima, which will cause any local 
search method (and in particular, gradient-based ones) to fail. 
Thus, it is natural to ask what types of assumptions, in the context of 
training neural networks, might mitigate such problems. For example, recent 
work has shown that other non-convex learning problems, such as phase 
retrieval, matrix completion, dictionary learning, and tensor decomposition, do 
not have spurious local minima under suitable assumptions, in which case local 
search methods have a chance of succeeding 
(e.g., 
\citep{ge2015escaping,sun2015nonconvex,ge2016matrix,bhojanapalli2016global}). 
Is it possible to prove similar positive results for neural networks?

In this paper, we focus on perhaps the simplest non-trivial ReLU neural 
networks, namely predictors of the form
\[
\bx\mapsto \sum_{i=1}^{k}[\bw_i^\top 
\bx]_+
\]
for some $k> 1$, where $[z]_+=\max\{0,z\}$ is the ReLU function, $\bx$ is a 
vector in $\reals^d$, and $\bw_1,\ldots,\bw_k$ are parameter 
vectors. We consider directly optimizing the expected squared loss, where the 
input is standard Gaussian, and in the realizable case -- namely, that the 
target values are generated 
by a network of a similar architecture:
\begin{equation}\label{eq:objfun}
\min_{\bw_1,\ldots,\bw_k}\E_{\bx\sim 
\Ncal(\mathbf{0},I)}\left[\frac{1}{2}\left(\sum_{i=1}^{k}[\bw_i^\top\bx]_+
-\sum_{i=1}^{k}[\bv_i^\top\bx]_+\right)^2~\right]~.
\end{equation}
Note that here, the choice $\bw_i=\bv_{\sigma(i)}$ (for all $i=1,\ldots,k$ and 
any permutation $\sigma$) is a global minimum with zero expected loss. Several 
recent papers analyzed such objectives, in the hope of showing that it does not 
suffer from spurious local minima (see related work below for more 
details). 

Our main contribution is to prove that unfortunately, this conjecture is false, 
and that \eqref{eq:objfun} indeed has spurious local minima once $6\le k\le20 
$. Moreover, this is true even if the dimension is unrestricted, and even if 
we assume that $\bv_1,\ldots,\bv_k$ are orthonormal 
vectors. In fact, since in high dimensions randomly-chosen vectors are 
approximately orthogonal, and the landscape of the objective function is robust 
to small perturbations, we can show that spurious local minima 
exist for \emph{nearly all} neural network problems as in \eqref{eq:objfun}, in 
high enough dimension 
(with respect to, say, a Gaussian distribution over $\bv_1,\ldots,\bv_k$). 
Moreover, we show experimentally that these 
local minima are not pathological, and that standard gradient descent can 
easily get trapped in them, with a probability which seems to increase towards 
$1$ with the network size.

Our proof technique is a bit unorthodox. Although it is possible to write down 
the gradient of \eqref{eq:objfun} in closed form (without the expectation), 
it is not clear how to get analytical expressions for its 
roots, and hence characterize the stationary points of \eqref{eq:objfun}. As 
far as we know, an analytical expression for the roots might not even exist. 
Instead, we employed the following strategy: We ran standard gradient descent 
with random initialization on the objective function, until we reached a point 
which is both suboptimal (function value being significantly higher than $0$); 
approximate stationary (gradient norm very close to $0$); and with a strictly 
positive definite Hessian (with minimal eigenvalue significantly larger than 
$0$). We use a computer to verify these conditions in a formal manner,  
avoiding floating-point arithmetic and the possibility of 
rounding errors. Relying on these numbers, we employ a Taylor expansion 
argument, to show that we must have arrived at a point very close to a local 
(non-global) minimum of \eqref{eq:objfun}, hence establishing the existence of 
such minima. 

On the more positive side, we show that an additional 
\emph{over-parameterization} 
assumption appears to be very effective in mitigating these local minima 
issues: Namely, we use a network larger than that needed with unbounded computational power, and replace \eqref{eq:objfun} with
\begin{equation}\label{eq:objfunover}
\min_{\bw_1,\ldots,\bw_k}\E_{\bx\sim 
	\Ncal(\mathbf{0},I)}\left[\frac{1}{2}\left(\sum_{i=1}^{n}[\bw_i^\top\bx]_+
-\sum_{i=1}^{k}[\bv_i^\top\bx]_+\right)^2~\right]~,
\end{equation}
where $n>k$. In our experiments with $k,n$ up to size $20$, we observe that 
whereas $n=k$ leads to plenty of local minima, $n=k+1$ leads to much fewer 
local minima, whereas no local minima were encountered once $n\geq k+2$ 
(although those might still 
exist for larger values of $k,n$ than those we tried). Thus, although 
\eqref{eq:objfun} has local minima, we conjecture that \eqref{eq:objfunover} 
might still be proven to have no bad local minima, but this would 
\emph{necessarily} require $n$ to be sufficiently larger than $k$. 

The paper is structured as follows: After surveying related work below, we 
provide our main results and proof ideas in \secref{sec:main}. 
\secref{sec:experiments} provide additional experimental details about the 
local minima found, as well empirical evidence about the likelihood of reaching 
them using gradient descent. Detailed proofs are in \secref{sec:proofs}. 

\subsection{Related Work}

There is a large and rapidly increasing literature on the optimization theory  
of neural networks, surveying all of which is well outside our scope. Thus, in 
this subsection, we only briefly survey the works most relevant to ours.

We begin by noting that when minimizing the average loss over some arbitrary 
finite dataset, it is easy to construct problems where even for a single 
neuron ($k=1$ in \eqref{eq:objfun}), there are many spurious local minima 
(e.g., \citep{auer1996exponentially,swirszcz2016local}). Moreover, the 
probability of starting at a basin of such local minima is 
exponentially high in the dimension \citep{safran2016quality}. On the other 
hand, it is known that if the network is over-parameterized, and large enough 
compared to the data size, then there are no local minima 
\citep{poston1991local,livni2014computational,haeffele2015global,zhang2016understanding,soudry2016no,soltanolkotabi2017theoretical,nguyen2017loss,boob2017theoretical}.
In any case, neither these positive nor negative results apply here, as we are 
interested in the expected (population) loss 
with respect to the Gaussian distribution, which is of course non-discrete. 
Also, several recent works have studied learning neural networks under a 
Gaussian distribution assumption 
(e.g., 
\citet{janzamin2015beating,brutzkus2017globally,du2017gradient,li2017convergence,feizi2017porcupine,zhang2017electron,ge2017learning}),
but using a network architecture different than ours, or focusing on 
 algorithms rather than the geometry of the optimization problem. Finally, 
 \citet{shamir2016distribution} provide hardness results for training neural 
 networks even under distributional assumptions, but these do not apply when 
 making strong assumptions on \emph{both} the input distribution and the 
 network generating the data, as we do here.

For \eqref{eq:objfun}, a few works have shown that there are no spurious 
local minima, or that gradient descent will succeed in reaching a global 
minimum, provided the $\bv_i$ vectors are in general position or orthogonal 
\citep{zhong2017recovery,soltanolkotabi2017theoretical,tian2017analytical}. 
However, these results either apply only to $k=1$, assume the algorithm is 
initialized close to a global optimum, or analyze the geometry of the problem 
only on some restricted subset of the parameter space.

The empirical observation that gradient-based methods may not work well on 
\eqref{eq:objfun} has been made in \citet{livni2014computational}, and more 
recently in \citet{ge2017learning}. Moreover,  \citet{livni2014computational} 
empirically observed that over-parameterization seems to help. However, our 
focus here is to \emph{prove} the existence of such local minima, as well as 
more precisely quantify their behavior as a function of the network sizes.

\section{Main Result and Proof Technique}\label{sec:main}

Before we begin, a small note on terminology: When referring to local minima of 
a function $F$ on Euclidean space, we always mean spurious local minima 
(i.e., points $\bw$ such that $\inf_{\bw}F(\bw)<F(\bw)\leq F(\bw')$ for all 
$\bw'$ in some open neighborhood of $\bw$). 

Our basic result is the following:
\begin{theorem}\label{thm:main}
	Consider the optimization problem 
	\[
		\min_{\bw_1,\ldots,\bw_n\in \reals^{k}} \E_{\bx\sim 
		\Ncal(\mathbf{0},I)}\left[\frac{1}{2}\left(\sum_{i=1}^{n}[\bw_i^\top\bx]_+
		-\sum_{i=1}^{k}[\bv_i^\top\bx]_+\right)^2~\right]~,	
	\]
	where $\bv_1,\ldots,\bv_{k}$ are orthogonal unit vectors in $\reals^{k}$. 
	Then for $n=k\in\{6,7,\ldots,20\}$, as well 
	as $(k,n)\in\{\p{8,9}, \p{10,11}, \p{11,12}, \dots, \p{19,20}\}$, the objective 
	function above has spurious local minima.
\end{theorem}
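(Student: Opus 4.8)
The plan is to give a computer-assisted proof: for each pair $(k,n)$ listed in the statement I would exhibit a single explicit point $\weights=(\bw_1,\dots,\bw_n)$ and certify, by rigorous (non-floating-point) computation, that a genuine spurious local minimum lies in a small Euclidean ball around it. The first ingredient is a closed form for the objective: by the standard arc-cosine identity
\[
\E_{\bx\sim\Ncal(\mathbf 0,I)}\pcc{\relu{\bu^\top\bx}\relu{\bv^\top\bx}}=\frac{\norm{\bu}\,\norm{\bv}}{2\pi}\p{\sin\theta+(\pi-\theta)\cos\theta},\quad\text{where }\theta=\arccos\tfrac{\bu^\top\bv}{\norm{\bu}\,\norm{\bv}},
\]
expanding the square in the objective and taking expectations term by term writes $F(\weights)$ explicitly as a finite sum of such elementary functions of the norms $\norm{\bw_i}$, the inner products $\bw_i^\top\bw_j$ and $\bw_i^\top\bv_j$, and the fixed quantities $\bv_i^\top\bv_j=\delta_{ij}$. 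In particular $F$ is real-analytic wherever all $\bw_i\neq\mathbf 0$, is $C^2$ even across configurations where two vectors become parallel (since $\theta\mapsto\sin\theta+(\pi-\theta)\cos\theta$ is $C^2$ on $[0,\pi]$), and attains its minimum value $0$ exactly when the $\bw_i$ equal the $\bv_i$ up to a permutation.

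Next I would differentiate this closed form to obtain explicit formulas for $\nabla F$ and $\nabla^2 F$, together with an explicit, computable upper bound $M=M(\weights,r)$ on a Lipschitz constant of $\nabla^2 F$ over the ball $B(\weights,r)$, valid as long as that ball stays away from the singular set $\set{\exists i:\bw_i=\mathbf 0}$. Then, numerically, I would run plain gradient descent on $F$ from random initializations --- for $n=k\in\set{6,\dots,20}$ and for the over-parameterized pairs $(8,9),(10,11),\dots,(19,20)$ --- until reaching a point that looks suboptimal ($F$ well above $0$), near-stationary ($\norm{\nabla F}$ tiny), and with a numerically positive-definite Hessian, and round it to a point $\weights$ with rational coordinates.

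The heart of the proof is then a two-part rigorous argument at $\weights$. First, using exact rational arithmetic together with certified rational enclosures of the transcendental quantities ($\sqrt{\cdot}$, $\arccos$, and their first and second derivatives) appearing in $F$, $\nabla F$, $\nabla^2 F$, I would verify three explicit inequalities: $F(\weights)\ge\alpha>0$; $\norm{\nabla F(\weights)}\le\epsilon$; and $\lambda_{\min}\p{\nabla^2F(\weights)}\ge\lambda>0$ --- the last, e.g., via a certified Cholesky factorization of $\nabla^2F(\weights)-\lambda I$, or by combining a floating-point eigenvalue estimate with a rigorous bound on its perturbation (exploiting any symmetry of the candidate, such as repeated $\bw_i$, to shrink the effective dimension $nk$). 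Second, a quantitative inverse-function / Newton--Kantorovich argument upgrades this to a true minimum: if $2\epsilon/\lambda<r$ and $M\cdot(2\epsilon/\lambda)<\lambda$, then Newton's iteration for $\nabla F=\mathbf 0$ started at $\weights$ stays in $B(\weights,r)$ and converges to a stationary point $\weightsstar$ with $\norm{\weightsstar-\weights}\le 2\epsilon/\lambda$; on that ball $\nabla^2F$ remains positive definite (smallest eigenvalue $\ge\lambda-M\cdot 2\epsilon/\lambda>0$), so $\weightsstar$ is a strict local minimum, and $F(\weightsstar)\ge\alpha-(\sup_{B}\norm{\nabla F})\cdot 2\epsilon/\lambda>0$ with a suitable margin, hence $\weightsstar$ is spurious; since $\inf F=0$, running this for each listed $(k,n)$ proves the theorem.

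I expect the main obstacle to be making the certification genuinely rigorous with whatever numbers gradient descent hands back: one needs certified enclosures of the transcendental terms and of their first and second derivatives, a certified lower bound on $\lambda_{\min}$ of a Hessian whose size grows like $nk$ (up to $400$ for $k=n=20$), and a Hessian-Lipschitz bound $M$ clean enough that the margin condition $\epsilon\ll\lambda^2/M$ really holds --- and controlling $M$ on a ball that approaches, but must avoid, the singular set $\set{\bw_i=\mathbf 0}$ is the piece most likely to require care.
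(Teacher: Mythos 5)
Your proposal follows the same overall computer-assisted strategy as the paper: write $F$ in closed form via the Cho--Saul arc-cosine kernel identity, run gradient descent to a candidate near-stationary point $\weights$, rigorously (in error-controlled arithmetic) certify three quantities -- an upper bound $\epsilon$ on $\norm{\nabla F(\weights)}$, a positive lower bound $\lambda$ on $\lambda_{\min}(\nabla^2 F(\weights))$, and a suboptimality bound $F(\weights)\ge\alpha>0$ -- plus a Lipschitz constant $M$ for $\nabla^2 F$ on a small ball, and conclude that a strict spurious local minimum lies within $O(\epsilon/\lambda)$ of $\weights$. Where you genuinely diverge from the paper is in the key lemma that upgrades an approximate to an exact local minimum. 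You invoke the Newton--Kantorovich theorem applied to $\nabla F$: under $M\cdot 2\epsilon/\lambda<\lambda$ (i.e.\ $M\epsilon<\lambda^2/2$), Newton's iteration converges to a root $\weightsstar$ of $\nabla F$ within distance $2\epsilon/\lambda$, and Weyl plus the Hessian-Lipschitz bound keeps $\nabla^2F\succ 0$ on that ball, so $\weightsstar$ is a strict local minimum. The paper instead uses a direct second-order Taylor expansion of $F$ itself with a Lagrange remainder (its Lemma~\ref{lem:key}): it shows $F(\weights+t\bu)\ge F(\weights)+t(\tfrac{\lambda_{\min}}{2}t-\tfrac{B}{6}t^2-\epsilon)$ for all unit $\bu$, deduces that $F$ on a small sphere around $\weights$ strictly exceeds $F(\weights)$, and concludes a minimizer exists in the interior without ever locating a stationary point. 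Both arguments need exactly the same certified inputs ($\epsilon,\lambda,M$ or the remainder bound $B$), and both yield a localization radius of the same order ($2\epsilon/\lambda$ for Kantorovich vs.\ roughly $\tfrac{25\epsilon}{12\lambda}$ from the paper's quadratic-root formula), so the two routes are essentially interchangeable in practice; the paper's version has the minor advantage of not invoking an iterative scheme. Your handling of non-globality ($F(\weightsstar)\ge\alpha-\sup_B\norm{\nabla F}\cdot 2\epsilon/\lambda>0$) mirrors the paper's Lemma~\ref{lem:key2} Lipschitz bound on $F$.

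One small inaccuracy worth flagging, though it is not load-bearing for your argument: you assert that $F$ is $C^2$ across configurations where two weight vectors become parallel because $\theta\mapsto\sin\theta+(\pi-\theta)\cos\theta$ is $C^2$ on $[0,\pi]$. That reasoning is incomplete -- smoothness of the outer function does not give smoothness of the composition with $\theta_{\bw,\bv}=\arccos(\bar\bw^\top\bar\bv)$ at the endpoints of $[0,\pi]$. Expanding $f(\theta)=\pi+\tfrac{\pi}{2}\theta^2-\tfrac{1}{3}\theta^3+O(\theta^4)$ and substituting $\theta^2\approx 2(1-c)$ with $c=\bar\bw^\top\bar\bv$, the $\theta^3\sim(1-c)^{3/2}$ term is $C^1$ but not $C^2$ in $c$ at $c=1$. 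The paper accordingly treats parallel (and zero) neurons as genuine singular points and explicitly verifies that the certified enclosing balls avoid them; your proof would need the same check, which you do implicitly acknowledge at the end when you say the ball must avoid the singular set, so this is an error in a remark rather than in the proof mechanism.
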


\begin{remark} For $k,n$ smaller than $6$, we were unable to find local minima 
using our proof technique, since gradient descent always seemed to converge to 
a global minimum. Also, although we have verified the theorem only up 
to $k,n\leq 20$, the result strongly suggests that there are local minima for 
larger values as well. See \secref{sec:experiments} for some examples of the 
local minima found. 
\end{remark}

The theorem assumes a fixed input dimension, and a particular choice of 
$\bv_1,\ldots,\bv_k$. However, these assumptions are not necessary and can be 
relaxed, as demonstrated by the following corollary:

\begin{corollary}\label{cor:main}
	\thmref{thm:main} also applies if the space $\reals^{k}$ is replaced by 
	$\reals^{d}$ for any $d>k$ (with $\bx$ distributed as a standard Gaussian 
	in that space). Moreover, if $\bv_1,\ldots,\bv_k$ are 
	chosen i.i.d. from a Gaussian distribution $\Ncal(\mathbf{0},cI)$ (for any 
	$c>0$), the theorem still 
	holds with probability at least $1-\exp(-\Omega(d))$. 
\end{corollary}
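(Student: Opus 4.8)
The plan is to reduce both statements to \thmref{thm:main}, using that the objective depends on the weight and target vectors only through their norms and pairwise angles — via the arc-cosine identity $\E_{\bx\sim\Ncal(\mathbf{0},I)}[[\bu^\top\bx]_+[\bw^\top\bx]_+]=\frac{\|\bu\|\|\bw\|}{2\pi}(\sin\theta+(\pi-\theta)\cos\theta)$ with $\theta$ the angle between $\bu,\bw$ — so the landscape is invariant under orthogonal changes of coordinates and smooth away from coincidences of directions. For the first claim, fix $d>k$, identify $\reals^k$ with the first $k$ coordinates of $\reals^d$, take $\bv_1,\dots,\bv_k$ to be the standard basis of that subspace, and zero-pad the approximately-stationary point $\bw^\star\in(\reals^k)^n$ from the proof of \thmref{thm:main} (suboptimality $\ge\epsilon_0$, gradient norm $\le\delta_0$, Hessian $\succeq\lambda_0 I$, locally Lipschitz Hessian) into $(\reals^d)^n$. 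Writing $\bx=(\bx^{(1)},\bx^{(2)})$ with independent standard Gaussian blocks of dimensions $k$ and $d-k$: at $\bw^\star$ both networks depend on $\bx^{(1)}$ only, so the $d$-dimensional objective value, and the $\reals^k$-part of its gradient, coincide with the $k$-dimensional ones, while the remaining gradient components vanish ($\E\bx^{(2)}=\mathbf{0}$, independence). Likewise the Hessian at $\bw^\star$ is block-structured: the $\reals^k$-block is the old Hessian ($\succeq\lambda_0 I$); the block coupling $\reals^k$- to extra-coordinate directions vanishes since the objective is even in $\bx^{(2)}$, hence in extra-coordinate perturbations of the weights; and a short Taylor expansion of the arc-cosine formula shows the extra-coordinate block acts on each new coordinate through one fixed matrix $\Gamma\in\reals^{n\times n}$, whose off-diagonal entries $\frac{\pi-\theta_{ii'}}{4\pi}$ agree with those of half the Gram matrix of the indicators $\bx\mapsto\mathbf{1}[\bw_i^{\star\top}\bx>0]$ (the orthant-probability identity $\Pr[\bu^\top\bx>0,\bw^\top\bx>0]=\frac{\pi-\theta}{2\pi}$, hence positive semidefinite), and whose diagonal is determined by the norms and angles at $\bw^\star$, pinned down by the radial stationarity relations $\bw_i^{\star\top}\nabla_{\bw_i}F=0$. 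Checking $\Gamma\succ0$ (either from these two facts, or as one extra machine-verified inequality on the already-certified numbers) makes the full Hessian positive definite with a positive lower bound on its spectrum, so $\bw^\star$ viewed in $(\reals^d)^n$ meets all the hypotheses of \thmref{thm:main}, and its Taylor-expansion argument, now run in $\reals^{d\times n}$, yields a spurious local minimum.

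For the random-target claim, let $\bv_1,\dots,\bv_k\sim\Ncal(\mathbf{0},cI)$ i.i.d.\ in $\reals^d$. Rescaling all $\bw_i$ by $\sqrt{cd}$ multiplies the objective by $cd$ and turns the targets into $\bv_i/\sqrt{cd}$, so we may assume $\E\|\bv_i\|^2=1$. Standard Gaussian concentration gives, with probability $1-\exp(-\Omega(d))$ (union bound over $O(k^2)$ events, $k$ fixed), $|\|\bv_i\|^2-1|=o(1)$ and $|\bv_i^\top\bv_{i'}|=o(1)$ for all $i\ne i'$, so the $\bv_i$ are $o(1)$-close to an orthonormal system $\tilde\bv_1,\dots,\tilde\bv_k$ (e.g.\ their Gram–Schmidt orthonormalization). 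Since the objective sees the targets only through the kernel values $\E[[\bw_i^\top\bx]_+[\bv_j^\top\bx]_+]$ and $\E[[\bv_j^\top\bx]_+[\bv_{j'}^\top\bx]_+]$, which are locally Lipschitz in the $\bv$'s uniformly over the relevant compact neighborhood of $\bw^\star$ (away from direction coincidences), the objective with targets $\bv_i$ is $C^2$-close there to the one with targets $\tilde\bv_i$, with closeness $\to0$ as $d\to\infty$. By the first claim the latter has a point $\bw^\star$ meeting the robust certificate of \thmref{thm:main} — suboptimal value, gradient $\ll1$, Hessian $\succeq\lambda_0 I$, locally Lipschitz Hessian — an open condition stable under a sufficiently small $C^2$-perturbation; hence for $d$ large the same holds with targets $\bv_i$, producing a spurious local minimum. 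Absorbing the finitely many small $d$ (for which the probability bound is vacuous) into $\exp(-\Omega(d))$ completes the argument.

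The one non-routine step is the first part's requirement that the zero-padded stationary point be \emph{strictly} positive-definite in the \emph{full} space $\reals^{d\times n}$, equivalently that $\Gamma$ be positive definite: \thmref{thm:main} certifies definiteness only within $\reals^k$, and the extra directions are genuinely new information, so $\Gamma\succ0$ must be obtained either structurally (from the stationarity conditions together with positive-semidefiniteness of the orthant kernel) or as an additional formally verified inequality. The concentration, kernel-Lipschitzness, and certificate-robustness steps are routine.
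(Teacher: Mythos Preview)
Your proposal is correct and follows essentially the same route as the paper: zero-pad for the dimension extension, identify the block structure of the Hessian so that the only new content is positive-definiteness of an $n\times n$ matrix (your $\Gamma$, the paper's $M$) governing the extra coordinates, and for the random-target part use Gaussian concentration to reduce to a small perturbation of the orthonormal case.

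Two differences in execution are worth noting. For the first part, the paper does not attempt the structural argument for $\Gamma\succ 0$ you sketch (via radial stationarity and the orthant-probability kernel); it simply runs the numerical certificate once in dimension $k+1$, so that the spectrum of $M$ is already included in the verified Hessian lower bound, and then observes (your block structure, their \lemref{lem:same_spectrum}) that the spectrum for any $d>k$ is exactly that of $d=k+1$. Your structural route, as you acknowledge, is not complete as stated: the diagonal of $\Gamma$ contains a \emph{subtracted} term $\sum_l \sin(\theta_{\bw_i,\bv_l})\|\bv_l\|/(2\pi\|\bw_i\|)$, and the radial stationarity relation ties this to $f$-values rather than directly to the orthant-kernel Gram entries, so positive-definiteness does not fall out without further work. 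The paper's ``verify once in $k{+}1$'' sidesteps this cleanly.

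For the second part, the paper takes a slightly simpler path than your $C^2$-stability of the full Taylor certificate. Having already established (from the certificate) a sphere $S$ with $F(\weightsstar)<\min_S F-\epsilon_0$ and $\min_{\bar B} F>\inf F+\epsilon_0$, it only needs $C^0$-closeness of the perturbed and unperturbed objectives on the compact ball $\bar B$: if $\sup_{\bar B}|\tilde F-F|<\epsilon_0/3$, the same two strict inequalities persist for $\tilde F$, forcing a non-global local minimum inside. Your $C^2$ argument also works, but the paper's version avoids tracking perturbations of gradients and Hessians.
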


\begin{remark}
The corollary is not specific to a Gaussian distribution over 
$\bv_1,\ldots,\bv_k$, and can be generalized to any distribution for which 
$\bv_1,\ldots,\bv_k$ are approximately orthogonal and of the same norm in high 
dimensions (see below for details).
\end{remark}

We now turn to explain how these results are derived, starting with 
\thmref{thm:main}. In what follows, we let 
$\weights=(\bw_1,\ldots,\bw_n)\in \reals^{kn}$ be the vector of parameters, and let 
$F(\weights)$ be the 
objective function defined in \thmref{thm:main} (assuming $k,n$ are fixed). 
We will also assume that $F$ is thrice-differentiable in a neighborhood of $ \weights $ (which will be shown to be true as part of our proofs), with a gradient $\nabla F(\cdot)$ and a Hessian $\nabla^2 F(\cdot)$.

Clearly, a global minimum of $F$ is obtained by $\bw_i=\bv_i$ for all 
$i=1,\ldots,k$ (and $\bw_i=\mathbf{0}$ otherwise), in which case $F$ attains a 
global minimum of $0$. Thus, to prove \thmref{thm:main}, it 
is sufficient to find a point $\weights\in\reals^{kn}$ such that $\nabla F(\weights)=0$,  
$\nabla^2 F(\weights)\succeq 0$, and $F(\weights)>0$. The major difficulty is showing the 
existence of points where the first condition is fulfilled: Gradient descent 
allows us to find points where $\nabla F(\weights)\approx 0$, but it is very 
unlikely to return a point where $\nabla F(\weights)=0$ exactly. Instead, we use a 
Taylor-expansion argument (detailed below), to show that if we found a point 
such that $\nabla F(\weights)$ is sufficiently close to $0$, as well as $\nabla^2 
F(\weights)\succ 0$ and $F(\weights)>0$, then $\weights$ must 
be close to a local minimum. 

The second-order Taylor expansion of a multivariate, thrice-differentiable 
function $ F $ about a 
point $\weights\in \reals^{kn}$, in a direction given by a unit vector $\bu\in 
\reals^{kn}$ and using a Lagrange remainder term, is given by
\begin{align*}
	F\p{\weights+t\bu}=F\p{\weights} +& t\sum_{i_1}\frac{\partial}{\partial 
	\bw_{1,i_1}^n}F\p{\weights}u_{i_1}+ 
	\frac{1}{2}t^2\sum_{i_1,i_2}\frac{\partial^2}{\partial \bw_{1,i_1}^n\partial 
	\bw_{1,i_2}^n}F\p{\weights}u_{i_1}u_{i_2}\\ +& 
	\frac{1}{6}t^3\sum_{i_1,i_2,i_3}\frac{\partial^3}{\partial \bw_{1,i_1}^n\partial 
	\bw_{1,i_2}^n\partial \bw_{1,i_3}^n}F\p{\weights+\xi\bu}u_{i_1}u_{i_2}u_{i_3},
\end{align*}

for some $ \xi\in\p{0,t} $, and where $ \bw_{1,i}^n $ denotes the $ i $-th coordinate of $ \weights $.
Denoting the remainder term as $ R_{\weights,\bu,t} $, we have
\begin{equation}\label{eq:taylor}
	F\p{\weights+t\bu} = F\p{\weights} + t\nabla F\p{\weights}^{\top}\bu + 
	\frac{1}{2}t^2\bu^{\top}\nabla^2F\p{\weights}\bu + \frac{1}{6}t^3R_{\weights,\bu,t}~.
\end{equation}

Now, suppose that the point $\weights$ we obtain by gradient descent satisfies 
$\norm{\nabla F(\weights)}\leq \epsilon$, $\nabla^2F(\weights)\succeq \lambda_{\min}\cdot I$ and 
$|R_{\weights,\bu,t}|\leq B_t$ (for some positive $\lambda_{\min},\epsilon,B_t$), uniformly 
for all unit vectors $\bu$. Fix some $ \alpha>0 $ and let $ B=\sup_{t\in[0,\alpha]}B_t $. By the Taylor 
expansion above, this implies that for any $ t\in[0,\alpha] $ and all unit $\bu$,
\begin{align*}
F(\weights+t\bu)~&\geq~ F(\weights)-t\norm{\nabla
F(\weights)}\cdot\norm{\bu}+\frac{t^2}{2}\lambda_{\min}\norm{\bu}^2-\frac{t^3}{6}B\notag\\
&=~F(\weights)-\epsilon t+\frac{\lambda_{\min}t^2}{2}-\frac{Bt^3}{6}\notag\\
&=~F(\weights)+t\left(\frac{\lambda_{\min}}{2}t-\frac{B}{6}t^2-\epsilon\right).~
\end{align*}
An elementary calculation reveals that the term 
$t\left(\frac{\lambda_{\min}}{2}t-\frac{B}{6}t^2-\epsilon\right)$ is strictly 
positive for any 
\[
t~\in~ 
\left(\frac{3\lambda_{\min}-\sqrt{9\lambda_{\min}^2-24B\epsilon}}{2B}~,~
\frac{3\lambda_{\min}+\sqrt{9\lambda_{\min}^2-24B\epsilon}}{2B}\right)~,
\]
(and in particular, in the closed interval of 
$\frac{3\lambda_{\min}\pm\sqrt{9\lambda_{\min}^2-25B\epsilon}}{2B}$). Letting $ r\coloneqq\frac{3\lambda_{\min}-\sqrt{9\lambda_{\min}^2-25B\epsilon}}{2B} $, and assuming $ r<\alpha $, we get that there is some small closed ball $\bar{B}_r$ of radius $r$ centered 
at $\weights$ (and with boundary $S$), such that
\[
F(\weights) < \min_{\weightsprime\in S} F(\weightsprime).
\]
Moreover, since $F$ is continuous, it is minimized over $\bar{B}_r$ at some 
point $\weightsstar$. But then
\begin{equation}\label{eq:Scond}
F(\weightsstar)=\min_{\weightsprime\in \bar{B}_r}F(\weightsprime)\leq F(\weights) < \min_{\weightsprime\in S} F(\weightsprime),
\end{equation}
so $\weightsstar$ must reside in the interior of $\bar{B}_r$. Thus, it is minimal in an open 
neighborhood containing it, hence it is a local 
minimum. Overall, we have arrived at the following key lemma:

\begin{lemma}\label{lem:key}
	Assume that for some $\epsilon,B,\alpha>0$, it holds that $\norm{\nabla F\p{\weights}}\leq \epsilon$ and $$ \sup_{\substack{t\in [0,\alpha]\\ \bu:\norm{\bu}=1}} \abs{R_{\weights,\bu,t}}\leq B, $$ let $ \lambda_{\min}>0 $ denote the smallest 
	eigenvalue of $ \nabla^2F\p{\weights} $, and let
	\[
		r\coloneqq 
		\frac{3\lambda_{\min}-\sqrt{9\lambda_{\min}^2-25B\epsilon}}{2B}.
	\]
	If $ 9\lambda_{\min}^2- 25B\epsilon\ge 0 $ and $ r<\alpha $ then the function $F$ contains a local minimum, within a distance of at most	$ r $ from $ \weights $. 
\end{lemma}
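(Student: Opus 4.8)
The plan is to read the statement directly off the second-order Taylor expansion with Lagrange remainder in \eqref{eq:taylor}, which is available because $F$ is assumed thrice-differentiable in a neighborhood of $\weights$. First I would substitute the three hypotheses into \eqref{eq:taylor}: bound the linear term by Cauchy--Schwarz using $\norm{\nabla F(\weights)}\le\epsilon$, bound the quadratic term from below using $\nabla^2F(\weights)\succeq\lambda_{\min} I$, and bound the cubic term by $\abs{R_{\weights,\bu,t}}\le B$ valid for $t\in[0,\alpha]$. This yields, for every unit $\bu$ and every $t\in[0,\alpha]$,
\[
F(\weights+t\bu)\ \ge\ F(\weights)+t\left(\frac{\lambda_{\min}}{2}t-\frac{B}{6}t^2-\epsilon\right).
\]
The point to stress is that this lower bound is uniform in the direction $\bu$, so it controls $F$ simultaneously on the whole sphere of radius $t$ around $\weights$.

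The second step is the elementary single-variable analysis of when the correction term $g(t):=t\big(\tfrac{\lambda_{\min}}{2}t-\tfrac{B}{6}t^2-\epsilon\big)$ is strictly positive. For $t>0$ this reduces to the quadratic inequality $\tfrac{\lambda_{\min}}{2}t-\tfrac{B}{6}t^2-\epsilon>0$, whose roots are $\tfrac{3\lambda_{\min}\pm\sqrt{9\lambda_{\min}^2-24B\epsilon}}{2B}$, real precisely when $9\lambda_{\min}^2\ge 24B\epsilon$ --- which is implied by the assumed condition $9\lambda_{\min}^2\ge 25B\epsilon$. The reason to state the lemma with the constant $25$ rather than $24$ is to build in slack: one checks that the closed interval with endpoints $\tfrac{3\lambda_{\min}\pm\sqrt{9\lambda_{\min}^2-25B\epsilon}}{2B}$ is strictly contained in the open interval of strict positivity (its endpoints lie strictly inside the $24$-interval whenever $B\epsilon>0$). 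In particular $r$, the left endpoint of the $25$-interval, satisfies $g(r)>0$, and we may set $\delta:=g(r)>0$.

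Finally I would assemble the geometric conclusion. Assuming $r<\alpha$, the closed ball $\bar B_r$ of radius $r$ centered at $\weights$ lies inside the neighborhood on which the expansion and the remainder bound hold, and every point of its boundary sphere $S$ is of the form $\weights+r\bu$ with $\norm{\bu}=1$, so the displayed inequality gives $F\ge F(\weights)+\delta>F(\weights)$ on all of $S$. Since $\bar B_r$ is compact and $F$ is continuous, $F$ attains a minimum over $\bar B_r$ at some $\weightsstar$; the chain $F(\weightsstar)=\min_{\bar B_r}F\le F(\weights)<\min_S F$ forces $\weightsstar\notin S$, hence $\weightsstar$ lies in the interior of $\bar B_r$, so it minimizes $F$ on an open neighborhood and is a local minimum at distance $\norm{\weightsstar-\weights}\le r$ from $\weights$.

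The only genuinely delicate point is the bookkeeping with the $25$-versus-$24$ constants: one must pick the radius $r$ so that \emph{strict} positivity (not merely non-negativity) of the Taylor lower bound holds throughout $S$, since that strictness is exactly what rules out $\weightsstar$ sitting on the boundary and thus what makes it a true local minimum. Everything else is either a direct substitution into \eqref{eq:taylor} or a routine compactness-plus-continuity argument.
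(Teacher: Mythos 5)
Your proof is correct and follows essentially the same route as the paper's: substitute the three hypotheses into the Taylor expansion, solve the resulting univariate quadratic inequality (with the constant $25$ replacing $24$ precisely to guarantee strict positivity on a closed subinterval), and invoke compactness of $\bar B_r$ together with continuity of $F$ to locate an interior minimizer. Your explicit remark that the closed $25$-interval lies strictly inside the open $24$-interval whenever $B\epsilon>0$ spells out a point the paper leaves terse, but it is the same argument, not a different one.
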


The only missing element is that the local minimum might be a global minimum. 
To rule this out, one can simply use the fact that $F$ is a Lipschitz function, 
so that if $F(\weights)$ is much larger than $0$, the neighboring local minimum can't 
have a value of $0$, and hence cannot be global:
\begin{lemma}\label{lem:key2}
	Under the conditions of \lemref{lem:key}, if it also holds that
	\begin{equation}\label{eq:non_global_cond}
		F\p{\weights} > r^2\p{\frac{1}{2} +n\p{n-1} \p{\frac{\p{\max_i 
		\norm{\bw_{i}}+r}}{2\pi\p{\min_i \norm{\bw_i}-r}}+\frac{1}{2}} + 
		\frac{nk \cdot \max_i \norm{\bv_i}}{2\pi \p{\min_i \norm{\bw_i}-r}}} + 
		r\epsilon,
	\end{equation}
	then the local minimum is non-global.
\end{lemma}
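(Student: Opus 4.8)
The plan is to show that the local minimum $\weightsstar$ furnished by \lemref{lem:key} --- which lies at distance at most $r$ from $\weights$ --- satisfies $F(\weightsstar)>0$; since $\inf F=0$ (attained at $\bw_i=\bv_i$ for $i\le k$ and $\bw_i=\mathbf{0}$ otherwise, as noted earlier in this section), this is precisely what ``non-global'' means here. Observe first that the statement implicitly requires $\min_i\norm{\bw_i}>r$ --- otherwise the quantities $\min_i\norm{\bw_i}-r$ appearing in \eqref{eq:non_global_cond} (and in $r$ itself) are not well defined --- and this guarantees that no parameter vector vanishes anywhere on the closed ball $\bar B_r(\weights)$, so that $F$ is twice continuously differentiable on this convex set (this uses that $g$, defined below, is $C^2$ everywhere except where one of its arguments vanishes --- in particular it is $C^2$ at parallel configurations). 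Hence $\nabla F$ is $M$-Lipschitz on $\bar B_r(\weights)$ for $M:=\sup_{\weightsprime\in\bar B_r(\weights)}\spnorm{\nabla^2 F(\weightsprime)}$, so $\norm{\nabla F(\weightsprime)}\le\norm{\nabla F(\weights)}+Mr\le\epsilon+Mr$ for every $\weightsprime\in\bar B_r(\weights)$, i.e.\ $F$ is itself $(\epsilon+Mr)$-Lipschitz there. Combined with $\norm{\weightsstar-\weights}\le r$ this yields
\[ F(\weightsstar)\ \ge\ F(\weights)-(\epsilon+Mr)\,r\ =\ F(\weights)-r\epsilon-Mr^2, \]
so it suffices to bound $M$ by the coefficient of $r^2$ in \eqref{eq:non_global_cond}; the hypothesis then gives $F(\weightsstar)>0$.

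The substantive step is the explicit bound on $M$. For this I would use the classical closed form $\E_{\bx\sim\Ncal(\mathbf{0},I)}\pcc{\relu{\bu^\top\bx}\relu{\bv^\top\bx}}=\frac{1}{2\pi}\p{\norm{\bu}\norm{\bv}\sin\theta+(\pi-\theta)\,\bu^\top\bv}=:g(\bu,\bv)$, with $\theta\in[0,\pi]$ the angle between $\bu$ and $\bv$, which recasts the objective as $F(\weights)=\tfrac14\sum_{i=1}^n\norm{\bw_i}^2+\tfrac12\sum_{i\ne j}g(\bw_i,\bw_j)-\sum_{i=1}^n\sum_{j=1}^k g(\bw_i,\bv_j)+\mathrm{const}$. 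Differentiating $g$ twice --- conveniently, in a frame in which $\bu$ and $\bv$ span a coordinate plane --- one gets uniform operator-norm estimates of the form $\spnorm{\nabla^2_{\bu}g(\bu,\bv)}\le\tfrac{\norm{\bv}}{\pi\norm{\bu}}$ for the Hessian in the first argument and $\spnorm{\nabla_{\bu}\nabla_{\bv}g(\bu,\bv)}\le\tfrac12$ for the mixed second derivative; these are finite and even continuous as $\bu,\bv$ become parallel (the seemingly singular terms in $\nabla^2 g$ cancel), so the second derivatives of $g$ misbehave only where an argument is $\mathbf{0}$. Now assemble $\nabla^2 F$ block by block: the $\tfrac14\norm{\bw_i}^2$ terms contribute $\tfrac12 I$; each of the $n(n-1)$ terms $\tfrac12 g(\bw_i,\bw_j)$ ($i\ne j$) contributes a Hessian supported on blocks $i,j$; each of the $nk$ terms $g(\bw_i,\bv_j)$ contributes one block $\nabla^2_{\bu}g$. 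Substituting on $\bar B_r(\weights)$ the crude inclusions $\norm{\bw_i'}\in[\min_i\norm{\bw_i}-r,\ \max_i\norm{\bw_i}+r]$ and adding up the operator norms of the pieces (or, slightly more carefully, bounding the quadratic form $\bz^\top\nabla^2 F(\weightsprime)\bz$ directly) bounds $M$ by the $r^2$-coefficient of \eqref{eq:non_global_cond}.

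Putting the two together, $F(\weightsstar)\ge F(\weights)-r\epsilon-Mr^2>0=\inf F$ by \eqref{eq:non_global_cond}, so $\weightsstar$ is a strictly suboptimal, hence non-global, local minimum. I expect the real obstacle to be the pair of second-derivative bounds on $g$: computing $\nabla^2 g$ in closed form and checking the operator-norm estimates uniformly in $\theta$, the delicate case being $\theta\to0$ (or $\theta\to\pi$), where the angle map $(\bu,\bv)\mapsto\theta$ is not even differentiable yet its contribution to $g$ remains $C^2$ (though not $C^3$). Once those are in hand, propagating the $r$-perturbation through the norms $\norm{\bw_i}$ and tallying the $n(n-1)+nk$ block contributions to reach the exact constant in \eqref{eq:non_global_cond} is routine bookkeeping.
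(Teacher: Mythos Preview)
Your approach is essentially the same as the paper's: bound $\spnorm{\nabla^2 F}$ uniformly on $\bar B_r(\weights)$ via the closed-form second derivatives of $g$ (the paper's $f$), deduce that $F$ cannot drop by more than $r(\epsilon+Mr)$ on the ball, and conclude $F(\weightsstar)>0$. The paper packages the Hessian bound as a separate lemma and the Lipschitz step as a separate theorem, but the argument---including the block-by-block assembly and the spectral-norm estimates on $\nabla^2_{\bu}g$ and the mixed block---is exactly the one you sketch.
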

The formal proof of this lemma appears in 
\subsecref{subsec:objective_lipschitzness}. 

Most of the technical proof of \thmref{thm:main} consists in rigorously 
verifying the conditions of \lemref{lem:key} and \lemref{lem:key2}. A major 
hurdle is that floating-point calculations are not guaranteed to be accurate 
(due to the 
possibility of round-off and other errors), so for a formal proof, one needs to 
use software that comes with guaranteed numerical accuracy. In our work, we 
chose to use variable precision arithmetic (VPA), a standard package of MATLAB 
which is based on symbolic arithmetic, and allows performing elementary 
numerical computations with an arbitrary 
number of guaranteed digits of precision. The 
main technical issue we faced is that some calculations are not easily done 
with a few elementary arithmetical operations (in particular, the standard way 
to compute $\lambda_{\min}$ would be via a spectral decomposition of the 
Hessian matrix). The bulk of the proof consists of showing how we bound the 
quantities relevant to \lemref{lem:key} in an elementary manner. 

Finally, we turn to discuss how Corollary \ref{cor:main} is proven, given 
\thmref{thm:main} (see \subsecref{subsec:cor_proof} for a more formal 
derivation). 
The proof idea is that the objective does not have any ``non-trivial'' 
structure outside the span of $\bv_1,\ldots,\bv_k$. Therefore, if we take a 
local minima for $\reals^{k}$, and pad it 
with $d-k$ zeros, we get a point in $\reals^d$ for which the gradient's norm is 
unchanged, the Hessian has the same spectrum for any $ d\ge k+1 $, and the third derivatives 
are still bounded. Hence, that point is a local minimum in the 
higher-dimensional problem as well. As to the second part of the corollary, the 
only property of the Gaussian distribution we need is that in high dimensions, 
if we sample $\bv_1,\ldots,\bv_k$, then we are overwhelmingly likely to get 
approximately orthogonal vectors with approximately the same norm. Hence, up to 
rotation and scaling, we get a small perturbation $\tilde{F}$ of the objective 
$F$ considered in \thmref{thm:main}. Moreover, for large enough $d$, we can 
make the perturbation arbitrarily small, uniformly in some compact domain. Now, 
recall that we prove the existence of some local minimum $\weightsstar$, by 
showing that $F(\weights)<\min_{\weightsprime\in S} F(\weightsprime)$ in some 
small sphere $S$ enclosing $\weights$. If the perturbations are small enough, 
we also have $\tilde{F}(\weights)<\min_{\weightsprime\in S} 
\tilde{F}(\weightsprime)$, which by arguments similar to 
before, imply that $\weights$ is close to a local minimum of $\tilde{F}$. 

\section{Experiments}\label{sec:experiments}

So far, our technique proves the \emph{existence} of local minima for the objective 
function in \eqref{eq:objfunover}. However, this does 
not say anything about the likelihood of gradient descent to reach them.
We now turn to study this question empirically. 

For each value of $ \p{k,n} $, where $k\in\pcc{20}$ and $n\in\set{k,\dots,20}$, 
we ran 1000 instantiations of gradient descent on the objective in 
\eqref{eq:objfunover}, each starting from a different random 
initialization\footnote{We used standard Xavier initialization: Each weight 
vector $\bw_i$ was samples i.i.d. from a Gaussian distribution in $\reals^k$, 
with zero mean and covariance $\frac{1}{k}I$.}. Each instantiation was ran with 
a fixed step size of $0.1$, until reaching a candidate stationary point / local 
minima (the stopping criterion was that the gradient norm w.r.t. any $\bw_i$ is at most $10^{-9}$). Points 
obtaining objective values less than $ 10^{-3} $ were ignored as those are 
likely to be close to a global minimum. Interestingly, no points with value 
between $ 10^{-3} $ and $ 10^{-2} $ were found. For all remaining candidate 
points, we verified that the conditions in Lemmas \ref{lem:key} and 
\ref{lem:key2} are met\footnote{
	Since running our algorithm for all suspicious points found on all 
	architectures is time consuming, we instead identified points that are 
	equivalent up to permutations on the order of neurons and of the data 
	coordinates, since the objective is invariant under such permutations. By 
	bounding the maximal Euclidean distance between these points and using the 
	Lipschitzness of the objective and its Hessian (see \thmref{thm:F_bound} 
	and \lemref{lem:hess_lipschitz}), this allowed us to run the algorithm on a 
	single representative from a family of equivalent points and speed up the 
	running time drastically. Also, the objective was tested to be 
	thrice-differentiable in all enclosing balls of radii returned by the 
	algorithm. Specifically, we ensured that no two such balls intersect (which 
	results in two identical neurons, where the objective is not 
	thrice-differentiable) and that no ball contains the origin (which results 
	in a neuron with weight $ \mathbf{0} $, where again the objective is not 
	thrice-differentiable).}
 to conclude that these points are indeed close to 
spurious local minima (in all cases, the distance turned out to be less than 
$2\cdot 10^{-6}$). Our verification process included verifying thrice-differentiability in the enclosing balls containing the minimum by asserting they contain no singular points, hence the objective is an analytical expression when restricted to these balls where differentiability follows.

In Tables \ref{tbl:n_eq_k} and \ref{tbl:n_neq_k}, we summarize the percentage 
of instantiations which were verified to converge close to a spurious local 
minimum, as a function of $ 
k,n $. We note that among candidate points found, only a tiny fraction could 
not be verified to be local minima (this only occured for network sizes $ 
\p{k,n}\in\set{\p{15,16},\p{17,18},\p{20,20}} $, and consist only $ 
0.1\%,2.4\%,0.9\% $ of the instantiations respectively). In the tables, we also 
provide the minimal eigenvalue of the Hessian of the objective, and the 
objective value (or equivalently, the optimization error) at the points found, 
averaged over the instantiations\footnote{Since all points are extremely close to a local minimum, the objective at the minimum is essentially the same, up to a deviation on order less than $1.1\cdot 10^{-9}$. Also, the minimal eigenvalues vary by at most $ 5.7\cdot10^{-4} $.}. Note that since the minimal eigenvalue is strictly positive and varies slightly inside the enclosing ball, this indicates that these are in fact strict local minima. 
As the tables demonstrate, the probability of converging to a spurious local 
minimum increases rapidly with $ k,n $, and suggests that it eventually become 
overwhelming as long as $n\approx k$. However, on a positive note, mild 
over-parameterization seems to remedy this, as no local minima were found for $ 
n\ge k+2 $ where $ n\le20 $, and local minima for $ n=k+1 $ are much more 
scarce than for $ n=k $. We leave the investigation of local minima for larger 
values of $ k,n $ to future work.

\begin{table}[t]
	\begin{minipage}{.45\linewidth}
		\centering
		\caption{Spurious local minima found for $ n=k $}
		\vskip 0.3cm
		\label{tbl:n_eq_k}
		\begin{tabular}{l|l|c|c|c}
			k & n & \% of runs & Average & Average\\&& converging to & minimal 
			& objective \\&& local minima & eigenvalue & value\\
			\hline
			6 & 6 & 0.3\% & 0.0047 & 0.025 \\
			7 & 7 & 5.5\% & 0.014 & 0.023 \\
			8 & 8 & 12.6\% & 0.021 & 0.021 \\
			9 & 9 & 21.8\% & 0.027 & 0.02 \\
			10 & 10 & 34.6\% & 0.03 & 0.022 \\
			11 & 11 & 45.5\% & 0.034 & 0.022 \\
			12 & 12 & 58.5\% & 0.035 & 0.021 \\
			13 & 13 & 73\% & 0.037 & 0.022 \\
			14 & 14 & 73.6\% & 0.038 & 0.023 \\
			15 & 15 & 80.3\% & 0.038 & 0.024 \\
			16 & 16 & 85.1\% & 0.038 & 0.027 \\
			17 & 17 & 89.7\% & 0.039 & 0.027 \\
			18 & 18 & 90\% & 0.039 & 0.029 \\
			19 & 19 & 93.4\% & 0.038 & 0.031 \\
			20 & 20 & 94\% & 0.038 & 0.033
		\end{tabular}
		\vskip 1cm
	\end{minipage}%
	\hfill~~~~\vline~\vline\hfill
	\begin{minipage}{.45\linewidth}
		\caption{Spurious local minima found for $ n\neq k $}
		\label{tbl:n_neq_k}
		\vskip 0.3cm
		\begin{tabular}{l|l|c|c|c}
			k & n & \% of runs & Average & Average\\&& converging to & minimal 
			& objective \\&& local minima & eigenvalue & value\\
			\hline
			8 & 9 & 0.1\% & 0.0059 & 0.021 \\
			10 & 11 & 0.1\% & 0.0057 & 0.018 \\
			11 & 12 & 0.1\% & 0.0056 & 0.017 \\
			12 & 13 & 0.3\% & 0.0054 & 0.016 \\
			13 & 14 & 1.5\% & 0.0015 & 0.038 \\
			14 & 15 & 5.5\% & 0.002 & 0.033 \\
			15 & 16 & 10.1\% & 0.004 & 0.032 \\
			16 & 17 & 18\% &  0.0055 & 0.031 \\
			17 & 18 & 20.9\% & 0.007 & 0.031 \\
			18 & 19 & 36.9\% & 0.0064 & 0.028 \\
			19 & 20 & 49.1\% & 0.0077 & 0.027
		\end{tabular}
	\end{minipage} 
\end{table}

In \figref{fig:obj_val_cdf}, we show the distribution of the objective values 
obtained in the points found, over the 1000 instantiations of several 
architectures. The figure clearly indicates that apart from a higher chance of 
converging to local minima, larger architectures also tend to have worse values 
attained on these minima.

Finally, in examples \ref{ex:k6n6} and \ref{ex:k8n9} below, we present some 
specific local minima found for $n=k=6$ 
and $k=8,n=9$, and discuss their properties. We note that these are the 
smallest networks (with $n=k$ and $n\neq k$ respectively) for which we were 
able to find such points. 

\begin{example}\label{ex:k6n6}
	Out of 1000 gradient descent instantiations for $ n=k=6 $, three converged 
	close to a local minimum. All three were verified to be 
	essentially identical (after permuting the neurons and up to an Euclidean 
	distance of $1.2\cdot 10^{-8}$), and have the following form:
	\begin{equation*}
		\bw_1^6=\begin{bmatrix}
			-0.6015 & 0.3080 & 0.3080 & 0.3080 & 0.3080 & 0.3080 \\
			0.2245 & 0.9867 & -0.0504 & -0.0504 & -0.0504 & -0.0504 \\
			0.2245 & -0.0504 & 0.9867 & -0.0504 & -0.0504 & -0.0504 \\
			0.2245 & -0.0504 & -0.0504 & 0.9867 & -0.0504 & -0.0504\\
			0.2245 & -0.0504 & -0.0504 & -0.0504 & 0.9867 & -0.0504 \\
			0.2245 & -0.0504 & -0.0504 & -0.0504 & -0.0504 & 0.9867
		\end{bmatrix},
	\end{equation*}
	where the parameter vector of each of the $ 6 $ neurons corresponds to a 
	column of $ \bw_1^6 $.
	The Hessian of the objective at $ \bw_1^6 $, $ \nabla^2F\p{\bw_1^6} 
	$, was confirmed to have minimal eigenvalue $ 
	\lambda_{\min}\p{\nabla^2F\p{\bw_1^6}} \ge 0.004699 $. This implied 
	that all three suspicious points found for $ n=k=6 $ are of distance at 
	most $ r=1.12\cdot10^{-7} $ from a local minimum with objective value at 
	least $ 0.02508 $.
\end{example}

\begin{figure}
	\includegraphics[scale=.35]{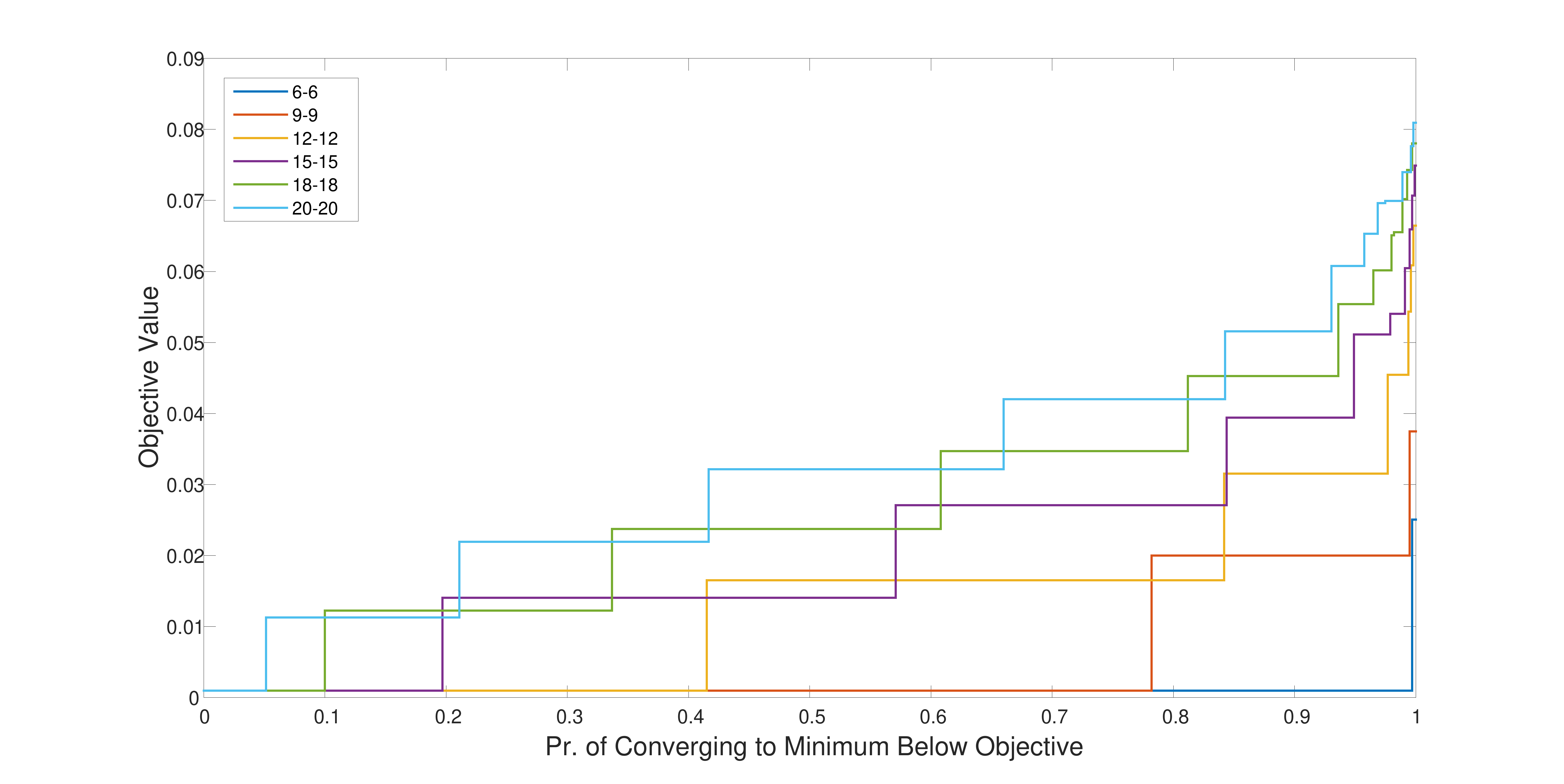}
	\includegraphics[scale=.35]{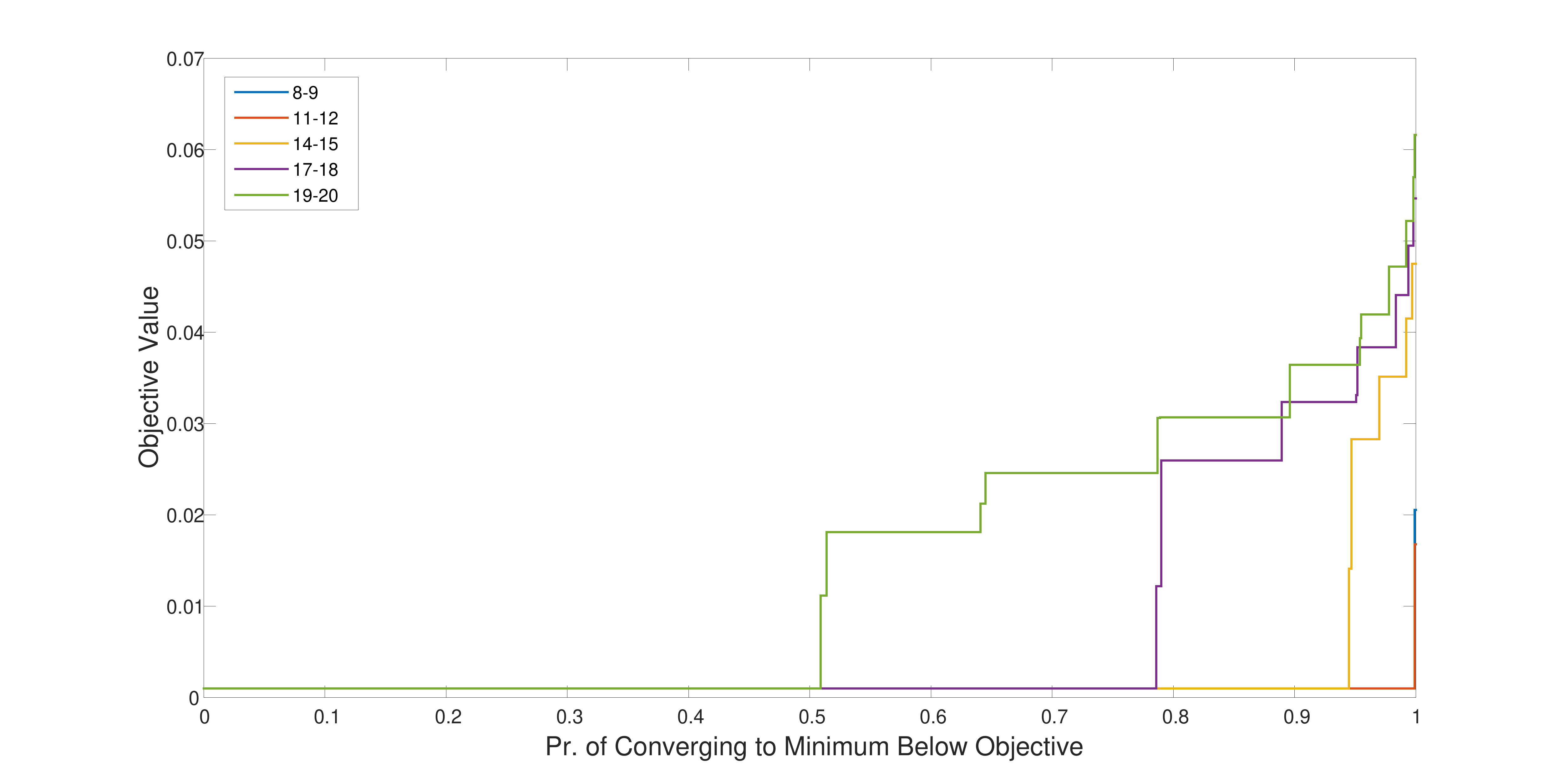}
	
	\caption{The empirical probability of converging to a minimum with 
		objective value smaller than a given quantity, out of the 1000 runs. 
		Different lines correspond to different choices of $(k,n)$.}
	\label{fig:obj_val_cdf}
\par\end{figure}

\begin{example}\label{ex:k8n9}
	Out of 1000 gradient descent initializations for $ k=8,n=9 $, one converged 
	to a local minimum. The point found, 
	denoted $ \bw_1^9 $, is given below:
	\begin{equation*}
		\bw_1^9=\begin{bmatrix}
			
			0.9841 & -0.0298 & -0.0298 & -0.0298 & -0.0298 & -0.0298 & -0.0298 
			& 0.1263 & 0.0687 \\
			-0.0298 & 0.9841 & -0.0298 & -0.0298 & -0.0298 & -0.0298 & -0.0298 
			& 0.1263 & 0.0687 \\
			-0.0298 & -0.0298 & 0.9841 & -0.0298 & -0.0298 & -0.0298 & -0.0298 
			& 0.1263 & 0.0687 \\
			-0.0298 & -0.0298 & -0.0298 & 0.9841 & -0.0298 & -0.0298 & -0.0298 
			& 0.1263 & 0.0687 \\
			-0.0298 & -0.0298 & -0.0298 & -0.0298 & 0.9841 & -0.0298 & -0.0298 
			& 0.1263 & 0.0687 \\
			-0.0298 & -0.0298 & -0.0298 & -0.0298 & -0.0298 & 0.9841 & -0.0298 
			& 0.1263 & 0.0687 \\
			-0.0298 & -0.0298 & -0.0298 & -0.0298 & -0.0298 & -0.0298 & 0.9841 
			& 0.1263 & 0.0687 \\
			0.2301 & 0.2301 & 0.2301 & 0.2301 & 0.2301 & 0.2301 & 0.2301 & 
			-0.1890 & -0.4862
		\end{bmatrix},
	\end{equation*}
	where the parameter vector of each of the $ 9 $ neurons corresponds to a 
	column of $ 
	\bw_1^9 $.
	The Hessian of the objective at $ \bw_1^9 $, $ \nabla^2F\p{\bw_1^9} 
	$, was confirmed to have minimal eigenvalue $ 
	\lambda_{\min}\p{\nabla^2F\p{\bw_1^9}} \ge 0.005944 $. This implied 
	that $ \bw_1^9 $ is of distance at most $ r=7.8\cdot10^{-8} $ from a 
	local minimum with objective value at least $ 0.02056 $.
\end{example}

It is interesting to note that the points found in examples \ref{ex:k6n6} and \ref{ex:k8n9}, as well as all other local minima detected, have a nice symmetric 
structure: We see that most of the trained neurons are very close to the target neurons in most of the dimensions. Also, many of the entries appear to be the same. Surprisingly, although 
such constructions might seem brittle, these are indeed strict local minima. Moreover, the 
probability of converging to such points becomes very large as the network size 
increases as demonstrated by our experiments.

\section{Proofs}\label{sec:proofs}

In the proofs, we use bold-faced letters (e.g., $\bw$) to denote 
vectors, barred bold-faced letters (e.g., $\bar{\bw}$) to denote vectors 
normalized to unit Euclidean norm, and capital letters to generally denote 
matrices. Given a natural 
number $ k $, we let $ [k] $ be shorthand for $\set{1,\dots,k} $. Given a 
matrix $M$, $\spnorm{M}$ denotes its spectral norm. We will also make use of 
the following version of Weyl's inequality, stated below for completeness.

\begin{theorem}[Weyl's inequality]\label{thm:weyl}
	Suppose $ A, B, P\in\reals^{d\times d} $ are real symmetric matrices such 
	that $ A-B=P $. Assume that $ A,B $ have eigenvalues $ 
	\alpha_1\ge\ldots\ge\alpha_d, \beta_1\ge\ldots\ge\beta_d $ respectively, 
	and that $ \spnorm{P} \le \epsilon $. Then
	\[
	\abs{\alpha_i-\beta_i}\le\epsilon ~~~\forall i\in\pcc{d}.
	\]
\end{theorem}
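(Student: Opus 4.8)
The plan is to obtain Weyl's inequality as a direct consequence of the Courant--Fischer min--max characterization of the eigenvalues of a real symmetric matrix, which I would invoke as a known fact: for a symmetric $M\in\reals^{d\times d}$ with eigenvalues $\mu_1\ge\cdots\ge\mu_d$ and any $i\in\pcc{d}$,
\[
\mu_i \;=\; \max_{\dim V = i}\;\min_{\substack{\bx\in V\\ \norm{\bx}=1}}\bx^\top M\bx,
\]
where the maximum ranges over all $i$-dimensional subspaces $V\subseteq\reals^d$ (this follows by diagonalizing $M$ together with a dimension-counting argument).

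First I would record the elementary bound that, since $P$ is symmetric with $\spnorm{P}\le\epsilon$, every unit vector $\bx$ satisfies $\abs{\bx^\top P\bx}\le\epsilon$, because $\bx^\top P\bx$ lies between the extreme eigenvalues of $P$, each of absolute value at most $\spnorm{P}$. Next, for a fixed index $i$, I would let $V^\star$ be an $i$-dimensional subspace attaining the maximum in the Courant--Fischer formula for $B$, so that $\beta_i=\min_{\bx\in V^\star,\,\norm{\bx}=1}\bx^\top B\bx$. Plugging in $A=B+P$ and using that $\alpha_i$ is a maximum over \emph{all} $i$-dimensional subspaces (hence at least the value attained at the particular subspace $V^\star$),
\[
\alpha_i\;\ge\;\min_{\substack{\bx\in V^\star\\\norm{\bx}=1}}\bx^\top A\bx\;=\;\min_{\substack{\bx\in V^\star\\\norm{\bx}=1}}\p{\bx^\top B\bx+\bx^\top P\bx}\;\ge\;\min_{\substack{\bx\in V^\star\\\norm{\bx}=1}}\bx^\top B\bx-\epsilon\;=\;\beta_i-\epsilon.
\]
Then I would repeat the argument with the roles of $A$ and $B$ exchanged --- legitimate since $B-A=-P$ and $\spnorm{-P}=\spnorm{P}\le\epsilon$ --- to obtain $\beta_i\ge\alpha_i-\epsilon$, and combine the two inequalities to conclude $\abs{\alpha_i-\beta_i}\le\epsilon$ for every $i\in\pcc{d}$.

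There is no serious obstacle here; the result is classical. The only points demanding a little care are (i) using the \emph{same} fixed optimal subspace $V^\star$ for $B$ when lower-bounding $\alpha_i$, rather than re-optimizing independently on both sides, and (ii) keeping the directions of all inequalities straight when symmetrizing in $A$ and $B$. If a fully self-contained argument were desired, the one genuinely non-trivial ingredient would be the proof of the Courant--Fischer theorem itself, but for the purposes of this paper citing it suffices.
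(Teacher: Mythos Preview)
Your proof is correct and is the standard Courant--Fischer argument for Weyl's inequality. Note, however, that the paper does not actually prove this theorem: it is stated ``for completeness'' as a classical fact and invoked without proof in the analysis of the eigenvalue lower-bounding algorithm. So there is no paper proof to compare against; your argument simply fills in a detail the authors chose to cite rather than derive.
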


\subsection{Proof of \thmref{thm:main}}

To prove \thmref{thm:main} for some $(k,n)$, it is enough to consider some 
particular choice of orthogonal $\bv_1,\ldots,\bv_k$, since any other choice 
amounts to rotating or reflecting the same objective function (which of course 
does not 
change the existence or non-existence of its local minima). In particular, we 
chose these vectors to simply be the standard basis vectors in $\reals^k$. 

As we show in \subsecref{subsec:closedform} below, the objective function in 
\eqref{eq:objfunover} can be 
written in an explicit form (without the expectation term), as well as its 
gradients and Hessians. We first ran standard gradient descent, starting from 
random initialization and using a fixed step size of $0.1$, till we 
reached a point $\weights$, such that the gradient norm w.r.t. any $\bw_i$ is 
at most $10^{-9}$. Given this point, we use \lemref{lem:key} and 
\lemref{lem:key2} to prove 
that it is close to a local minimum. Specifically, we built code which does the 
following:

\begin{enumerate}
	\item Provide a rigorous upper bound on the norm of the gradient at a given 
	point $ \weights $ (since we have a closed-form expression for the 
	gradient, this only requires elementary calculations).
	\item Provide a rigorous lower bound on the minimal eigenvalue of 
	$\nabla^2F(\weights)$: This is the technically most demanding part, and the derivation 
	of the algorithm is presented in \subsecref{subsec:pdalg}.
	\item Provide a rigorous upper bound $B$ on the remainder term $R_{\weights,\bu}$ 
	(see \subsecref{sec:remainder_bound} for the relevant calculations).
	\item Provide a rigorous Lipschitz bound on the objective $ F\p{\weights} 
	$, establishing \lemref{lem:key2} (see 
	\subsecref{subsec:objective_lipschitzness} for the relevant calculations).
\end{enumerate}

We used MATLAB (version 2017b) to perform all floating-point 
computations, and its associated MATLAB VPA package to perform 
the exact symbolic computations. The code we used is freely available at 
https://github.com/ItaySafran/OneLayerGDconvergence.git. For any candidate local minimum, the 
verification took from less than a minute up to a few hours, depending on the 
size of $k,n$, when running on Intel Xeon E5 processors (ranging from E5-2430 
to E5-2660).
%

\subsubsection{Closed-form Expressions for $ F,\nabla F $ and $ \nabla^2 F $}
\label{subsec:closedform}

For convenience, we will now state closed-form expressions (without an 
expectation)
for the objective function $F$ in \eqref{eq:objfunover}, its gradient and its 
Hessian. These are also the expressions used in the code we built to verify the 
conditions of \lemref{lem:key} and \lemref{lem:key2}. First, we have that
\begin{equation}\label{eq:closedform_obj}
F\p{\weights} ~=~ \frac{1}{2}\sum_{i,j=1}^n f\p{\bw_i,\bw_j} - 
\sum_{\substack{i\in\pcc{n}\\j\in\pcc{k}}} f\p{\bw_i,\bv_j} + 
\frac{1}{2}\sum_{i,j=1}^k f\p{\bv_i,\bv_j},
\end{equation}
where
\begin{align}
	f\p{\bw,\bv} &:= \E_{\bx\sim 
		\Ncal(\mathbf{0},I)}\left[\relu{\bw^{\top}\bx}\relu{\bv^{\top}\bx}\right]\notag\\
	&= \frac{1}{2\pi} 
	\norm{\bw}\norm{\bv}\p{\sin\p{\theta_{\bw,\bv}}+\p{\pi-\theta_{\bw,\bv}}
		\cos\p{\theta_{\bw,\bv}}},
	\label{eq:f_function}
\end{align}
and 
$\theta_{\bw,\bv}:=\cos^{-1}\left(\frac{\bw^\top\bv}{\norm{\bw}\cdot\norm{\bv}}\right)$
	is the angle between two vectors $\bw,\bv$. 
The latter equality in \eqref{eq:f_function} was shown in \citet[section 
2]{cho2009kernel}.

Using the above representation, \citet{brutzkus2017globally} compute the gradient of $ f\p{\bw,\bv} $ with respect to $ \bw $, given by
\begin{equation}\label{eq:g_function}
	g\p{\bw,\bv} \coloneqq \frac{\partial}{\partial\bw}f\p{\bw,\bv} = \frac{1}{2\pi} \p{\norm{\bv} \sin\p{\theta_{\bw,\bv}}\bar{\bw} + \p{\pi-\theta_{\bw,\bv}}\bv}.
\end{equation}
Which implies that $ \nabla F\p{\weights} $, the gradient of the objective with respect to $ \weights $, equals
\begin{equation*}
	\nabla F\p{\weights} = \frac{1}{2}\weights + \sum_{\substack{i,j=1\\i\neq j}}^n \tilde{g}\p{\bw_i,\bw_j} - \sum_{\substack{i\in\pcc{n}\\j\in\pcc{k}}} \tilde{g}\p{\bw_i,\bv_j},
\end{equation*}
where $ \tilde{g}\p{\bw_i,\bu}\in\reals^{kn} $ equals $ g\p{\bw_i,\bu}\in\reals^{k} $ on entries $ k(i-1)+1 $ through $ ki $, and zero elsewhere.
We now provide the Hessian of \eqref{eq:f_function} based on the computation of the gradient in \eqref{eq:g_function} (see \subsecref{sec:hessian_derivation} for the full derivation)
\begin{equation*}
	h_1\p{\bw,\bv} \coloneqq \frac{\partial^2}{\partial\bw^2}f\p{\bw,\bv}
	=\frac{\sin\p{\theta_{\bw,\bv}}\norm{\bv}}{2\pi\norm{\bw}}\p{\bI-\bar{\bw}\bar{\bw}^{\top}+\bar{\bn}_{\bv,\bw}\bar{\bn}_{\bv,\bw}^{\top}},
\end{equation*}
\begin{equation*}
	h_2\p{\bw,\bv}\coloneqq \frac{\partial^2}{\partial\bw\partial\bv}f\p{\bw,\bv} = \frac{1}{2\pi}\p{\p{\pi-\theta_{\bw,\bv}}\bI+\bar{\bn}_{\bw,\bv}\bar{\bv}^{\top}+\bar{\bn}_{\bv,\bw}\bar{\bw}^{\top}},
\end{equation*}
where
\begin{equation}\label{eq:bn_def}
	\bn_{\bv,\bw} = \bar{\bv}-\cos\p{\theta_{\bv,\bw}}\bar{\bw},~~~\bar{\bn}_{\bv,\bw} = \frac{\bn_{\bv,\bw}}{\norm{\bn_{\bv,\bw}}}.
\end{equation}
To formally define the Hessian of $F$ (a $kn\times kn$ matrix), we partition it 
into $n\times n$ blocks, each of size $k\times k$. Define $ 
\tilde{h}_1\p{\bw_i,\bu} \in \reals^{kn\times kn} $ to equal $ h_1\p{\bw_i,\bu} 
$ on the $ i $-th $ d\times d $ diagonal block and zero elsewhere. For $ 
\bw_i,\bw_j $ define $ \tilde{h}_2\p{\bw_i,\bw_j} \in \reals^{kn\times kn} $ to 
equal $ h_2\p{\bw_i,\bw_j} $ on the $ i,j $-th $ k\times k $ block and zero 
elsewhere. We now have that the Hessian is given by
\begin{equation}\label{eq:obj_hessian}
	\nabla^2F\p{\weights} = \frac{1}{2}\bI + \sum_{\substack{i,j=1\\i\neq j}}^{n} \tilde{h}_1\p{\bw_i,\bw_j} - \sum_{\substack{i\in\pcc{n}\\j\in\pcc{k}}} \tilde{h}_1\p{\bw_i,\bv_j} + \sum_{\substack{i,j=1\\i\neq j}}^{n}\tilde{h}_2\p{\bw_i,\bw_j}.
\end{equation}

\subsubsection{Lower bound on $\lambda_{\min}$}\label{subsec:pdalg}

We wish to verify that the Hessian of a point returned by the gradient descent 
algorithm is positive definite, as well as provide a lower bound for its 
smallest eigenvalue, avoiding the possibility of errors due to floating-point 
computations.

Since the Hessians we encounter have relatively small entries and are 
well-conditioned, it turns out that computing the spectral decomposition in 
floating-point arithmetic provides a very good approximation of the true 
spectrum of the matrix. Therefore, instead of performing spectral decomposition 
symbolically from scratch, our algorithmic approach is to use the 
floating-point decomposition, and merely bound its error, using simple 
quantities which are easy to compute symbolically. Specifically, given the 
(floating-point, possibly approximate) decomposition $UDU^\top$ of a matrix 
$A$, we bound the error using the distance of $UDU^\top$ from $A$, as well as 
the distance of $U$ from its projection on the subspace of orthogonal matrices 
given by $ \bar{U}\coloneqq U\p{U^{\top}U}^{-0.5} $. Formally, we use the 
following algorithm (where numerical computations refer to operations 
in floating-point arithmetic):

\begin{algorithm}
\begin{algorithmic}
\STATE \textbf{Input:} Square matrix $ A \in\reals^{d\times d}$.
\STATE \textbf{Output:} A lower bound on the smallest eigenvalue of $ A $ if it 
is 
positive-definite and $ -1 $ otherwise.
\STATE
	- Numerically compute $ A^{\prime} $, a double precision estimate of $ A $.
\STATE
	- Symbolically compute $ \epsilon_1=\norm{A-A^{\prime}}_F $.
\STATE
	- Numerically compute $ U,D\in\reals^{d\times d} $ s.t. $ A^{\prime}\approx 
	UDU^{\top} $, $ D $ is diagonal.
\STATE
	- Symbolically compute $ E=I-U^{\top}U $, $ A^{\prime\prime}=UDU^{\top} $, 
	$ \epsilon_2=\norm{A^{\prime}-A^{\prime\prime}}_F $.
\STATE
	- Symbolically compute an upper bound $ B=1+\norm{U-I}_F $ on $ \spnorm{U} 
	$.
\STATE
	- Symbolically compute an upper bound $ C=\norm{E}_F $ on $ \spnorm{E} $.
\STATE
	- Let $ \lambda_{\min},\lambda_{\max} $ denote the smallest and largest 
	diagonal entries of $ D $ respectively, then symbolically compute $ 
	\epsilon_3 = 
	B^2\p{2\lambda_{\max}\p{\frac{1}{\sqrt{1-C}}-1}+\p{\frac{1}{\sqrt{1-C}}-1}^2}$.
\STATE
	- Return $ \lambda_{\min} - \epsilon_1-\epsilon_2 - \epsilon_3 $ if it is 
	larger than $ 0 $ and $ -1 $ otherwise.
\end{algorithmic}
\end{algorithm} 

\textbf{Algorithm analysis:}
For the purpose of analyzing the algorithm, the following two lemmas will be used.
\begin{lemma}\label{lem:central_binom_id}
	For any natural $ n\ge0 $ we have
	\[
		4^{-n}\sum_{k=0}^{n}\binom{2k}{k}\binom{2n-2k}{n-k}=1.
	\]
\end{lemma}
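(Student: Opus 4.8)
The plan is to prove the identity by a generating-function (equivalently, Cauchy-product) argument. First I would recall the generalized binomial series: for $|x|<1/4$,
\[
\sum_{n=0}^{\infty}\binom{2n}{n}x^n \;=\; \frac{1}{\sqrt{1-4x}}.
\]
To establish this, I would use the elementary identity $\binom{2n}{n}=(-4)^n\binom{-1/2}{n}$, which follows by expanding $\binom{-1/2}{n}=\frac{(-1/2)(-3/2)\cdots(-1/2-n+1)}{n!}$, pulling out the factor $(-1/2)^n$, and simplifying the resulting double factorial via $(2n-1)!!=\frac{(2n)!}{2^n n!}$. Plugging $\alpha=-1/2$ and $y=-4x$ into the generalized binomial theorem $\sum_{n\ge 0}\binom{\alpha}{n}y^n=(1+y)^\alpha$ then yields the displayed series, with radius of convergence $1/4$.

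Next I would simply square this identity. On the one hand,
\[
\left(\frac{1}{\sqrt{1-4x}}\right)^{2}=\frac{1}{1-4x}=\sum_{n=0}^{\infty}4^n x^n .
\]
On the other hand, by the Cauchy product formula for absolutely convergent power series (valid for $|x|<1/4$),
\[
\left(\sum_{n=0}^{\infty}\binom{2n}{n}x^n\right)^{2}=\sum_{n=0}^{\infty}\left(\sum_{k=0}^{n}\binom{2k}{k}\binom{2n-2k}{n-k}\right)x^n .
\]
Comparing the coefficients of $x^n$ (unique for convergent power series) gives $\sum_{k=0}^{n}\binom{2k}{k}\binom{2n-2k}{n-k}=4^n$, and dividing through by $4^n$ is exactly the claimed identity, for every $n\ge 0$.

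I expect the only mildly delicate points to be the bookkeeping in the identity $\binom{2n}{n}=(-4)^n\binom{-1/2}{n}$ and the justification for equating coefficients — both completely routine. If one prefers to avoid analysis entirely, the identical argument works verbatim in the ring of formal power series $\mathbb{Q}[[x]]$, where $(1-4x)^{-1/2}$ is defined as the unique power series with constant term $1$ whose square is $(1-4x)^{-1}$; then no convergence statement is needed at all. As an alternative route one could instead induct on $n$ using a Pascal-type recurrence for central binomial coefficients, but the generating-function proof is shorter and cleaner, so that is the one I would present.
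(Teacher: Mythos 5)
Your proof is correct and uses essentially the same idea as the paper: expand $(1-4x)^{-1/2}$ (the paper uses $(1-x)^{-1/2}$, a trivial rescaling) via the generalized binomial theorem, square it, and compare coefficients with the geometric series. The only cosmetic differences are the normalization of $x$ and the slightly different bookkeeping used to identify $\binom{2n}{n}4^{-n}$ with $(-1)^n\binom{-1/2}{n}$.
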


\begin{proof}
	Clearly, for any $ \abs{x}<1 $ we have
	\begin{equation}\label{eq:geometric_sum}
		\frac{1}{1-x}=\sum_{k=0}^{\infty}x^k.
	\end{equation}
	Using the generalized binomial theorem, we have for any $ \abs{x}<1 $
	\begin{align}
		\frac{1}{\sqrt{1-x}} &= \sum_{k=0}^{\infty}\binom{k-0.5}{k}x^k 
		= \sum_{k=0}^{\infty}\frac{\prod_{i=0}^{k-1}\p{k-i-0.5}}{k!}x^k 
		\nonumber\\
		&= \sum_{k=0}^{\infty}\frac{\prod_{i=0}^{k-1}\p{2k-2i-1}}{2^kk!}x^k 
		\nonumber
		= 
		\sum_{k=0}^{\infty}\frac{2^kk!\prod_{i=0}^{k-1}\p{2k-2i-1}}{4^k\p{k!}^2}x^k
		 \nonumber\\
		&= 
		\sum_{k=0}^{\infty}\frac{\prod_{i=0}^{k-1}\p{2k-2i}\prod_{i=0}^{k-1}\p{2k-2i-1}}{4^k\p{k!}^2}x^k
		= \sum_{k=0}^{\infty}\frac{\p{2k}!}{4^k\p{k!}^2}x^k \nonumber\\
		&= \sum_{k=0}^{\infty}\binom{2k}{k}4^{-k}x^k. \label{eq:sqrt_sum}
	\end{align}
	Consider the $ k $-th coefficient in the expansion of the square of \eqref{eq:sqrt_sum}, which is well defined as the sum converges absolutely for any $ \abs{x}<1 $. From \eqref{eq:geometric_sum}, these coefficients are all $ 1 $. However, these are also given by the expansion of the square of \eqref{eq:sqrt_sum}. Specifically, the $ k $-th coefficient in the square is given as the sum of all $ x^k $ coefficients in the expansion of the root, that is, it is a convolution of the coefficients in \eqref{eq:sqrt_sum} with index $ \le k $, thus we have
	\begin{equation*}
		4^{-n}\sum_{k=0}^{n}\binom{2k}{k}\binom{2n-2k}{n-k}=1.
	\end{equation*}
\end{proof}

\begin{lemma}
	Let $ U^{\top}U $ be a diagonally dominant matrix, let $ E = \bI-U^{\top}U $ satisfying $ \spnorm{E}\le C<1 $. Then $ \p{U^{\top}U}^{-0.5} = \sum_{n=0}^{\infty}\binom{2n}{n}4^{-n}E^n$. Moreover, $ E^{\prime}\coloneqq \sum_{n=1}^{\infty}\binom{2n}{n}4^{-n}E^n $ satisfies $ \spnorm{E^{\prime}}\le\p{\frac{1}{\sqrt{1-C}}-1} $.
\end{lemma}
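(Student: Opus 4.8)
The statement has two parts: first, that the inverse square root of $U^{\top}U$ admits the Neumann-type series $\sum_{n=0}^{\infty}\binom{2n}{n}4^{-n}E^n$; second, that the tail $E' = \sum_{n=1}^{\infty}\binom{2n}{n}4^{-n}E^n$ has spectral norm at most $\frac{1}{\sqrt{1-C}}-1$. The plan is to treat both as consequences of the scalar power series expansion $\frac{1}{\sqrt{1-x}} = \sum_{n=0}^{\infty}\binom{2n}{n}4^{-n}x^n$ for $|x|<1$, already established inside the proof of \lemref{lem:central_binom_id} (see \eqref{eq:sqrt_sum}), lifted to matrices via the functional calculus for the symmetric matrix $E$.

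First I would record that $U^{\top}U = \bI - E$ is symmetric positive definite: since $\spnorm{E}\le C<1$, all eigenvalues of $E$ lie in $(-1,1)$, so all eigenvalues of $\bI-E$ lie in $(0,2)$, and in particular $(U^{\top}U)^{-0.5}$ is well-defined via its spectral decomposition. Write $E = Q\Lambda Q^{\top}$ with $Q$ orthogonal and $\Lambda = \mathrm{diag}(\mu_1,\dots,\mu_d)$, $|\mu_i|\le C$. Then $(U^{\top}U)^{-0.5} = Q\,\mathrm{diag}\big((1-\mu_i)^{-0.5}\big)\,Q^{\top}$, and applying the scalar identity \eqref{eq:sqrt_sum} to each $x=\mu_i$ (valid since $|\mu_i|\le C<1$) gives $(1-\mu_i)^{-0.5} = \sum_{n=0}^{\infty}\binom{2n}{n}4^{-n}\mu_i^n$. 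Since $E^n = Q\Lambda^n Q^{\top}$, summing the diagonalized series term by term — justified by absolute convergence of each scalar series, uniformly over the finitely many eigenvalues — yields $(U^{\top}U)^{-0.5} = \sum_{n=0}^{\infty}\binom{2n}{n}4^{-n}E^n$, which is the first claim. The convergence of the matrix series in spectral norm follows since $\spnorm{E^n}\le C^n$ and $\sum_n \binom{2n}{n}4^{-n}C^n = (1-C)^{-0.5}<\infty$.

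For the second claim, $E' = \sum_{n=1}^{\infty}\binom{2n}{n}4^{-n}E^n = (U^{\top}U)^{-0.5} - \bI$, so $E' = Q\,\mathrm{diag}\big((1-\mu_i)^{-0.5}-1\big)\,Q^{\top}$ and hence $\spnorm{E'} = \max_i \big|(1-\mu_i)^{-0.5}-1\big|$. Since $x\mapsto (1-x)^{-0.5}-1$ is increasing on $(-1,1)$ and nonnegative for $x\in[0,1)$ while bounded in absolute value by its value at $|x|=C$ on $[-C,C]$ — more carefully, $(1-x)^{-0.5}-1 \ge 0$ on $[0,C]$ with maximum $(1-C)^{-0.5}-1$, and on $[-C,0]$ we have $0 \ge (1-x)^{-0.5}-1 \ge (1+C)^{-0.5}-1 \ge -( (1-C)^{-0.5}-1)$ — we conclude $\spnorm{E'}\le (1-C)^{-0.5}-1$. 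Alternatively, and more cleanly, bound directly via the triangle inequality: $\spnorm{E'} \le \sum_{n=1}^{\infty}\binom{2n}{n}4^{-n}\spnorm{E}^n \le \sum_{n=1}^{\infty}\binom{2n}{n}4^{-n}C^n = (1-C)^{-0.5}-1$, again using \eqref{eq:sqrt_sum} with $x=C$.

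The only mild subtlety — and the step I would be most careful about — is the interchange of the matrix sum with the spectral decomposition, i.e.\ justifying that a convergent operator-norm series of symmetric matrices sharing the eigenbasis $Q$ acts eigenvalue-wise; this is immediate from $\spnorm{\cdot}$-convergence of partial sums and continuity of the map $M\mapsto Q^{\top}MQ$, but is worth stating explicitly since the whole point of this lemma is that the downstream algorithm relies on the bound being rigorous. Everything else is the scalar generalized binomial series already in hand plus elementary spectral-norm estimates.
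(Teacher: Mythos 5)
Your proof is correct and takes a genuinely different route from the paper's. The paper's argument is algebraic: it forms the partial sums $S_n = \sum_{k=0}^n \binom{2k}{k}4^{-k}E^k$, invokes \lemref{lem:central_binom_id} to show $(I-E)S_n^2 = I - E^{n+1} + (I-E)E^{n+1}\sum_{k=0}^{n-1}\beta_{n+k+1}E^k$ for some coefficients $\beta_j\in(0,1)$, and then lets the error terms vanish using $\spnorm{E}^n\le C^n\to 0$, concluding $(U^\top U)^{-1/2}=\lim S_n$ without ever diagonalizing. Your argument instead exploits the symmetry of $E = I - U^\top U$, diagonalizes $E = Q\Lambda Q^\top$, and lifts the scalar generalized-binomial series \eqref{eq:sqrt_sum} eigenvalue-by-eigenvalue via the functional calculus. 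Both are sound. Your route is cleaner and more direct, and has the concrete advantage that the bound on $\spnorm{E'}$ falls out immediately — either from the eigenvalue picture or, as you note, even more simply from the triangle inequality $\spnorm{E'}\le\sum_{n\ge1}\binom{2n}{n}4^{-n}\spnorm{E}^n\le (1-C)^{-1/2}-1$. The paper's stated proof actually only establishes the first identity and is silent on the $\spnorm{E'}$ bound (it is only remarked upon, somewhat obliquely, in the surrounding algorithm analysis), so your proposal is in that respect more complete. The one thing the paper's route buys is that it never needs $E$ to be diagonalizable, though since $E$ is always symmetric here this is not a real constraint; conversely, your route avoids the slightly fiddly bookkeeping with the $\beta_j$ coefficients in the Cauchy product of the truncated series.
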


\begin{proof}
	Consider the series given by the partial sums
	\begin{equation*}
		S_n=\sum_{k=0}^{n}\binom{2k}{k}4^{-k}E^k,
	\end{equation*}
	and observe that
	\begin{align}\label{eq:sqrt_of_inverse}
		U^{\top}US_n^2 &= \p{\bI-E}\p{\sum_{k=0}^{n}\binom{2k}{k}4^{-k}E^k}^2 \nonumber\\
		&= \p{\bI-E}\p{\sum_{k=0}^{n}E^k+\sum_{k=n+1}^{2n}\beta_kE^k} \nonumber\\
		&= \bI-E^{n+1}+\p{\bI-E}E^{n+1}\sum_{k=0}^{n-1}\beta_{n+k+1}E^k,
	\end{align}
	where the second equality is due to \lemref{lem:central_binom_id}, and holds for some $ \beta_k\in\p{0,1} $, $ k\in\set{n+1,\dots,2n} $. Now, since
	\begin{align*}
		&\lim_{n\to\infty}\spnorm{\p{\bI-E}E^{n+1}\sum_{k=0}^{n-1}\beta_{n+k+1}E^k}\\
		\le&\lim_{n\to\infty} \spnorm{\bI-E}\spnorm{E}^{n+1}\spnorm{\sum_{k=0}^{n-1}\beta_{n+k+1}E^k}\\
		\le&\lim_{n\to\infty} \spnorm{\bI-E}\spnorm{E}^{n+1}\p{\sum_{k=0}^{n-1}\beta_{n+k+1}\spnorm{E}^k}\\
		\le&\lim_{n\to\infty} \spnorm{\bI-E}\spnorm{E}^{n+1}\p{\sum_{k=0}^{n-1}C^k}\\
		\le&\lim_{n\to\infty} \spnorm{\bI-E}\spnorm{E}^{n+1}\p{1-C}^{-1}\\
		=&0,
	\end{align*}
	we have that \eqref{eq:sqrt_of_inverse} reduces to $ \bI $ as $ n\to\infty $, concluding the proof of the lemma.
\end{proof}
Turning back to the algorithm analysis, we wish to numerically compute the eigenvalues of $ A $ and bound their deviation due to roundoff errors. Other than the inaccuracy in computing $ A^{\prime\prime}\approx A^{\prime} $, another obstacle is that $ U $ is not exactly orthogonal, however it is very close to orthogonal in the sense that $ E=I-U^{\top}U $ has a small norm.
Let $ \bar{U}=U\p{U^{\top}U}^{-0.5} $ be the projection of $ U $ onto the space of orthogonal matrices in $ \reals^{d\times d} $. Clearly, $ \p{U^{\top}U}^{-0.5} $ is well defined if $ U^{\top}U $ is diagonally-dominant, hence positive-definite, which can be easily verified. Also,
\begin{align*}
\bar{U}^{\top}\bar{U}&=U\p{U^{\top}U}^{-0.5}\p{U\p{U^{\top}U}^{-0.5}}^{\top}\\
&= U\p{U^{\top}U}^{-0.5}\p{U^{\top}U}^{-0.5}U^{\top}\\
&= U\p{U^{\top}U}^{-1}U^{\top}\\
&= UU^{-1}\p{U^{\top}}^{-1}U^{\top}\\
&= \mathbf{I}.
\end{align*}
We now upper bound $ \spnorm{A^{\prime\prime}-\bar{A}} $, where $ \bar{A}=\bar{U}D\bar{U}^{\top} $ and therefore its spectrum is given to us explicitly as the diagonal entries of $ D $, $ \text{diag}\p{D} $.
Compute
\begin{align*}
	\spnorm{A^{\prime\prime}-\bar{A}} &= \spnorm{UDU^{\top}-\bar{U}D\bar{U}^{\top}}\\
	&= \spnorm{UDU^{\top}-U\p{U^{\top}U}^{-0.5}D\p{U\p{U^{\top}U}^{-0.5}}^{\top}}\\
	&= \spnorm{U\p{D-\p{U^{\top}U}^{-0.5}D\p{U^{\top}U}^{-0.5}}U^{\top}}\\
	&= \spnorm{U\p{D-\p{I+E^{\prime}}D\p{I+E^{\prime}}}U^{\top}}\\
	&= \spnorm{U\p{E^{\prime}D+DE^{\prime}+E^{\prime2}}U^{\top}}\\
	&\le \spnorm{U}^2\spnorm{E^{\prime}D+DE^{\prime}+E^{\prime2}}\\
	&\le \spnorm{U}^2\p{2\spnorm{D}\spnorm{E^\prime}+\spnorm{E^{\prime}}^2}\\
	&\le B^2\p{2\lambda_{\max}\p{\frac{1}{\sqrt{1-C}}-1}+\p{\frac{1}{\sqrt{1-C}}-1}^2}\\
	&=\epsilon_3.
\end{align*}
Estimating the spectrum $ \text{diag}\p{D} $ of $ A $ using the spectrum of $ \bar{A} $ yields an approximation error of
\begin{align*}
	\spnorm{A-\bar{A}}
	&= \spnorm{A-A^{\prime}+A^{\prime}-A^{\prime\prime}+A^{\prime\prime}-\bar{A}}\\
	&\le \spnorm{A-A^{\prime}}+\spnorm{A^{\prime}-A^{\prime\prime}}+\spnorm{A^{\prime\prime}-\bar{A}}\\
	&\le \epsilon_1 + \epsilon_2 + \epsilon_3,
\end{align*}
where in the last inequality we used the fact that the Frobenius norm upper bounds the spectral norm, which also proves that $ C $ is an upper bound on $ \spnorm{E^{\prime}} $.
Verifying the upper bound given by $ B $, we compute
\[
	\spnorm{U} = \spnorm{U-I+I} \le \spnorm{U-I}+\spnorm{I} \le 1+\norm{U-I}_F.
\]
Whenever $ U $ is close to unity, this provides a sharper upper bound than taking $ C=\norm{U}_F $.

Finally, applying Weyl's inequality (\thmref{thm:weyl}) to $ A $ and $ \bar{A} 
$, we have that the spectra of the two cannot deviate by more than $ 
\epsilon_1+\epsilon_2+\epsilon_3 $, concluding the proof of the algorithm.

\subsubsection{Upper Bound on Remainder Term $ R_{\weights,\direction} 
$}\label{sec:remainder_bound}

In a nutshell, to derive an upper bound $ L $ on the third order term in 
\eqref{eq:taylor}, we show that the second order term in any direction is $ L 
$-Lipschitz. Recalling that the purpose of this upper bound is to provide the 
radius of the ball enclosing a minimum in the vicinity of $ \weights $ (see 
\lemref{lem:key}), we observe, however, that \lemref{lem:hess_sp_norm_bound} 
suggests $ L $ depends on the norm each neuron attains inside the ball, and 
therefore also on the radius of the ball enclosing the minimum. To circumvent 
this circular dependence between the radius and the third order bound, we first 
fixed the radius around $ \weights $ where we bound the third order 
term\footnote{specifically, the radius was chosen to be a $ 10^{-3} $ fraction 
of $ \max_{i\in\pcc{n}}\norm{\bw_i}_2 $. Testing 
this value, we observed that restricting the radius further only slightly 
improved the bound}, and then checked whether the resulting radius enclosing 
the ball is smaller than the one used for the bound, thus validating the 
result.

In what follows, the ball where the third order bound is derived on is referred to as some compact subset of the weight space $ A\subseteq\reals^{kn} $. We now define some notation that will be used throughout the rest of this section. Given $ A $, define
\[
	\bw_{\min}=\min_{\weights\in A}\min_{i\in\pcc{n}}\norm{\bw_i}_2,
\]
\[
	\bw_{\max}=\max_{\weights\in A}\max_{i\in\pcc{n}}\norm{\bw_i}_2.
\]
That is, $ \bw_{\min} $ and $ \bw_{\max} $ are the neurons with minimal and maximal norm among all possible network weights in the set $ A $, respectively. Similarly, defining $ \bv_{\max} $ to be the target parameter vector with maximal $ 2 $-norm, the necessary bound is now given by the following theorem:


\begin{theorem}\label{thm:third_derivative_ubound}
	Suppose $ \nabla^2 F\p{\cdot} $ is differentiable on $ A\subseteq\reals^{kn} $. Then
	\[
	\sup_{\substack{\weights\in A\\\bu:\norm{\bu}_2=1}}\sum_{i_1,i_2,i_3}\frac{\partial^3}{\partial w_{i_1}\partial w_{i_2}\partial w_{i_3}}F\p{\weights}u_{i_1}u_{i_2}u_{i_3} \le L_A,
	\]
	where
	\[
	L_A\coloneqq\frac{n}{\pi\norm{\bw_{\min}}^2} \p{ \sqrt{2}\p{n-1}\p{\norm{\bw_{\max}}+\norm{\bw_{\min}}} + k\norm{\bv_{\max}}}.
	\]
\end{theorem}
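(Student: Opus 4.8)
The plan is to identify the contracted third derivative with the $t$-derivative of the Hessian's quadratic form along the direction $\bu$, and then to control it by a spectral-norm (Lipschitz-type) estimate on the Hessian map. Fix $\weights\in A$ and a unit vector $\bu=\p{\bu_1,\dots,\bu_n}$ with $\bu_i\in\reals^k$. Since $\bu$ is held fixed, differentiating $t\mapsto\bu^{\top}\nabla^2F\p{\weights+t\bu}\bu$ at $t=0$ yields precisely $\sum_{i_1,i_2,i_3}\frac{\partial^3}{\partial w_{i_1}\partial w_{i_2}\partial w_{i_3}}F\p{\weights}u_{i_1}u_{i_2}u_{i_3}$, so this quantity equals $\bu^{\top}M\bu$ where $M\coloneqq\frac{d}{dt}\big|_{t=0}\nabla^2F\p{\weights+t\bu}$; by $\norm{\bu}=1$ it is therefore at most $\spnorm{M}$ (no absolute value is needed, which is exactly what the statement asks for). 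Differentiability of $\nabla^2F\p{\cdot}$ on $A$, which is assumed, guarantees $M$ exists. It thus suffices to show $\spnorm{M}\le L_A$ for every $\weights\in A$ and every unit $\bu$.

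To do this, I would differentiate the closed form \eqref{eq:obj_hessian} of $\nabla^2F$ block by block. The constant $\frac12\bI$ contributes nothing, so $M$ is a sum of $n\p{n-1}$ terms $\frac{d}{dt}\big|_{t=0}\tilde{h}_1\p{\bw_i+t\bu_i,\bw_j+t\bu_j}$, $nk$ terms $\frac{d}{dt}\big|_{t=0}\tilde{h}_1\p{\bw_i+t\bu_i,\bv_j}$, and $n\p{n-1}$ terms $\frac{d}{dt}\big|_{t=0}\tilde{h}_2\p{\bw_i+t\bu_i,\bw_j+t\bu_j}$. By the triangle inequality for $\spnorm{\cdot}$, $\spnorm{M}$ is at most the sum of the spectral norms of these matrices, and since each is supported on a single $k\times k$ block, the chain rule bounds its spectral norm by $\norm{\bu_i}$ times the operator norm of the partial of $h_1$ (resp.\ $h_2$) in its first argument plus $\norm{\bu_j}$ times the operator norm of the partial in its second argument, where ``operator norm'' here refers to a linear map $\reals^k\to\reals^{k\times k}$ with the target carrying the spectral norm; for the $\tilde{h}_1\p{\bw_i+t\bu_i,\bv_j}$ terms only the first-argument part appears.

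The technical core is an auxiliary lemma — a quantitative form of \lemref{lem:hess_sp_norm_bound} — bounding the operator norms of the partial derivatives of $h_1\p{\bw,\bv}$ and $h_2\p{\bw,\bv}$ by explicit functions of $\norm{\bw},\norm{\bv}$. From the closed forms of $h_1$ and $h_2$ one differentiates their constituents: the angle $\theta_{\bw,\bv}$ (whose gradients in $\bw$ and $\bv$ scale like $1/\norm{\bw}$ and $1/\norm{\bv}$), the unit vectors $\bar{\bw},\bar{\bv}$, and the unit vector $\bar{\bn}_{\bv,\bw}$ from \eqref{eq:bn_def} (with derivatives of order $1/\norm{\bw}$ up to bounded factors). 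Since $h_1$ already carries the prefactor $\norm{\bv}/\norm{\bw}$, its $\bw$-derivative is of order $\norm{\bv}/\norm{\bw}^2$ and its $\bv$-derivative of order $1/\norm{\bw}$, whereas both partials of $h_2$ are of order $1/\norm{\bw}$ (resp.\ $1/\norm{\bv}$), all multiplied by $\frac{1}{2\pi}$ and bounded trigonometric factors. I expect this to be the main obstacle: the calculations are lengthy, and the expressions are singular wherever a neuron vanishes or two of the involved vectors become parallel (so $\bar{\bn}$ is undefined) — which is exactly why the statement assumes differentiability on $A$, and why $L_A$ blows up like $1/\norm{\bw_{\min}}^2$.

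Finally I would assemble $L_A$: insert the uniform estimates $\norm{\bw_{\min}}\le\norm{\bw_i}\le\norm{\bw_{\max}}$ and $\norm{\bv_j}\le\norm{\bv_{\max}}$ into the per-block bounds, use $\norm{\bu_i}\le\norm{\bu}=1$ and $\norm{\bu_i}+\norm{\bu_j}\le\sqrt2$ to remove the $\bu$-dependence, and count. The $n\p{n-1}$ ordered pairs coming from the $h_1\p{\bw_i,\bw_j}$ and $h_2\p{\bw_i,\bw_j}$ families produce, after putting the $\norm{\bw_j}/\norm{\bw_i}^2$- and $1/\norm{\bw_i}$-type bounds over the common denominator $\norm{\bw_{\min}}^2$, the term $\sqrt2\p{n-1}\p{\norm{\bw_{\max}}+\norm{\bw_{\min}}}$; the $nk$ pairs from $h_1\p{\bw_i,\bv_j}$ produce the term $k\norm{\bv_{\max}}$; and everything carries the common factor $\frac{n}{\pi\norm{\bw_{\min}}^2}$, giving $\spnorm{M}\le L_A$. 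Granting the auxiliary lemma, this reduction and the final bookkeeping are routine.
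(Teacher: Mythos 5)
Your reduction mirrors the paper's: both approaches bound the contracted third derivative by the rate of change of $\nabla^2 F$, then split block-wise via the triangle inequality for $\spnorm{\cdot}$, then bound each $h_1/h_2$ block. The difference is in what is differentiated at the block level, and this is where a genuine gap opens. You propose to bound the operator norm of $D_\bw h_1$ viewed as a linear map $\reals^k\to(\reals^{k\times k},\spnorm{\cdot})$, which is the \emph{correct} quantity (it is exactly what controls $\spnorm{h_1(\bw)-h_1(\bw')}$). The paper's \lemref{lem:h1h2_lipschitz} instead computes the gradient of the scalar function $\bw\mapsto\spnorm{h_1(\bw,\bv)}=\frac{\sin(\theta_{\bw,\bv})\norm{\bv}}{\pi\norm{\bw}}$. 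These are different objects: $h_1=c(\bw,\bv)\,P(\bw,\bv)$ with $c=\frac{\sin\theta\,\norm{\bv}}{2\pi\norm{\bw}}$ scalar and $P=\bI-\bar\bw\bar\bw^\top+\bar{\bn}_{\bv,\bw}\bar{\bn}_{\bv,\bw}^\top$; the paper's computation captures only the contribution of $D c$ (since $\spnorm{h_1}=2c$), while the operator norm of $Dh_1$ also picks up $c\cdot DP$, the derivative of the projection-like factor, which does not show up in $\nabla\spnorm{h_1}$ at all (a rank-one projection rotating at unit speed has constant spectral norm but nonzero derivative). The same remark applies to $h_2$.

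Consequently, your claim that ``the final bookkeeping gives $\spnorm{M}\le L_A$'' is not justified as written. If you carry out the auxiliary lemma the way you propose --- differentiating $\theta$, $\bar\bw$, $\bar\bv$, $\bar{\bn}$ and collecting --- the resulting operator-norm bounds on $D h_1$, $D h_2$ will include additional terms from $D P$ (and from the analogous factor in $h_2$) that are of the same $1/\norm{\bw}^2$ order but carry extra numerical constants. The counting argument (the $n(n-1)$ and $nk$ multiplicities, the $\sqrt 2$ from $\norm{\bu_i}+\norm{\bu_j}\le\sqrt 2$, and the common factor $\frac{n}{\pi\norm{\bw_{\min}}^2}$) is fine and does match the paper's, but the per-block constants you feed into it cannot be assumed to equal $\frac{\norm{\bv}}{\pi\norm{\bw}^2}$, $\frac{\sqrt2\norm{\bw_{\max}}}{\pi\norm{\bw_{\min}}^2}$, and $\frac{\sqrt2}{\pi\norm{\bw_{\min}}}$ — those are the paper's $\nabla\spnorm{h_i}$ numbers, not operator-norm numbers. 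You would either need to actually verify that the worst-case direction in $Dh_1$ makes the two quantities coincide (I do not think it does), or accept a larger $L_A$. In short: you have the right reduction and the right auxiliary lemma to aim for, but the assertion that it reproduces the stated $L_A$ is the missing step.
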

To prove the theorem, we will first need the following two lemmas.

\begin{lemma}\label{lem:h1h2_lipschitz}
	Suppose $ \nabla^2F\p{\cdot} $ is differentiable on $ A\subseteq\reals^{kn} $. Then
	\begin{itemize}
		\item
		$ h_1\p{\bw,\bv} $ is $ \frac{\norm{\bv_{\max}}}{\pi\norm{\bw_{\min}}^2} $ Lipschitz in $ \bw $ on $ A $.
		\item
		$ h_1\p{\bw_1,\bw_2} $ is $ \frac{\sqrt{2}\norm{\bw_{\max}}}{\pi\norm{\bw_{\min}}^2} $ Lipschitz in $ \p{\bw_1,\bw_2} $ on $ A $.
		\item
		$ h_2\p{\bw_1,\bw_2} $ is $ \frac{\sqrt{2}}{\pi\norm{\bw_{\min}}} $ Lipschitz in $ \p{\bw_1,\bw_2} $ on $ A $.
	\end{itemize}
\end{lemma}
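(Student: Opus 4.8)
The plan is to bound, for each of the three maps, the spectral norm of the relevant derivative (i.e.\ a ``gradient'' in the appropriate vector variable, whose output is a matrix), uniformly over $A$, since a Lipschitz constant in Euclidean norm is obtained by taking the supremum of the operator norm of the derivative along any line segment in the (convex enough) domain, and $A$ can be taken to be a ball. Concretely, I would write each of $h_1(\bw,\bv)$, $h_1(\bw_1,\bw_2)$, $h_2(\bw_1,\bw_2)$ using the closed forms in \subsecref{subsec:closedform}, namely $h_1(\bw,\bv)=\frac{\sin(\theta_{\bw,\bv})\norm{\bv}}{2\pi\norm{\bw}}(\bI-\bar{\bw}\bar{\bw}^\top+\bar{\bn}_{\bv,\bw}\bar{\bn}_{\bv,\bw}^\top)$ and $h_2(\bw_1,\bw_2)=\frac{1}{2\pi}((\pi-\theta_{\bw_1,\bw_2})\bI+\bar{\bn}_{\bw_1,\bw_2}\bar{\bw}_2^\top+\bar{\bn}_{\bw_2,\bw_1}\bar{\bw}_1^\top)$, and differentiate.

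The key intermediate facts I would establish, in order, are: (i) the scalar/vector quantities appearing have controlled derivatives — e.g.\ $\nabla_\bw\theta_{\bw,\bv}=-\frac{1}{\norm{\bw}}\bar{\bn}_{\bv,\bw}$ (so $\norm{\nabla_\bw\theta_{\bw,\bv}}\le 1/\norm{\bw}$), $\nabla_\bw\bar{\bw}=\frac{1}{\norm{\bw}}(\bI-\bar{\bw}\bar{\bw}^\top)$ has spectral norm $1/\norm{\bw}$, $\nabla_\bw(1/\norm{\bw})$ has norm $1/\norm{\bw}^2$, and the unit matrices $\bI-\bar{\bw}\bar{\bw}^\top$, $\bar{\bn}\bar{\bn}^\top$ have spectral norm $\le 1$; (ii) a product/chain rule bookkeeping: the spectral norm of the derivative of a product of (possibly matrix-valued) factors is at most the sum over factors of (norm of that factor's derivative) times (product of norms of the others); (iii) assembling these with $|\sin\theta|\le 1$, $|\pi-\theta|\le\pi$, $\norm{\bv}\le\norm{\bv_{\max}}$, $\norm{\bw}\ge\norm{\bw_{\min}}$ on $A$. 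For the first bullet, differentiation in $\bw$ of $h_1(\bw,\bv)$ produces terms scaling like $\frac{\norm{\bv}}{\pi\norm{\bw}^2}$ (from differentiating $\sin\theta$, $1/\norm{\bw}$, $\bar{\bw}$, and $\bar{\bn}_{\bv,\bw}$), which collapse to the claimed $\norm{\bv_{\max}}/(\pi\norm{\bw_{\min}}^2)$ after the cancellations are tracked carefully. For the second bullet, $h_1(\bw_1,\bw_2)$ is differentiated in the joint variable $(\bw_1,\bw_2)$; the extra dependence through the second slot contributes, and the $\sqrt{2}$ arises from combining the two partial-gradient blocks (since $\norm{(a,b)}\le\sqrt{2}\max(\norm a,\norm b)$-type bound, or rather $\sqrt{\norm a^2+\norm b^2}$ with each block bounded by $\norm{\bw_{\max}}/(\pi\norm{\bw_{\min}}^2)$ up to constants). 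The third bullet is the cleanest: $h_2$ has no $1/\norm{\bw}^2$ factor, only a single $1/\norm{\bw}$ from differentiating the unit vectors and $\theta$, and the $\sqrt2/(\pi\norm{\bw_{\min}})$ again comes from the two-block combination.

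I would then finish by invoking the elementary fact that a $C^1$ matrix-valued map on a convex set whose derivative has operator norm $\le L$ everywhere is $L$-Lipschitz (mean value inequality along segments), so the uniform spectral-norm bounds on the derivatives immediately yield the stated Lipschitz constants; differentiability on $A$ is exactly the hypothesis $\nabla^2 F$ differentiable on $A$, which (given the closed forms) amounts to $A$ avoiding the singular loci $\bw_i=\mathbf 0$ and $\bw_i\parallel\bw_j$ or $\bw_i\parallel\bv_j$, ensuring all the $\bar{\bn}$ vectors and angles are smooth there.

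The main obstacle I expect is the bookkeeping in (ii)--(iii): the derivative of $\bar{\bn}_{\bv,\bw}$ with respect to $\bw$ is the messiest piece (it involves $\bn_{\bv,\bw}=\bar{\bv}-\cos\theta_{\bv,\bw}\bar{\bw}$, so one must differentiate $\cos\theta$, $\bar{\bw}$, and then the normalization $\bn\mapsto\bn/\norm{\bn}$, and crucially one needs $\norm{\bn_{\bv,\bw}}=\sin\theta_{\bv,\bw}$ so that factors of $1/\sin\theta$ cancel against the $\sin\theta$ in front of $h_1$ — this cancellation is what keeps the bound from blowing up when $\bw,\bv$ are nearly aligned, and getting it exactly right is delicate). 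Everything else is a routine, if tedious, exercise in matrix calculus and triangle inequalities.
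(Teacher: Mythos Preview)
Your plan is sound but takes a genuinely different route from the paper. The paper does \emph{not} differentiate the matrix-valued maps $h_1,h_2$ at all; instead it invokes \lemref{lem:hess_sp_norm_bound}, which gives the closed forms $\spnorm{h_1(\bw,\bv)}=\frac{\sin(\theta_{\bw,\bv})\norm{\bv}}{\pi\norm{\bw}}$ and $\spnorm{h_2(\bw,\bv)}=\frac{1}{2\pi}(\pi-\theta_{\bw,\bv}+\sin\theta_{\bw,\bv})$, and then differentiates these \emph{scalar} functions in $\bw$ and $\bv$. The gradient norms come out exactly (e.g.\ $\norm{\nabla_\bw\spnorm{h_1}}=\frac{\norm{\bv}}{\pi\norm{\bw}^2}$ on the nose, using that $\bar{\bw}\perp\bar{\bn}_{\bv,\bw}$), and the $\sqrt{2}$ arises simply as $\sqrt{\norm{\nabla_{\bw_1}}^2+\norm{\nabla_{\bw_2}}^2}$ after bounding each block. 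This is far less bookkeeping than your plan: no third-order tensors, no differentiation of $\bar{\bn}_{\bv,\bw}$, no delicate $1/\sin\theta$ cancellations. Conversely, your route would establish Lipschitzness of $h_1,h_2$ as matrix-valued maps in spectral norm, which is what \lemref{lem:hess_lipschitz} actually consumes; the paper's argument, read literally, only shows that the scalars $\spnorm{h_1},\spnorm{h_2}$ are Lipschitz, a strictly weaker conclusion. One caution about your plan: term-by-term product-rule bounds on the full matrix derivative will generically give a constant larger than the one stated (the scalar-spectral-norm Lipschitz constant is always a lower bound for the matrix one), so your assertion that everything ``collapses to the claimed'' constants after cancellations is optimistic and would need to be verified carefully---it is not obvious that it holds without essentially recovering the eigenstructure of \lemref{lem:hess_sp_norm_bound}.
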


\begin{proof}
	We begin with computing some useful derivatives:
	\[
	\frac{\partial}{\partial \bw} \cos\p{\theta_{\bw,\bv}}
	= \frac{\partial}{\partial \bw} \frac{\bw^{\top}\bv}{\norm{\bw}\norm{\bv}} 
	= \frac{\bv}{\norm{\bw}\norm{\bv}} - 
	\frac{\bw}{\norm{\bw}^2}\frac{\bw^{\top}\bv}{\norm{\bw}\norm{\bv}} 
	= \frac{\bn_{\bv,\bw}}{\norm{\bw}}~.
	\]
	
	\begin{align*}
	\frac{\partial}{\partial \bw} \sin\p{\theta_{\bw,\bv}}
	&= \frac{\partial}{\partial \bw} 
	\sqrt{1-\p{\frac{\bw^{\top}\bv}{\norm{\bw}\norm{\bv}}}^2} 
	= 
	\p{-\frac{\frac{\bw^{\top}\bv}{\norm{\bw}\norm{\bv}}}{\sqrt{1-\p{\frac{\bw^{\top}\bv}{\norm{\bw}\norm{\bv}}}^2}}}
	 \frac{\bn_{\bv,\bw}}{\norm{\bw}} \\
	&= 
	-\frac{\cos\p{\theta_{\bw,\bv}}}{\norm{\bw}\sin\p{\theta_{\bw,\bv}}}\bn_{\bv,\bw}
	= -\frac{\cos\p{\theta_{\bw,\bv}}}{\norm{\bw}}\bar{\bn}_{\bv,\bw}.
	\end{align*}
	
	\begin{align*}
	\frac{\partial}{\partial \bw} \theta_{\bw,\bv}
	&= \frac{\partial}{\partial \bw} 
	\arccos\p{\frac{\bw^{\top}\bv}{\norm{\bw}\norm{\bv}}} 
	= -\frac{1}{\sqrt{1-\p{\frac{\bw^{\top}\bv}{\norm{\bw}\norm{\bv}}}^2}} 
	\frac{\bn_{\bv,\bw}}{\norm{\bw}} 
	= -\frac{\bar{\bn}_{\bv,\bw}}{\norm{\bw}}.
	\end{align*}
	Now, differentiating the spectral norms of $ h_1 $ and $ h_2 $ using \lemref{lem:hess_sp_norm_bound} yields
	\begin{align*}
	\frac{\partial}{\partial \bw} \spnorm{h_1\p{\bw,\bv}}
	&= \frac{\partial}{\partial \bw} \frac{\sin\p{\theta_{\bw,\bv}}\norm{\bv}}{\pi\norm{\bw}} \\
	&= -\frac{\cos\p{\theta_{\bw,\bv}}\norm{\bv}}{\pi\norm{\bw}^2}\bar{\bn}_{\bv,\bw} + \frac{\sin\p{\theta_{\bw,\bv}}\norm{\bv}}{\pi\norm{\bw}^2}\bar{\bw}\\
	&= \frac{\norm{\bv}}{\pi\norm{\bw}^2} \p{\sin\p{\theta_{\bw,\bv}}\bar{\bw} - \cos\p{\theta_{\bw,\bv}}\bar{\bn}_{\bv,\bw}}, \\
	\end{align*}
	therefore
	\begin{align*}
	&\norm{\frac{\partial}{\partial \bw} \spnorm{h_1\p{\bw,\bv}}}_2 \\
	=& \norm{\frac{\norm{\bv}}{\pi\norm{\bw}^2} \p{\sin\p{\theta_{\bw,\bv}}\bar{\bw} - \cos\p{\theta_{\bw,\bv}}\bar{\bn}_{\bv,\bw}}}_2 \\
	=& \frac{\norm{\bv}}{\pi\norm{\bw}^2}\sqrt{ \p{\sin\p{\theta_{\bw,\bv}}\bar{\bw} - \cos\p{\theta_{\bw,\bv}}\bar{\bn}_{\bv,\bw}}^{\top} \p{\sin\p{\theta_{\bw,\bv}}\bar{\bw} - \cos\p{\theta_{\bw,\bv}}\bar{\bn}_{\bv,\bw}}} \\
	=& \frac{\norm{\bv}}{\pi\norm{\bw}^2}\sqrt{ \sin^2\p{\theta_{\bw,\bv}}\norm{\bar{\bw}}^2 + \cos^2\p{\theta_{\bw,\bv}}\norm{\bar{\bn}_{\bv,\bw}}^2} \\
	=& \frac{\norm{\bv}}{\pi\norm{\bw}^2}.
	\end{align*}
	Next, differentiating with respect to $ \bv $ gives
	\begin{align*}
	\frac{\partial}{\partial \bv} \spnorm{h_1\p{\bw,\bv}}
	&= \frac{\partial}{\partial \bv} \frac{\sin\p{\theta_{\bw,\bv}}\norm{\bv}}{\pi\norm{\bw}} \\
	&= -\frac{\cos\p{\theta_{\bw,\bv}}\norm{\bv}}{\pi\norm{\bw}\norm{\bv}}\bar{\bn}_{\bw,\bv} + \frac{\sin\p{\theta_{\bw,\bv}}}{\pi\norm{\bw}}\bar{\bv} \\
	&= \frac{1}{\pi\norm{\bw}} \p{\sin\p{\theta_{\bw,\bv}}\bar{\bv} - \cos\p{\theta_{\bw,\bv}}\bar{\bn}_{\bw,\bv}},
	\end{align*}
	so
	\begin{equation*}
	\norm{\frac{\partial}{\partial \bv} \spnorm{h_1\p{\bw,\bv}}}_2
	= \frac{1}{\pi\norm{\bw}} \p{\sin\p{\theta_{\bw,\bv}}\bar{\bv} - \cos\p{\theta_{\bw,\bv}}\bar{\bn}_{\bw,\bv}}
	= \frac{1}{\pi\norm{\bw}}.
	\end{equation*}
	Concluding the derivation for the spectral norm of the gradient of $ h_1 $ we get
	\begin{equation}\label{eq:h1_spnorm_grad_bound}
	\norm{\frac{\partial}{\partial \p{\bw,\bv}} \spnorm{h_1\p{\bw,\bv}}}_2
	= \sqrt{\p{\frac{1}{\pi\norm{\bw}}}^2 + \p{\frac{\norm{\bv}}{\pi\norm{\bw}^2}}^2}
	= \frac{1}{\pi\norm{\bw}^2}\sqrt{\norm{\bw}^2+\norm{\bv}^2}.
	\end{equation}
	Similarly, for $ h_2 $ we have
	\begin{align*}
	\frac{\partial}{\partial \bw} \spnorm{h_2\p{\bw,\bv}}
	&= \frac{\partial}{\partial \bw} \frac{1}{2\pi}\p{\pi-\theta_{\bw,\bv}+\sin\p{\theta_{\bw,\bv}}} \\
	&= \frac{1}{2\pi}\p{\frac{\bar{\bn}_{\bw,\bv}}{\norm{\bw}} - \frac{\cos\p{\theta_{\bw,\bv}}}{\norm{\bw}}\bar{\bn}_{\bv,\bw}} \\
	&= \frac{1-\cos\p{\theta_{\bw,\bv}}}{2\pi\norm{\bw}}\bar{\bn}_{\bv,\bw},
	\end{align*}
	thus
	\begin{align*}
	\norm{\frac{\partial}{\partial \bw} \spnorm{h_2\p{\bw,\bv}}}_2
	&= 
	\norm{\frac{1-\cos\p{\theta_{\bw,\bv}}}{2\pi\norm{\bw}}\bar{\bn}_{\bv,\bw}}_2
	= \frac{1-\cos\p{\theta_{\bw,\bv}}}{2\pi\norm{\bw}}
	\le \frac{1}{\pi\norm{\bw}}.
	\end{align*}
	For the gradient with respect to $ \bv $ we have
	\begin{align*}
	\frac{\partial}{\partial \bv} \spnorm{h_2\p{\bw,\bv}}
	&= \frac{\partial}{\partial \bv} \frac{1}{2\pi}\p{\pi-\theta_{\bw,\bv}+\sin\p{\theta_{\bw,\bv}}} \\
	&= \frac{1}{2\pi}\p{\frac{\bar{\bn}_{\bw,\bv}}{\norm{\bv}} - \frac{\cos\p{\theta_{\bw,\bv}}}{\norm{\bv}}\bar{\bn}_{\bw,\bv}} \\
	&= \frac{1-\cos\p{\theta_{\bw,\bv}}}{2\pi\norm{\bv}}\bar{\bn}_{\bw,\bv},
	\end{align*}
	which implies
	\[
	\norm{\frac{\partial}{\partial \bv} \spnorm{h_2\p{\bw,\bv}}}_2
	= 
	\norm{\frac{1-\cos\p{\theta_{\bw,\bv}}}{2\pi\norm{\bv}}\bar{\bn}_{\bw,\bv}}_2
	 \\
	= \frac{1-\cos\p{\theta_{\bw,\bv}}}{2\pi\norm{\bv}}.
	\]
	Concluding the derivation for the spectral norm of the gradient of $ h_2 $ we get
	\begin{align}\label{eq:h2_spnorm_grad_bound}
	\norm{\frac{\partial}{\partial \p{\bw,\bv}} \spnorm{h_2\p{\bw,\bv}}}_2
	&= \sqrt{\p{\frac{1-\cos\p{\theta_{\bw,\bv}}}{2\pi\norm{\bw}}}^2+\p{\frac{1-\cos\p{\theta_{\bw,\bv}}}{2\pi\norm{\bv}}}^2} \nonumber\\
	&\le \frac{1}{\pi} \sqrt{\frac{1}{\norm{\bw}^2}+\frac{1}{\norm{\bv}^2}}.
	\end{align}
	Finally, since a differentiable function is $ L $-Lipschitz if and only if its gradient's $ 2 $-norm is bounded by $ L $, the lemma follows from substituting $ \bw_{\min},\bw_{\max},\bv_{\max} $ in \eqref{eq:h1_spnorm_grad_bound} and \eqref{eq:h2_spnorm_grad_bound}.
\end{proof}

\begin{lemma}\label{lem:hess_lipschitz}
	Suppose $ \nabla^2 F\p{\cdot} $ is differentiable on $ A\subseteq\reals^{kn} $. Then $ \spnorm{\nabla^2 F\p{\cdot}} $ is $ L_A $-Lipschitz in $ \weights $ on $ A $.
\end{lemma}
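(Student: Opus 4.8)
The plan is to prove the slightly stronger statement that the map $\weights\mapsto\nabla^2F\p{\weights}$ is $L_A$-Lipschitz with respect to the spectral norm on its codomain; the lemma as stated then follows at once from the reverse triangle inequality $\abs{\spnorm{\nabla^2F\p{\weights}}-\spnorm{\nabla^2F\p{\weightsprime}}}\le\spnorm{\nabla^2F\p{\weights}-\nabla^2F\p{\weightsprime}}$. So I would fix $\weights,\weightsprime\in A$ and aim to show $\spnorm{\nabla^2F\p{\weights}-\nabla^2F\p{\weightsprime}}\le L_A\norm{\weights-\weightsprime}$. The first step is to invoke the closed form \eqref{eq:obj_hessian}: the constant term $\frac12\bI$ cancels in the difference, so $\nabla^2F\p{\weights}-\nabla^2F\p{\weightsprime}$ is a sum of the block-embedded differences $\tilde h_1\p{\bw_i,\bw_j}-\tilde h_1\p{\bw_i',\bw_j'}$ over the $n\p{n-1}$ ordered pairs $i\neq j$, of $-\p{\tilde h_1\p{\bw_i,\bv_j}-\tilde h_1\p{\bw_i',\bv_j}}$ over the $nk$ indices $i\in\pcc{n},j\in\pcc{k}$, and of $\tilde h_2\p{\bw_i,\bw_j}-\tilde h_2\p{\bw_i',\bw_j'}$ over the $n\p{n-1}$ ordered pairs $i\neq j$ (only the $\bw$-arguments change, since the $\bv_j$ are fixed).

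The next step is the observation that embedding a $k\times k$ matrix into a single diagonal block (as in $\tilde h_1$) or a single off-diagonal block (as in $\tilde h_2$) of a $kn\times kn$ matrix leaves the spectral norm unchanged: for a diagonal block this is just the block-diagonal formula, while for an off-diagonal block $\p{i,j}$ the induced linear map sends $\bz=\p{\bz_1,\dots,\bz_n}$ to the vector with $M\bz_j$ in slot $i$ and zeros elsewhere (where $M$ is the $k\times k$ block), whose norm is at most $\spnorm{M}\norm{\bz_j}\le\spnorm{M}\norm{\bz}$, with equality attained by choosing $\bz$ supported on block $j$ along a top singular vector of $M$. Consequently the three families of summands above have spectral norms equal to $\spnorm{h_1\p{\bw_i,\bw_j}-h_1\p{\bw_i',\bw_j'}}$, $\spnorm{h_1\p{\bw_i,\bv_j}-h_1\p{\bw_i',\bv_j}}$, and $\spnorm{h_2\p{\bw_i,\bw_j}-h_2\p{\bw_i',\bw_j'}}$ respectively, and a single application of the triangle inequality reduces the bound to these $n\p{n-1}+nk+n\p{n-1}$ scalar quantities.

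Finally, I would apply \lemref{lem:h1h2_lipschitz} to each term, bounding the argument displacements $\norm{\p{\bw_i,\bw_j}-\p{\bw_i',\bw_j'}}=\sqrt{\norm{\bw_i-\bw_i'}^2+\norm{\bw_j-\bw_j'}^2}$ and $\norm{\bw_i-\bw_i'}$ by $\norm{\weights-\weightsprime}$. This produces
\[
\spnorm{\nabla^2F\p{\weights}-\nabla^2F\p{\weightsprime}}\le\p{\frac{\sqrt2\,n\p{n-1}\norm{\bw_{\max}}}{\pi\norm{\bw_{\min}}^2}+\frac{nk\norm{\bv_{\max}}}{\pi\norm{\bw_{\min}}^2}+\frac{\sqrt2\,n\p{n-1}}{\pi\norm{\bw_{\min}}}}\norm{\weights-\weightsprime},
\]
and a short simplification shows the parenthesized constant equals $L_A=\frac{n}{\pi\norm{\bw_{\min}}^2}\p{\sqrt2\p{n-1}\p{\norm{\bw_{\max}}+\norm{\bw_{\min}}}+k\norm{\bv_{\max}}}$. (Differentiability of $\nabla^2F$ on $A$, assumed in the statement, is exactly what guarantees $h_1,h_2$ are well-defined there and that \lemref{lem:h1h2_lipschitz} applies on $A$.)

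I do not expect a serious obstacle: the argument is essentially the triangle inequality combined with \lemref{lem:h1h2_lipschitz}. The two points that deserve care are (i) verifying that the block embedding preserves the spectral norm in the off-diagonal case (the diagonal case being immediate), and (ii) the bookkeeping — confirming that the three Lipschitz constants supplied by \lemref{lem:h1h2_lipschitz}, multiplied by the correct term counts $n\p{n-1}$, $nk$, and $n\p{n-1}$, collapse into precisely the stated $L_A$.
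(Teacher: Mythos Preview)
Your proposal is correct and follows essentially the same route as the paper: both expand $\nabla^2F$ via \eqref{eq:obj_hessian}, cancel the $\tfrac12\bI$ term, apply the triangle inequality to the resulting block sums, invoke \lemref{lem:h1h2_lipschitz} term-by-term, and collapse the constants to $L_A$. Your explicit verification that the block embeddings $\tilde h_1,\tilde h_2$ preserve spectral norms is a detail the paper leaves implicit, but otherwise the arguments are the same.
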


\begin{proof}
	Since \lemref{lem:h1h2_lipschitz} implies the Lipschitzness of the spectral norms of $ \tilde{h}_1,\tilde{h}_2 $ in $ \weights\in\reals^{kn} $, we let $ \weights =\p{\bw_1,\dots,\bw_n} $, $ \weightsprime=\p{\bw_1^{\prime},\dots,\bw_n^{\prime}}\in A $, then compute
	\begin{align*}
	& \spnorm{\nabla^2 F\p{\weights} - \nabla^2 F\p{\weightsprime}} \\
	&~~~~~~~= \left|\left|\frac{1}{2}\bI + \sum_{\substack{i,j=1\\i\neq j}}^{n} 
	\tilde{h}_1\p{\bw_i,\bw_j} - \sum_{\substack{i=1,\dots,n\\j=1,\dots,k}} 
	\tilde{h}_1\p{\bw_i,\bv_j} + \sum_{\substack{i,j=1\\i\neq 
	j}}^{n}\tilde{h}_2\p{\bw_i,\bw_j}\right.\right. \\
	&~~~~~~~~~~~~~~~~- \left.\left.\p{\frac{1}{2}\bI + 
	\sum_{\substack{i,j=1\\i\neq 
	j}}^{n} \tilde{h}_1\p{\bw_i^{\prime},\bw_j^{\prime}} - 
	\sum_{\substack{i=1,\dots,n\\j=1,\dots,k}} 
	\tilde{h}_1\p{\bw_i^{\prime},\bv_j} + \sum_{\substack{i,j=1\\i\neq 
	j}}^{n}\tilde{h}_2\p{\bw_i^{\prime},\bw_j^{\prime}}}\right|\right|_{\text{sp}}
	 \\
	&~~~~~~~= \left|\left|\sum_{\substack{i,j=1\\i\neq j}}^{n} 
	\p{\tilde{h}_1\p{\bw_i,\bw_j} - 
	\tilde{h}_1\p{\bw_i^{\prime},\bw_j^{\prime}}} + 
	\sum_{\substack{i=1,\dots,n\\j=1,\dots,k}} \p{\tilde{h}_1\p{\bw_i,\bv_j} - 
	\tilde{h}_1\p{\bw_i^{\prime},\bv_j}} \right.\right. \\
	&~~~~~~~~~~~~~~~~+ \left.\left.\sum_{\substack{i,j=1\\i\neq j}}^{n} 
	\p{\tilde{h}_2\p{\bw_i,\bw_j} - 
	\tilde{h}_2\p{\bw_i^{\prime},\bw_j^{\prime}}}\right|\right|_{\text{sp}} \\
	&~~~~~~~\le \sum_{\substack{i,j=1\\i\neq j}}^{n} 
	\spnorm{\tilde{h}_1\p{\bw_i,\bw_j} - 
	\tilde{h}_1\p{\bw_i^{\prime},\bw_j^{\prime}}} + 
	\sum_{\substack{i=1,\dots,n\\j=1,\dots,k}} 
	\spnorm{\tilde{h}_1\p{\bw_i,\bv_j} - \tilde{h}_1\p{\bw_i^{\prime},\bv_j}} \\
	&~~~~~~~~~~~~~~~~+ \sum_{\substack{i,j=1\\i\neq j}}^{n} 
	\spnorm{\tilde{h}_2\p{\bw_i,\bw_j} - 
	\tilde{h}_2\p{\bw_i^{\prime},\bw_j^{\prime}}} \\
	&~~~~~~~\le \sum_{\substack{i,j=1\\i\neq j}}^{n} 
	\frac{\sqrt{2}\norm{\bw_{\max}}}{\pi\norm{\bw_{\min}}^2}\norm{\weights-\weightsprime}_2
	 + \sum_{\substack{i=1,\dots,n\\j=1,\dots,k}} 
	\frac{\norm{\bv_{\max}}}{\pi\norm{\bw_{\min}}^2}\norm{\weights-\weightsprime}_2 + 
	\sum_{\substack{i,j=1\\i\neq j}}^{n} 
	\frac{\sqrt{2}}{\pi\norm{\bw_{\min}}}\norm{\weights-\weightsprime}_2 \\
	&~~~~~~~= \p{n\p{n-1} 
	\frac{\sqrt{2}\norm{\bw_{\max}}}{\pi\norm{\bw_{\min}}^2} + nk 
	\frac{\norm{\bv_{\max}}}{\pi\norm{\bw_{\min}}^2} + n\p{n-1} 
	\frac{\sqrt{2}}{\pi\norm{\bw_{\min}}}}\norm{\weights-\weightsprime}_2 \\
	&~~~~~~~= \frac{n}{\pi\norm{\bw_{\min}}^2} \p{ 
	\sqrt{2}\p{n-1}\p{\norm{\bw_{\max}}+\norm{\bw_{\min}}} + 
	k\norm{\bv_{\max}}}\norm{\weights-\weightsprime}_2.
	\end{align*}
\end{proof}

\begin{proof}[Proof of \thmref{thm:third_derivative_ubound}]
	Let $ \weights,\weightsprime\in A $. For any $ \bu\in\reals^{kn} $ with $ 
	\norm{\bu}_2=1 $ we have using \lemref{lem:hess_lipschitz}
	\begin{align*}
	&\abs{\bu^{\top}\nabla^2 F\p{\weights}\bu - \bu^{\top}\nabla^2 F\p{\weightsprime}\bu} \\
	=& \abs{\bu^{\top}\p{\nabla^2 F\p{\weights} - \nabla^2 F\p{\weightsprime}}\bu} \\
	\le& \spnorm{\nabla^2 F\p{\weights} - \nabla^2 F\p{\weightsprime}} \\
	\le& L_A\norm{\weights-\weightsprime}_2,
	\end{align*}
	therefore the differentiable on $ A $, $ \reals^{kn}\to \reals $ function $ \weights\mapsto \bu^{\top}\nabla^2 F\p{\weights+t\bu}\bu $ is $ L_A $-Lipschitz for any $ \direction\in\reals^{kn} $, $ \norm{\direction}=1 $, and any $ t $ satisfying $ \weights+t\direction\in A $, hence its derivative on $ A $ is upper bounded by $ L_A $. Namely, we have that
	\[
	\sup_{\substack{\weights\in A\\\bu:\norm{\bu}_2=1}}\sum_{i_1,i_2,i_3}\frac{\partial^3}{\partial w_{i_1}\partial w_{i_2}\partial w_{i_3}}F\p{\weights}u_{i_1}u_{i_2}u_{i_3} \le L_A.
	\]
	
\end{proof}

\subsubsection{Lipschitzness of $ F\p{\weights}$ and Proof of 
\lemref{lem:key2}} 
\label{subsec:objective_lipschitzness}

In this subsection, we turn to proving a Lipschitz bound on the objective in 
\eqref{eq:closedform_obj}, implying \lemref{lem:key2} and showing that the 
local minimum identified in 
\eqref{lem:key} is necessarily non-global. A 
straightforward approach would be to globally upper bound $\norm{\nabla 
F\p{\weights}}$ (excluding the neighborhood of some singular points). However, this approach is quite loose, since it does not take advantage of the fact that 
the gradients $ \nabla F\p{\weights} $ close to our points of interest are very 
small. Instead, we first derive a Lipschitz bound on $ \nabla^2F\p{\weights} 
$, implying that $ \nabla F\p{\weights} $ does not vary too greatly, and 
therefore remains small for any $ \weightsprime $ in the ball enclosing $ 
\weights $, providing a stronger bound than the more naive approach.


\begin{theorem}\label{thm:F_bound}
	Suppose $ F $ is thrice-differentiable on $A\subseteq\reals^{kn} $. Then for any $ \weights,
	\weightsprime\in A $, $$ \abs{F\p{\weightsprime}-F\p{\weights}} \le 
	\norm{\weightsprime-\weights}_2\p{LH 
	\norm{\weightsprime-\weights}_2 + \norm{\nabla F\p{\weights}}_2}, $$ 
	where
	\begin{equation*}
	LH \coloneqq \frac{1}{2} +n\p{n-1} \p{\frac{\norm{\bw_{\max}}}{2\pi\norm{\bw_{\min}}}+\frac{1}{2}} + \frac{nk\norm{\bv_{\max}}}{2\pi\norm{\bw_{\min}}}.
	\end{equation*}
\end{theorem}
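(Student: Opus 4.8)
The plan is to reduce the statement to a uniform bound on the spectral norm of the Hessian over $A$, and then obtain that bound from the closed-form expression \eqref{eq:obj_hessian}. For the reduction, assume (as holds in the intended application, where $A$ is a ball) that the segment joining $\weights$ and $\weightsprime$ lies in $A$, and apply the Taylor expansion \eqref{eq:taylor} truncated at the quadratic term with a Lagrange remainder: for some $\xi$ on that segment, $F\p{\weightsprime}-F\p{\weights} = \nabla F\p{\weights}^{\top}\p{\weightsprime-\weights} + \half\p{\weightsprime-\weights}^{\top}\nabla^2 F\p{\xi}\p{\weightsprime-\weights}$, hence $\abs{F\p{\weightsprime}-F\p{\weights}} \le \norm{\weightsprime-\weights}_2\norm{\nabla F\p{\weights}}_2 + \half\sup_{\weights\in A}\spnorm{\nabla^2 F\p{\weights}}\cdot\norm{\weightsprime-\weights}_2^2$. (Equivalently, one may integrate $\nabla F$ along the segment and use that $\nabla F$ is Lipschitz with constant $\sup_A\spnorm{\nabla^2 F}$.) So it suffices to show that $\half\sup_{\weights\in A}\spnorm{\nabla^2F\p{\weights}}$, or any convenient over-estimate of it, is at most $LH$.

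The substantive step is this Hessian bound. Starting from \eqref{eq:obj_hessian}, apply the triangle inequality for $\spnorm{\cdot}$ over the $n(n-1)$ off-diagonal pairs and the $nk$ cross pairs, using that embedding a $k\times k$ block into a zero $kn\times kn$ matrix leaves the nonzero singular values unchanged, so $\spnorm{\tilde h_1\p{\bw,\bu}}=\spnorm{h_1\p{\bw,\bu}}$ and $\spnorm{\tilde h_2\p{\bw,\bu}}=\spnorm{h_2\p{\bw,\bu}}$. It then remains to bound $\spnorm{h_1}$ and $\spnorm{h_2}$: reading off the explicit formulas and using that $\bar{\bw}\perp\bar{\bn}_{\bv,\bw}$ (and $\bar{\bv}\perp\bar{\bn}_{\bw,\bv}$) are unit vectors, one gets estimates of the form $\spnorm{h_1\p{\bw,\bv}}\le\frac{\norm{\bv}}{\pi\norm{\bw}}$ and $\spnorm{h_2\p{\bw,\bv}}\le\half$ (these are the spectral-norm estimates that also underlie \lemref{lem:h1h2_lipschitz}). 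Substituting $\norm{\bw}\ge\norm{\bw_{\min}}$ everywhere, $\norm{\bw_j}\le\norm{\bw_{\max}}$ in the neuron--neuron terms, $\norm{\bv_j}\le\norm{\bv_{\max}}$ in the neuron--target terms, and adding the $\half$ from the identity block, the resulting sum collects to the quantity $LH$ of the statement (the factor $\half$ from the quadratic Taylor term being absorbed into these constants).

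I expect the only real difficulty to be the constant bookkeeping: tracking which $k\times k$ block of $\nabla^2F$ each $\tilde h_1,\tilde h_2$ summand occupies (the $\tilde h_1$ terms sit on diagonal blocks, the $\tilde h_2$ terms on off-diagonal blocks) and counting the ordered pairs so that the constant comes out exactly as in the definition of $LH$. A secondary point to state explicitly is the convexity / segment-connectedness of $A$ needed to invoke Taylor's theorem between $\weights$ and $\weightsprime$; this is automatic in the use of the theorem, where $A$ is the ball of radius $r$ around the candidate point of \lemref{lem:key}, in which case $\norm{\bw_{\max}}\le\max_i\norm{\bw_i}+r$ and $\norm{\bw_{\min}}\ge\min_i\norm{\bw_i}-r$, and the bound specializes to the right-hand side of \eqref{eq:non_global_cond}, completing the proof of \lemref{lem:key2}.
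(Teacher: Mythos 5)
Your proposal is correct and in fact proves a slightly stronger conclusion, but it takes a genuinely different route from the paper's for the reduction step. The paper applies the mean value theorem to the scalar function $t\mapsto \direction^{\top}\nabla F\p{\weights+t\p{\weightsprime-\weights}}$ with the specific test direction $\direction=\nabla F\p{\weightsprime}+\nabla F\p{\weights}$; after dividing by $\norm{\nabla F\p{\weightsprime}}_2+\norm{\nabla F\p{\weights}}_2$ this yields the pointwise gradient bound $\norm{\nabla F\p{\weightsprime}}_2\le LH\norm{\weightsprime-\weights}_2+\norm{\nabla F\p{\weights}}_2$, which is then treated as a Lipschitz constant for $F$ on $A$. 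You instead expand $F$ to second order with Lagrange (or integral) remainder, reducing the claim immediately to $\half\sup_A\spnorm{\nabla^2 F}\le LH$. Both routes rest on the same supporting fact, the uniform Hessian bound of \lemref{lem:hess_bounded}, which you sketch correctly via the triangle inequality applied to \eqref{eq:obj_hessian} together with the spectral-norm formulas of \lemref{lem:hess_sp_norm_bound} (that, not \lemref{lem:h1h2_lipschitz}, is the lemma you want to cite there, though the latter does use the same formulas). Your route is shorter, avoids the somewhat awkward "point-dependent Lipschitz constant" phrasing, and buys an extra factor of $\half$ on the quadratic term. One bookkeeping point you should spell out rather than wave at: with $\spnorm{h_1\p{\bw,\bv}}\le\frac{\norm{\bv}}{\pi\norm{\bw}}$ and $\spnorm{h_2}\le\half$, the triangle inequality gives $\sup_A\spnorm{\nabla^2 F}\le \half + n\p{n-1}\p{\frac{\norm{\bw_{\max}}}{\pi\norm{\bw_{\min}}}+\half} + \frac{nk\norm{\bv_{\max}}}{\pi\norm{\bw_{\min}}}$, and one must actually check that half of this is $\le LH$ (it is, strictly); this is exactly where the $\half$ from Taylor "absorbs" the $\pi$ versus $2\pi$ in the denominators, so the absorption claim is arithmetic rather than automatic. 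Your remark on segment-connectedness of $A$ is a genuine hypothesis needed for either Taylor or the MVT and applies equally to the paper's argument; it is satisfied in the intended application where $A$ is a ball.
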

To prove the theorem, we will need the following lemma:

\begin{lemma}\label{lem:hess_bounded}
	Suppose $ F $ is thrice-differentiable on $A\subseteq\reals^{kn} $. Then
	$$ \sup_{\weights\in A}\spnorm{\nabla^2F\p{\weights}} \le \frac{1}{2} +n\p{n-1} \p{\frac{\norm{\bw_{\max}}}{2\pi\norm{\bw_{\min}}}+\frac{1}{2}} + \frac{nk\norm{\bv_{\max}}}{2\pi\norm{\bw_{\min}}}. $$
\end{lemma}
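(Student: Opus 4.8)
The plan is to bound $\spnorm{\nabla^{2}F\p{\weights}}$ pointwise on $A$ by a single application of the triangle inequality to the closed-form Hessian \eqref{eq:obj_hessian}, taking care to exploit the block structure of that expression so as not to lose a constant factor. Since $F$ is thrice-differentiable on $A$, \eqref{eq:obj_hessian} is valid there and in particular $\norm{\bw_{\min}}>0$, so every quantity below is well defined. The two inputs I need are pointwise spectral-norm bounds on the building blocks. Recall from the proof of \lemref{lem:h1h2_lipschitz} (itself a consequence of \lemref{lem:hess_sp_norm_bound}, or of the closed forms in \subsecref{subsec:closedform}) that $\spnorm{h_{1}\p{\bw,\bv}}=\frac{\sin\p{\theta_{\bw,\bv}}\norm{\bv}}{\pi\norm{\bw}}$ and $\spnorm{h_{2}\p{\bw,\bv}}=\frac{\pi-\theta_{\bw,\bv}+\sin\p{\theta_{\bw,\bv}}}{2\pi}$; the first identity follows by noting that in $h_{1}\p{\bw,\bv}=\frac{\sin\p{\theta_{\bw,\bv}}\norm{\bv}}{2\pi\norm{\bw}}\p{\bI-\bar{\bw}\bar{\bw}^{\top}+\bar{\bn}_{\bv,\bw}\bar{\bn}_{\bv,\bw}^{\top}}$ the bracketed matrix has eigenvalues $0,2,1,\dots,1$ because $\bar{\bw},\bar{\bn}_{\bv,\bw}$ are orthonormal. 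Hence $\spnorm{h_{1}\p{\bw,\bv}}\le\frac{\norm{\bv}}{\pi\norm{\bw}}$ (using $\sin\le1$) and $\spnorm{h_{2}\p{\bw,\bv}}\le\tfrac12$, the latter because $\theta\mapsto\pi-\theta+\sin\theta$ has derivative $\cos\theta-1\le0$ on $\pcc{0,\pi}$ and equals $\pi$ at $\theta=0$. Evaluated on $A$ this gives $\spnorm{h_{1}\p{\bw_{i},\bw_{j}}}\le\frac{\norm{\bw_{\max}}}{\pi\norm{\bw_{\min}}}$, $\spnorm{h_{1}\p{\bw_{i},\bv_{j}}}\le\frac{\norm{\bv_{\max}}}{\pi\norm{\bw_{\min}}}$, and $\spnorm{h_{2}\p{\bw_{i},\bw_{j}}}\le\tfrac12$.

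Next I would regroup \eqref{eq:obj_hessian} as $\nabla^{2}F\p{\weights}=\tfrac12\bI+D+N$, where $D$ collects all the $\tilde{h}_{1}$ terms and is therefore block-diagonal, its $i$-th $k\times k$ block being $D_{i}=\sum_{j\neq i}h_{1}\p{\bw_{i},\bw_{j}}-\sum_{j\in\pcc{k}}h_{1}\p{\bw_{i},\bv_{j}}$, while $N=\sum_{i\neq j}\tilde{h}_{2}\p{\bw_{i},\bw_{j}}$ collects the off-diagonal $\tilde{h}_{2}$ terms. The point of this grouping is that the spectral norm of a block-diagonal matrix is the maximum, not the sum, of its blocks' spectral norms, so $\spnorm{D}=\max_{i}\spnorm{D_{i}}\le\p{n-1}\frac{\norm{\bw_{\max}}}{\pi\norm{\bw_{\min}}}+k\frac{\norm{\bv_{\max}}}{\pi\norm{\bw_{\min}}}$ by the triangle inequality within a single block; for $N$ the crude triangle inequality already suffices, $\spnorm{N}\le\sum_{i\neq j}\spnorm{h_{2}\p{\bw_{i},\bw_{j}}}\le\tfrac12 n\p{n-1}$ (pairing the $\p{i,j}$ and $\p{j,i}$ blocks would improve the constant, but this is unnecessary). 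Combining through $\spnorm{\nabla^{2}F\p{\weights}}\le\tfrac12+\spnorm{D}+\spnorm{N}$ and bounding $\frac{n-1}{\pi}\le\frac{n\p{n-1}}{2\pi}$ and $\frac{k}{\pi}\le\frac{nk}{2\pi}$ (valid since $n\ge2$, which holds throughout, e.g. $n\ge k\ge6$ in \thmref{thm:main}) yields exactly the claimed estimate; the right-hand side is independent of $\weights$, so taking $\sup_{\weights\in A}$ is immediate.

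The only delicate point — and the only place a careless argument fails — is the handling of the $\tilde{h}_{1}$ terms: there are $n\p{n-1}$ of the $h_{1}\p{\bw_{i},\bw_{j}}$ pieces and $nk$ of the $h_{1}\p{\bw_{i},\bv_{j}}$ pieces, and naively bounding $\spnorm{\nabla^{2}F}$ by the sum of the spectral norms of all individual summands would produce $\frac1{\pi}$-coefficients rather than the claimed $\frac1{2\pi}$-coefficients. Exploiting that these pieces all lie on the $k\times k$ diagonal blocks — equivalently, pairing $\p{i,j}$ with $\p{j,i}$, which occupy distinct diagonal blocks — is what recovers the factor of two. Everything else (the eigenvalue computation for $h_{1}$, the monotonicity of $\pi-\theta+\sin\theta$, and the final arithmetic) is routine; indeed this lemma is a simpler analogue of \lemref{lem:hess_lipschitz}, bounding $\spnorm{\nabla^{2}F}$ itself rather than its variation in $\weights$, and reuses the same spectral-norm identities for $h_{1},h_{2}$.
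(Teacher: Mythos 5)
Your proof is correct, and in fact it repairs a factor-of-two gap that appears to be present in the paper's own argument. The paper's proof of this lemma proceeds by the \emph{naive} triangle inequality, bounding $\spnorm{\nabla^2F}$ by the sum of spectral norms of all $n(n-1)$ terms $\tilde{h}_1\p{\bw_i,\bw_j}$ and all $nk$ terms $\tilde{h}_1\p{\bw_i,\bv_j}$; it then substitutes the bound $\frac{\sin\theta\,\norm{\bv}}{2\pi\norm{\bw}}$ for $\spnorm{\tilde{h}_1}$. But this contradicts \lemref{lem:hess_sp_norm_bound}, which the proof cites and which says $\spnorm{h_1\p{\bw,\bv}} = \frac{\sin\p{\theta_{\bw,\bv}}\norm{\bv}}{\pi\norm{\bw}}$ --- with denominator $\pi$, not $2\pi$. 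Run the naive triangle inequality with the correct value and you get $\frac{1}{\pi}$-coefficients, which overshoot the lemma's stated $\frac{1}{2\pi}$-coefficients.

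Your route avoids this by noticing that the $\tilde{h}_1$ contributions land entirely on the $n$ diagonal $k\times k$ blocks, so one should bound that block-diagonal piece by the \emph{maximum} block norm, $(n-1)\frac{\norm{\bw_{\max}}}{\pi\norm{\bw_{\min}}}+k\frac{\norm{\bv_{\max}}}{\pi\norm{\bw_{\min}}}$, rather than the sum over all $n$ blocks. This is genuinely sharper, and the lemma's stated bound then follows by the crude relaxations $\frac{n-1}{\pi}\le\frac{n(n-1)}{2\pi}$ and $\frac{k}{\pi}\le\frac{nk}{2\pi}$, valid for $n\ge 2$. (For $n=1$ these relaxations fail and the lemma as stated would not follow from your bound; that case never arises in the paper, where $n\ge k\ge 6$, but it is worth flagging that your derivation --- and, implicitly, the lemma --- requires $n\ge 2$.) In short: the paper's stated lemma is true, your argument is a correct and cleaner derivation of it, and the paper's written proof seems to reach the stated constant only by way of an inconsistent application of \lemref{lem:hess_sp_norm_bound}.
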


\begin{proof}
	Recall the Hessian of the objective as defined in \eqref{eq:obj_hessian}. 
	Using \lemref{lem:hess_sp_norm_bound}, the fact that the spectral norms of 
	$ h_1,h_2 $ and $ \tilde{h}_1,\tilde{h}_2 $ are identical, and the fact 
	that $\sin\p{x}\le x $ for any $ x>0 $, we have for any $ \weights\in A $ 
	\begin{align*}
	\spnorm{\nabla^2F\p{\weights}}
	&= \spnorm{\frac{1}{2}\bI + \sum_{\substack{i,j=1\\i\neq j}}^{n} \tilde{h}_1\p{\bw_i,\bw_j} - \sum_{\substack{i=1,\dots,n\\j=1,\dots,k}} \tilde{h}_1\p{\bw_i,\bv_j} + \sum_{\substack{i,j=1\\i\neq j}}^{n}\tilde{h}_2\p{\bw_i,\bw_j}} \\
	&\le \frac{1}{2} +\sum_{\substack{i,j=1\\i\neq j}}^{n} \spnorm{\tilde{h}_1\p{\bw_i,\bw_j}} + \sum_{\substack{i=1,\dots,n\\j=1,\dots,k}} \spnorm{\tilde{h}_1\p{\bw_i,\bv_j}} + \sum_{\substack{i,j=1\\i\neq j}}^{n}\spnorm{\tilde{h}_2\p{\bw_i,\bw_j}} \\
	&\le \frac{1}{2} +\sum_{\substack{i,j=1\\i\neq j}}^{n} \frac{\sin\p{\theta_{\bw_i,\bw_j}}\norm{\bw_j}}{2\pi\norm{\bw_i}} + \sum_{\substack{i=1,\dots,n\\j=1,\dots,k}} \frac{\sin\p{\theta_{\bw_i,\bv_j}}\norm{\bv_j}}{2\pi\norm{\bw_i}} \\
	&~~~~~~~+ \sum_{\substack{i,j=1\\i\neq j}}^{n}\frac{1}{2\pi}\p{\pi-\theta_{\bw_i,\bw_j}+\sin\p{\theta_{\bw_i,\bw_j}}} \\
	&\le \frac{1}{2} +\sum_{\substack{i,j=1\\i\neq j}}^{n} \frac{\norm{\bw_{\max}}}{2\pi\norm{\bw_{\min}}} + \sum_{\substack{i=1,\dots,n\\j=1,\dots,k}} \frac{\norm{\bv_{\max}}}{2\pi\norm{\bw_{\min}}}
	+ \sum_{\substack{i,j=1\\i\neq j}}^{n}\frac{1}{2\pi}\pi \\
	&\le \frac{1}{2} +n\p{n-1} \p{\frac{\norm{\bw_{\max}}}{2\pi\norm{\bw_{\min}}}+\frac{1}{2}} + \frac{nk\norm{\bv_{\max}}}{2\pi\norm{\bw_{\min}}}.
	\end{align*}
\end{proof}

\begin{proof}[Proof of \thmref{thm:F_bound}]
	For some $ \direction \in\reals^{kn} $, consider the function $ 
	g_{\direction}\p{t}=\direction^{\top}\nabla F\p{\weights + 
	t\p{\weightsprime-\weights}} $
	
	Since $ F $ is thrice-differentiable, we have from the mean value theorem that there exists some $ t_{\direction} $ such that 
	\begin{align*}
	\direction^{\top}\p{\nabla F\p{\weightsprime} - \nabla F\p{\weights}} &= \frac{g_{\direction}\p{1}-g_{\direction}\p{0}}{1-0} \\
	&= g^{\prime}_{\direction}\p{t_{\direction}} \\
	&= \direction^{\top}\nabla^2F\p{\weights + t_{\direction}\p{\weightsprime-\weights}} \p{\weightsprime-\weights}.
	\end{align*}
	Taking $ \direction =\nabla F\p{\weightsprime} + \nabla F\p{\weights} $ and recalling that from \lemref{lem:hess_bounded} we have that $ \sup_{\weightsprime\in A}\spnorm{\nabla^2F\p{\weightsprime}} $ is bounded by $ LH $, we get
	\begin{align*}
	& \norm{\nabla F\p{\weightsprime}}_2^2 - \norm{\nabla F\p{\weights}}_2^2 \\
	=& \p{\nabla F\p{\weightsprime} + \nabla F\p{\weights}}^{\top}\p{\nabla F\p{\weightsprime} - \nabla F\p{\weights}} \\
	=& \p{\nabla F\p{\weightsprime} + \nabla F\p{\weights}}^{\top}\nabla^2F\p{\weights + t_{\direction}\p{\weightsprime-\weights}} \p{\weightsprime-\weights} \\
	\le& \norm{\nabla F\p{\weightsprime} + \nabla F\p{\weights}}_2 \spnorm{\nabla^2F\p{\weights + t_{\direction}\p{\weightsprime-\weights}}}\norm{\weightsprime-\weights}_2 \\
	\le& \p{\norm{\nabla F\p{\weightsprime}}_2 + \norm{\nabla F\p{\weights}}_2} LH \norm{\weightsprime-\weights}_2.
	\end{align*}
	Dividing by $ \norm{\nabla F\p{\weightsprime}}_2 + \norm{\nabla F\p{\weights}}_2 $ and rearranging yields
	\begin{equation*}
	\norm{\nabla F\p{\weightsprime}}_2 \le LH \norm{\weightsprime-\weights}_2 + \norm{\nabla F\p{\weights}}_2.
	\end{equation*}
	That is, the target function $ F $ is $ \p{LH \norm{\weightsprime-\weights}_2 + \norm{\nabla F\p{\weights}}_2} $-Lipschitz on $ A $, thus
	\begin{equation*}
	\abs{F\p{\weightsprime}-F\p{\weights}} \le \norm{\weightsprime-\weights}_2\p{LH \norm{\weightsprime-\weights}_2 + \norm{\nabla F\p{\weights}}_2}.
	\end{equation*}
\end{proof}
\begin{proof}[Proof of \lemref{lem:key2}]
	For $ A $ which is a ball of radius $ r $ centered at $ \weights =\p{\bw_1,\dots,\bw_n} $, we have that $ \norm{\bw_{\max}} = \max_{i}\norm{\bw_i} + r $ as well as $ \norm{\bw_{\min}} = \min_{i}\norm{\bw_i} - r $. Plugging this in \thmref{thm:F_bound} and substituting $ \norm{\weightsprime-\weights}_2 \le r $ completes the proof of the lemma.
\end{proof}

\subsection{Proof of Corollary \ref{cor:main}}\label{subsec:cor_proof}
	
To show the first part of Corollary \ref{cor:main}, we will use the following lemma:
\begin{lemma}\label{lem:same_spectrum}
	Let $ \weights=\p{\bw_1,\dots,\bw_n} $, $ V=\p{\bv_1,\dots,\bv_k} $ where $ \bw_i,\bv_j\in\reals^k $ for all $ i\in\pcc{n},j\in\pcc{k} $. Denote for any natural $ m\ge0 $, $ \weightstilde{m} = \p{\tilde{\bw}_1,\dots,\tilde{\bw}_n} $, $ \tilde{\bw}_i = \p{\bw_i, \mathbf{0}} \in\reals^{k+m} $, $ \tilde{V}_m = \p{\tilde{\bv}_1,\dots,\tilde{\bv}_k} $, $ \tilde{\bv}_i = \p{\bv_i, \mathbf{0}} \in\reals^{k+m} $ and let $ M\in\reals^{n\times n} $ be the matrix with entries  \begin{equation*}
		M_{ij} =  \begin{cases}
		\frac{1}{2} + \sum\limits_{\substack{l=1 \\ l\neq i}}^{n}\frac{\sin\p{\theta_{\bw_i,\bw_l}}\norm{\bw_l}}{2\pi\norm{\bw_i}} - \sum\limits_{l=1}^k \frac{\sin\p{\theta_{\bw_i,\bv_l}}\norm{\bv_l}}{2\pi\norm{\bw_i}}, & i=j \\
		\frac{1}{2\pi}\p{\pi - \theta_{\bw_i,\bw_j}}, & i\neq j
		\end{cases}.
	\end{equation*}
	Then the spectrum of $ \nabla^2 F\p{\weightstilde{m}} $ is comprised of the spectrum of $ \nabla^2 F\p{\weights} $ and the spectrum of $ M $ with multiplicity $ m $. In particular, if $ \nabla^2 F\p{\weightstilde{1}} \succeq \lambda_{\min}\cdot\bI $ then $ \nabla^2 F\p{\weightstilde{m}} \succeq \lambda_{\min}\cdot\bI $, for any $ m>1 $.
\end{lemma}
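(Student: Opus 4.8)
The plan is to evaluate the closed-form Hessian \eqref{eq:obj_hessian} at $\weightstilde{m}$, notice that zero-padding turns each $h_1$- and $h_2$-block into a block-diagonal matrix, and then apply a coordinate permutation that decouples the padded Hessian into one copy of $\nabla^2 F(\weights)$ and $m$ copies of $M$. First I would record that zero-padding preserves all the geometric quantities in the formulas for $h_1$ and $h_2$: for $\bw,\bv\in\reals^k$ with padded versions $\tilde{\bw}=(\bw,\mathbf 0),\tilde{\bv}=(\bv,\mathbf 0)\in\reals^{k+m}$ we have $\tilde{\bw}^\top\tilde{\bv}=\bw^\top\bv$ and $\norm{\tilde{\bw}}=\norm{\bw}$, hence $\theta_{\tilde{\bw},\tilde{\bv}}=\theta_{\bw,\bv}$, $\tilde{\bw}/\norm{\tilde{\bw}}=(\bar{\bw},\mathbf 0)$, and by \eqref{eq:bn_def} also $\bn_{\tilde{\bv},\tilde{\bw}}=(\bn_{\bv,\bw},\mathbf 0)$ and $\bar{\bn}_{\tilde{\bv},\tilde{\bw}}=(\bar{\bn}_{\bv,\bw},\mathbf 0)$. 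Substituting into the expressions for $h_1,h_2$ and writing $\bI_{k+m}=\mathrm{diag}(\bI_k,\bI_m)$, the rank-one corrections act only on the first $k$ coordinates, so
\[
h_1(\tilde{\bw},\tilde{\bv})=\begin{pmatrix} h_1(\bw,\bv) & 0\\ 0 & \tfrac{\sin(\theta_{\bw,\bv})\norm{\bv}}{2\pi\norm{\bw}}\bI_m\end{pmatrix},\qquad h_2(\tilde{\bw},\tilde{\bv})=\begin{pmatrix} h_2(\bw,\bv) & 0\\ 0 & \tfrac{\pi-\theta_{\bw,\bv}}{2\pi}\bI_m\end{pmatrix}.
\]

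Next I would index the $(k+m)n$ coordinates of $\weightstilde{m}$ by pairs $(i,c)$ with $i\in[n]$ the neuron and $c\in[k+m]$ the inner coordinate, and take $P$ to be the permutation matrix that first lists all pairs with $c\le k$ (in their original order), forming a block of size $kn$, and then, for each $c'\in\set{k+1,\dots,k+m}$, lists $(1,c'),\dots,(n,c')$, forming $m$ further blocks of size $n$. Since $P$ is orthogonal, $\nabla^2 F(\weightstilde{m})$ and $P^\top\nabla^2 F(\weightstilde{m})P$ are similar, hence cospectral. Plugging the block forms above into \eqref{eq:obj_hessian} (now assembled from the padded vectors), the coordinates with $c\le k$ are closed under every summand and their restriction reproduces \eqref{eq:obj_hessian} verbatim for the $k$-dimensional objective with target $V$, so the top-left $kn\times kn$ block of $P^\top\nabla^2 F(\weightstilde{m})P$ equals $\nabla^2 F(\weights)$. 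For each $c'>k$, the $\tfrac12\bI$ term contributes $\tfrac12\bI_n$; every $h_1$-block (which sits on a diagonal $(k+m)\times(k+m)$ block) contributes, in the $c'$-coordinates, only a scalar to the diagonal entry $(i,i)$, namely $+\tfrac{\sin(\theta_{\bw_i,\bw_j})\norm{\bw_j}}{2\pi\norm{\bw_i}}$ or $-\tfrac{\sin(\theta_{\bw_i,\bv_j})\norm{\bv_j}}{2\pi\norm{\bw_i}}$; and every $h_2$-block contributes $\tfrac{\pi-\theta_{\bw_i,\bw_j}}{2\pi}$ to the off-diagonal entry $(i,j)$. Summing over $i\neq j$ and over $i\in[n],j\in[k]$ reproduces exactly the matrix $M$ from the lemma statement. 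Hence $P^\top\nabla^2 F(\weightstilde{m})P$ is block-diagonal with one block $\nabla^2 F(\weights)$ and $m$ blocks equal to $M$, which is the claimed decomposition of the spectrum.

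The ``in particular'' assertion follows at once: if $\nabla^2 F(\weightstilde{1})\succeq\lambda_{\min}\cdot\bI$ then, its spectrum being that of $\nabla^2 F(\weights)$ together with one copy of the spectrum of $M$, every eigenvalue of $\nabla^2 F(\weights)$ and of $M$ is at least $\lambda_{\min}$; since the spectrum of $\nabla^2 F(\weightstilde{m})$ consists of exactly these eigenvalues (with those of $M$ repeated $m$ times), we get $\nabla^2 F(\weightstilde{m})\succeq\lambda_{\min}\cdot\bI$ for every $m$. I expect the only real obstacle to be the bookkeeping: carefully confirming that the rank-one parts of $h_1,h_2$ vanish on the padded coordinates after zero-padding, and tracking how each summand of \eqref{eq:obj_hessian} is routed by the permutation $P$; there is no analytic content beyond this.
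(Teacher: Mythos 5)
Your proof is correct and follows exactly the paper's approach: the paper states the result via a one-line appeal to "a straightforward substitution of $\weightstilde{m}$ and $\tilde V_m$ in \eqref{eq:obj_hessian}, and a permutation of the rows and columns," yielding the block-diagonal form $\mathrm{diag}(\nabla^2 F(\weights), M, \dots, M)$, and you have simply filled in the routine bookkeeping (that zero-padding preserves angles, norms, and $\bar{\bn}$, and how the permutation collects the scalar residues of the padded $h_1,h_2$ blocks into $M$). No substantive difference.
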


\begin{proof}
	A straightforward substitution of $ \weightstilde{m} $ and $ \tilde{V}_m $ in \eqref{eq:obj_hessian}, and a permutation of the rows and columns of the resulting matrix reveals that
	\begin{equation*}
		\nabla^2 F\p{\weightstilde{m}}=
		\begin{bmatrix}
			\nabla^2 F\p{\weights} & 0 & 0 & \cdots & 0 \\
			0 & M & 0 & \cdots & 0 \\
			0 & 0 & M & \cdots & 0 \\
			\vdots & \vdots & \vdots & \ddots & \vdots \\
			0 & 0 & 0 & \cdots & M
		\end{bmatrix}.
	\end{equation*}
	Now, diagonalizing the block diagonal $ \nabla^2 F\p{\weightstilde{m}} $ completes the proof of the lemma.
\end{proof}

	Back to the first part of Corollary \ref{cor:main}, we have from 
	\lemref{lem:same_spectrum} that the lower bound on the smallest eigenvalue 
	of $ \nabla^2 F\p{\weightstilde{1}} $ holds for $ \nabla^2 
	F\p{\weightstilde{m}} $ for any $ m\ge1 $. Furthermore, since $ 
	\|\weights\|_2=\|\weightstilde{m}\|_2 $ for any $ m\ge0 $ we have that 
	the upper bound on the third order derivatives from 
	\subsecref{sec:remainder_bound} and the Lipschitz bound on the objective 
	from \subsecref{subsec:objective_lipschitzness} still hold, as well as the 
	bound on the norm of the gradient. Therefore by running the simulations in \secref{sec:experiments} on $ \weightstilde{1} $ instead of $ \weights $, the 
	results apply in any optimization space $ \reals^{n\p{k+m}} $, for 
	natural $ m\ge0 $, since the conditions for invoking \lemref{lem:key} and 
	\lemref{lem:key2} are met with the same exact constants\footnote{Note that for $ m=0 $ the eigenvalue lower bound constant may change, since the spectrum of $ M $ has no impact on the spectrum of $ \nabla^2 F\p{\weightstilde{0}} $. This, however, can only result in a stronger lower bound and does not affect the validity on the results obtained when running the experiments in \secref{sec:experiments} with $ m=1 $.}, completing the first part of the corollary.
	
	For the second part of the corollary, we note that if $\bv_1,\ldots,\bv_k$ 
	are chosen i.i.d. from $\Ncal(\mathbf{0},cI)$, then by standard 
	concentration arguments, for any $\epsilon>0$ and high enough dimension $d$ 
	(depending on $k,\epsilon$), it holds with probability at least 
	$1-\exp(-\Omega(d))$ that 
	$|\frac{1}{\sqrt{cd}}\norm{\bv_i}-1|\leq \epsilon$ and 
	$|\frac{1}{cd}\bv_i^\top\bv_{i'}|\leq \epsilon$ for all 
	$i,i'\in \{1,\ldots,k\}$ (see 
	\citet{ledoux2005concentration}). Therefore, regardless of which 
	distribution we are considering, with probability at least 
	$1-\exp(-\Omega(d))$, we can find a scalar $a>0$ and an orthogonal matrix 
	$M$, such that $\norm{aM\bv_i-\be_i}\leq \epsilon$ for all $i$, where 
	$\be_i$ is the $i$-th standard basis vector. 
	
	Letting $F$ be our objective function (w.r.t. the randomly chosen 
	$\bv_1,\ldots,\bv_k$), and using the rotational 
	symmetry of the Gaussian distribution and the positive-homogeneity of the 
	ReLU function, we have
	\begin{align*}
	F(\weights) &= \frac{1}{2}
	\E_{\bx\sim 
		\Ncal(\mathbf{0},I)}\left[~\left(\sum_{i=1}^{n}[\bw_i^\top\bx]_+
	-\sum_{i=1}^{k}[\bv_i^\top\bx]_+\right)^2~\right]\\
	&=
	\frac{1}{2}\E_{\bx\sim 
	\Ncal(\mathbf{0},I)}\left[~\left(\sum_{i=1}^{n}[\bw_i^\top(M^\top\bx)]_+
	-\sum_{i=1}^{k}[\bv_i^\top(M^\top\bx)]_+\right)^2~\right]\\	
	&=
	\frac{1}{2a^2}\cdot 
	\E_{\bx\sim 
	\Ncal(\mathbf{0},I)}\left[~\left(\sum_{i=1}^{n}[(aM\bw_i)^\top\bx]_+
	-\sum_{i=1}^{k}[(aM\bv_i)^\top\bx]_+\right)^2~\right]~.	
	\end{align*}
	It follows that $F$ has the same local minima as 
	\[
	\tilde{F}(\weights) ~:=~ \frac{1}{2}\E_{\bx\sim 
		\Ncal(\mathbf{0},I)}\left[~\left(\sum_{i=1}^{n}[\bw_i^\top\bx]_+
	-\sum_{i=1}^{k}[(aM\bv_i)^\top\bx]_+\right)^2~\right],
	\]
	since they are equivalent after scaling and rotation
	($\tilde{F}(\weights) = a^2 F(\frac{1}{a}M^\top \weights)$). Thus, it is enough to prove 
	existence of local minima for $\tilde{F}$. 
	
	By the argument above, we can rewrite $\tilde{F}(\weights)$ as
	\[
	\tilde{F}(\weights) := \tilde{F}_{\tilde{\be}_1,\ldots,\tilde{\be}_k}(\weights)~=~ 
	\frac{1}{2}\E_{\bx\sim 
	\Ncal(\mathbf{0},I)}\left[~\left(\sum_{i=1}^{n}[\bw_i^\top\bx]_+
	-\sum_{i=1}^{k}[\tilde{\be}_i^\top\bx]_+\right)^2~\right],
	\]	
	where (with high probability) each $\tilde{\be}_i$ is $\epsilon$-close to 
	the standard basis vector 
	$\be_i$. If $\be_i=\tilde{\be_i}$, we have already shown that there is some 
	local minimum $\weightsstar$, which is in the interior of a sphere $S$ such that 
	$F(\weightsstar)<\min_{\weights\in S}F(\weights)$, and moreover, the ball $B$ enclosed by $S$ does 
	not contain global minima (see \eqref{eq:Scond})
	since \thmref{thm:F_bound} and the condition in \eqref{eq:non_global_cond} imply that the minimal value in the ball enclosing $ \weights $ is strictly positive. In 
	particular, let $\epsilon_0>0$ be such that 
	$F(\weightsstar)<\min_{\weights\in S}F(\weights)-\epsilon_0$ and $\min_{\weights\in 
	B}F(\weights)>\inf_{\weights}F(\weights)+\epsilon_0$. It is easily verified that by 
	setting $\epsilon$ small enough (depending only on $\weightsstar,B,\epsilon_0$ which 
	are all fixed), we can ensure that
	\[
	\max_{\weights\in B}|\tilde{F}_{\tilde{\be}_1,\ldots,\tilde{\be}_k}(\weights)
	-\tilde{F}_{\be_1,\ldots,\be_k}(\weights)| \leq \frac{\epsilon_0}{3},
	\]
	and therefore
	\[
	\tilde{F}_{\tilde{\be}_1,\ldots,\tilde{\be}_k}(\weightsstar)<
	\min_{\weights\in S} \tilde{F}_{\tilde{\be}_1,\ldots,\tilde{\be}_k}(\weights),
	\]
	as well as
	\[
	\min_{\weights\in 
	B}\tilde{F}_{\tilde{\be}_1,\ldots,\tilde{\be}_k}(\weights)>
\inf_{\weights}\tilde{F}_{\tilde{\be}_1,\ldots,\tilde{\be}_k}(\weights),
	\]
	which implies that any minimizer of 
	$\tilde{F}_{\tilde{\be}_1,\ldots,\tilde{\be}_k}$ over $B$ must be a local 
	(non-global) minimum.
	
\subsection{Technical Proofs}\label{sec:technical_proofs}

\subsubsection{Derivation of $ \nabla^2F\p{\weights} 
$}\label{sec:hessian_derivation}

\begin{theorem}
	The Hessian of $ F $ at point $ \weights = \p{\bw_1,\dots,\bw_n} $ with respect to target values $ \p{\bv_1,\dots,\bv_k} $ is given on the main diagonals 
	\begin{equation*}
		\frac{\partial^2F}{\partial\bw_i^2} = \frac{1}{2}\mathbf{I} + \sum_{\substack{j=1\\j\neq i}}^{n} h_1\p{\bw_i,\bw_j} - \sum_{j=1}^k h_1\p{\bw_i,\bv_j},
	\end{equation*}
	and on the off-diagonals by
	\[
	\frac{\partial^2F}{\partial\bw_i\partial\bw_j}=h_2\p{\bw_i,\bw_j},
	\]
	where
	\[
	h_1\p{\bw,\bv}
	=\frac{\sin\p{\theta_{\bw,\bv}}\norm{\bv}}{2\pi\norm{\bw}}\p{\mathbf{I}-\bar{\bw	}\bar{\bw}^{\top}+\bar{\bn}_{\bv,\bw}\bar{\bn}_{\bv,\bw}^{\top}},
	\]
	and
	\[
	h_2\p{\bw,\bv}= \frac{1}{2\pi}\p{\p{\pi-\theta_{\bw,\bv}}\bI+\bar{\bn}_{\bw,\bv}\bar{\bv}^{\top}+\bar{\bn}_{\bv,\bw}\bar{\bw}^{\top}}.
	\]
\end{theorem}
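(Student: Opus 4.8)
The plan is to obtain $\nabla^{2}F$ by differentiating the closed-form gradient already derived above, in which the $i$-th length-$k$ block of $\nabla F\p{\weights}$ equals $\frac{1}{2}\bw_i+\sum_{j\neq i}g\p{\bw_i,\bw_j}-\sum_{j=1}^{k}g\p{\bw_i,\bv_j}$, with $g\p{\bw,\bv}=\partial_{\bw}f\p{\bw,\bv}$ given in closed form by \eqref{eq:g_function}. By linearity the whole computation reduces to the two second derivatives of the scalar kernel $f$: the Jacobian $h_1\p{\bw,\bv}\coloneqq\partial_{\bw}g\p{\bw,\bv}=\frac{\partial^2}{\partial\bw^2}f\p{\bw,\bv}$ in the first argument, and $h_2\p{\bw,\bv}\coloneqq\frac{\partial^2}{\partial\bw\partial\bv}f\p{\bw,\bv}=\partial_{\bv}g\p{\bw,\bv}$ in the second. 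Differentiating the $i$-th block in $\bw_i$ then immediately yields the stated diagonal block --- the $\frac{1}{2}\bI$ coming from the $\frac{1}{2}\bw_i$ term and the two $h_1$-sums from the self-blocks --- while for $j\neq i$ only the single summand $g\p{\bw_i,\bw_j}$ depends on $\bw_j$, and differentiating it in $\bw_j$ gives the off-diagonal block $h_2\p{\bw_i,\bw_j}$.

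For the two kernel derivatives I would reuse the elementary identities $\partial_{\bw}\bar{\bw}=\norm{\bw}^{-1}\p{\bI-\bar{\bw}\bar{\bw}^{\top}}$, $\partial_{\bw}\theta_{\bw,\bv}=-\norm{\bw}^{-1}\bar{\bn}_{\bv,\bw}$, and $\partial_{\bw}\sin\p{\theta_{\bw,\bv}}=-\norm{\bw}^{-1}\cos\p{\theta_{\bw,\bv}}\bar{\bn}_{\bv,\bw}$, together with their analogues obtained by interchanging $\bw$ and $\bv$ (these are precisely the derivatives computed in the proof of \lemref{lem:h1h2_lipschitz}). Substituting them into $g\p{\bw,\bv}=\frac1{2\pi}\p{\norm{\bv}\sin\p{\theta_{\bw,\bv}}\bar{\bw}+\p{\pi-\theta_{\bw,\bv}}\bv}$ and applying the product rule, the $\bw$-derivative produces $\frac{1}{2\pi\norm{\bw}}$ times a sum of $\norm{\bv}\sin\p{\theta_{\bw,\bv}}\p{\bI-\bar{\bw}\bar{\bw}^{\top}}$, a $\bar{\bw}\bar{\bn}_{\bv,\bw}^{\top}$ term, and a $\bv\bar{\bn}_{\bv,\bw}^{\top}$ term, while the $\bv$-derivative produces $\frac1{2\pi}$ times $\p{\pi-\theta_{\bw,\bv}}\bI$ plus a handful of rank-one terms built from $\bar{\bw},\bar{\bv}$ and $\bar{\bn}_{\bw,\bv}$.

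The heart of the argument --- and the step I expect to be the main obstacle --- is the linear-algebraic simplification that collapses these rank-one terms into the compact forms claimed. The key fact is $\norm{\bn_{\bv,\bw}}^{2}=1-\cos^{2}\p{\theta_{\bw,\bv}}=\sin^{2}\p{\theta_{\bw,\bv}}$, so $\norm{\bn_{\bv,\bw}}=\sin\p{\theta_{\bw,\bv}}$ (using $\theta_{\bw,\bv}\in\p{0,\pi}$, which holds wherever $F$ is smooth), and hence by \eqref{eq:bn_def} $\bar{\bv}=\cos\p{\theta_{\bw,\bv}}\bar{\bw}+\sin\p{\theta_{\bw,\bv}}\bar{\bn}_{\bv,\bw}$, with the companion identity $\bar{\bn}_{\bv,\bw}=\sin\p{\theta_{\bw,\bv}}\bar{\bv}-\cos\p{\theta_{\bw,\bv}}\bar{\bn}_{\bw,\bv}$. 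Writing $\bv=\norm{\bv}\bar{\bv}$ in the $\bv\bar{\bn}_{\bv,\bw}^{\top}$ term and using the first identity makes its $\bar{\bw}\bar{\bn}_{\bv,\bw}^{\top}$ component cancel exactly against the one coming from $\partial_{\bw}\sin\p{\theta_{\bw,\bv}}$, leaving $h_1\p{\bw,\bv}=\frac{\sin\p{\theta_{\bw,\bv}}\norm{\bv}}{2\pi\norm{\bw}}\p{\bI-\bar{\bw}\bar{\bw}^{\top}+\bar{\bn}_{\bv,\bw}\bar{\bn}_{\bv,\bw}^{\top}}$; feeding the same decompositions into the $\bv$-derivative rewrites its rank-one part as $\bar{\bn}_{\bw,\bv}\bar{\bv}^{\top}+\bar{\bn}_{\bv,\bw}\bar{\bw}^{\top}$, producing the claimed $h_2$. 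Beyond this bookkeeping, the only point needing care is keeping the Jacobian/transpose convention consistent so that the $\p{i,j}$ off-diagonal block comes out as $h_2\p{\bw_i,\bw_j}$ rather than its transpose; since $f$ is symmetric in its two arguments one can check that $h_2\p{\bw,\bv}$ is in fact a symmetric matrix and is symmetric under $\bw\leftrightarrow\bv$, which is exactly the consistency condition $\nabla^{2}F\p{\weights}=\nabla^{2}F\p{\weights}^{\top}$ demands of the assembled matrix.
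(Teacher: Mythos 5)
Your proof is correct, and it takes a genuinely different route from the paper's. The paper computes the Hessian entry-by-entry from scratch: it differentiates the scalar expression $f\p{\bw,\bv}$ twice in individual coordinates, producing rather lengthy expressions in $w_i,v_j,\cos\theta,\sin\theta$, and then reassembles these scalars into a matrix and simplifies using $\norm{\bn_{\bv,\bw}}=\sin\theta_{\bw,\bv}$. You instead differentiate the already-established gradient $g\p{\bw,\bv}$ once at the vector level, reusing the auxiliary identities $\partial_\bw\bar{\bw}=\norm{\bw}^{-1}\p{\bI-\bar\bw\bar\bw^{\top}}$, $\partial_\bw\theta_{\bw,\bv}=-\norm{\bw}^{-1}\bar\bn_{\bv,\bw}$, $\partial_\bw\sin\theta_{\bw,\bv}=-\norm{\bw}^{-1}\cos\theta_{\bw,\bv}\bar\bn_{\bv,\bw}$ (which the paper itself only extracts later, in the proof of Lemma~\ref{lem:h1h2_lipschitz}), and then collapse the rank-one pieces with the rotation identities $\bar\bv=\cos\theta\,\bar\bw+\sin\theta\,\bar\bn_{\bv,\bw}$ and $\bar\bn_{\bv,\bw}=\sin\theta\,\bar\bv-\cos\theta\,\bar\bn_{\bw,\bv}$. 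This is shorter and more transparent, and it makes the exact cancellation of the $\cos\theta\,\bar\bw\bar\bn_{\bv,\bw}^{\top}$ terms visible in one line rather than buried in a coordinatewise sum.

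One small cosmetic remark: the expression your product-rule computation first produces for $h_2$ is $\frac{1}{2\pi}\p{\p{\pi-\theta}\bI+\bar\bw\bar\bn_{\bv,\bw}^{\top}+\bar\bv\bar\bn_{\bw,\bv}^{\top}}$, which is the transpose of the paper's displayed form $\frac{1}{2\pi}\p{\p{\pi-\theta}\bI+\bar\bn_{\bw,\bv}\bar\bv^{\top}+\bar\bn_{\bv,\bw}\bar\bw^{\top}}$. You correctly observe that this is immaterial because $h_2$ is symmetric; it would strengthen the write-up to make that verification explicit (it follows in one line by expressing everything in the orthonormal basis $\set{\bar\bw,\bar\bn_{\bv,\bw}}$, where the rank-two part becomes $\left(\begin{smallmatrix}\sin\theta\cos\theta&\sin^2\theta\\\sin^2\theta&-\sin\theta\cos\theta\end{smallmatrix}\right)$), and it also confirms the assembly consistency $h_2\p{\bw,\bv}^{\top}=h_2\p{\bv,\bw}$ that you invoke for the off-diagonal blocks.
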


\begin{proof}
	By a straightforward calculation, we have
	\begin{align*}
	\frac{\partial^2 f\p{\bw,\bv}}{\partial w_i^2}
	&=\frac{1}{2\pi}\left(\norm{\bv}\p{\p{\frac{1}{\norm{\bw}}-\frac{w_i^2}{\norm{\bw}^3}}\sin\p{\theta_{\bw,\bv}}-\frac{w_i}{\norm{\bw}}\frac{\frac{\bw^{\top}\bv}{\norm{\bw}\norm{\bv}}}{\sqrt{1-\p{\frac{\bw^{\top}\bv}{\norm{\bw}\norm{\bv}}}^2}}\p{\frac{v_i}{\norm{\bw}\norm{\bv}}-\frac{w_i}{\norm{\bw}^2}\frac{\bw^{\top}\bv}{\norm{\bw}\norm{\bv}}}}\right. \\
	&~~~~~~~~+ \left.\frac{v_i}{\sqrt{1-\p{\frac{\bw^{\top}\bv}{\norm{\bw}\norm{\bv}}}^2}}\p{\frac{v_i}{\norm{\bw}\norm{\bv}}-\frac{w_i}{\norm{\bw}^2}\frac{\bw^{\top}\bv}{\norm{\bw}\norm{\bv}}}\right)\\
	&= \frac{1}{2\pi}\p{\frac{\norm{\bv}}{\norm{\bw}}\sin\p{\theta_{\bw,\bv}}+w_i^2\frac{\norm{\bv}\cos\p{2\theta_{\bw,\bv}}}{\norm{\bw}^3\sin\p{\theta_{\bw,\bv}}}-2w_iv_i\frac{\cos\p{\theta_{\bw,\bv}}}{\norm{\bw}^2\sin\p{\theta_{\bw,\bv}}}+v_i^2\frac{1}{\norm{\bw}\norm{\bv}\sin\p{\theta_{\bw,\bv}}}}
	\end{align*}
	
	\begin{align*}
	\frac{\partial^2 f\p{\bw,\bv}}{\partial w_iw_j}
	&=\frac{1}{2\pi}\left(\norm{\bv}\p{-\frac{w_iw_j}{\norm{\bw}^3}\sin\p{\theta_{\bw,\bv}}-\frac{w_i}{\norm{\bw}}\frac{\frac{\bw^{\top}\bv}{\norm{\bw}\norm{\bv}}}{\sqrt{1-\p{\frac{\bw^{\top}\bv}{\norm{\bw}\norm{\bv}}}^2}}\p{\frac{v_j}{\norm{\bw}\norm{\bv}}-\frac{w_j}{\norm{\bw}^2}\frac{\bw^{\top}\bv}{\norm{\bw}\norm{\bv}}}}\right. \\
	&~~~~~~~~+ \left.\frac{v_i}{\sqrt{1-\p{\frac{\bw^{\top}\bv}{\norm{\bw}\norm{\bv}}}^2}}\p{\frac{v_j}{\norm{\bw}\norm{\bv}}-\frac{w_j}{\norm{\bw}^2}\frac{\bw^{\top}\bv}{\norm{\bw}\norm{\bv}}}\right)\\
	&= \frac{1}{2\pi}\p{w_iw_j\frac{\norm{\bv}\cos\p{2\theta_{\bw,\bv}}}{\norm{\bw}^3\sin\p{\theta_{\bw,\bv}}}-\p{w_iv_j+w_jv_i}\frac{\cos\p{\theta_{\bw,\bv}}}{\norm{\bw}^2\sin\p{\theta_{\bw,\bv}}}+v_iv_j\frac{1}{\norm{\bw}\norm{\bv}\sin\p{\theta_{\bw,\bv}}}}
	\end{align*}
	
	Hence
	
	\begin{align}\label{eq:main_diag_hess}
	\frac{\partial^2 f\p{\bw,\bv}}{\partial \bw^2}
	&= \frac{1}{2\pi}\left(\frac{\norm{\bv}}{\norm{\bw}}\sin\p{\theta_{\bw,\bv}}\bI+\frac{\norm{\bv}\cos\p{2\theta_{\bw,\bv}}}{\norm{\bw}^3\sin\p{\theta_{\bw,\bv}}}\bw\bw^{\top}\right.\\
	& ~~~~~~~~ \left.- \frac{\cos\p{\theta_{\bw,\bv}}}{\norm{\bw}^2\sin\p{\theta_{\bw,\bv}}}\p{\bw\bv^{\top}+\bv\bw^{\top}}+\frac{1}{\norm{\bw}\norm{\bv}\sin\p{\theta_{\bw,\bv}}}\bv\bv^{\top}\right)\nonumber\\
	&= \frac{\norm{\bv}}{2\pi\norm{\bw}}\p{\sin\p{\theta_{\bw,\bv}}\mathbf{I}+\frac{\cos\p{2\theta_{\bw,\bv}}}{\sin\p{\theta_{\bw,\bv}}}\bar{\bw}\bar{\bw}^{\top}-\frac{\cos\p{\theta_{\bw,\bv}}}{\sin\p{\theta_{\bw,\bv}}}\p{\bar{\bw}\bar{\bv}^{\top}+\bar{\bv}\bar{\bw}^{\top}}+\frac{1}{\sin\p{\theta_{\bw,\bv}}}\bar{\bv}\bar{\bv}^{\top}}\nonumber\\
	&= \frac{\norm{\bv}}{2\pi\sin\p{\theta_{\bw,\bv}}\norm{\bw}}\p{\sin^2\p{\theta_{\bw,\bv}}\p{\mathbf{I}-\bar{\bw}\bar{\bw}^{\top}}+\p{\bar{\bv}-\cos\p{\theta_{\bw,\bv}}\bar{\bw}}\p{\bar{\bv}-\cos\p{\theta_{\bw,\bv}}\bar{\bw}}^{\top}}.
	\end{align}
	Recall the definition of $ \bn $ in \eqref{eq:bn_def}, we have that
	\begin{align*}
	\norm{\bn}^2 &= \p{\bar{\bv}-\cos\p{\theta_{\bw,\bv}}\bar{\bw}}^{\top}\p{\bar{\bv}-\cos\p{\theta_{\bw,\bv}}\bar{\bw}} \\
	&= \bar{\bv}^{\top}\bar{\bv}-2\cos\p{\theta_{\bw,\bv}}\bar{\bv}^{\top}\bar{\bw} + \cos^2\p{\theta_{\bw,\bv}}\bar{\bw}^{\top}\bar{\bw} \\
	&= 1-\cos^2\p{\theta_{\bw,\bv}} \\
	&= \sin^2\p{\theta_{\bw,\bv}}.
	\end{align*}
	Therefore \eqref{eq:main_diag_hess} can be written as
	\[
	\frac{\partial^2 f\p{\bw,\bv}}{\partial \bw^2}
	=\frac{\sin\p{\theta_{\bw,\bv}}\norm{\bv}}{2\pi\norm{\bw}}\p{\mathbf{I}-\bar{\bw}\bar{\bw}^{\top}+\bar{\bn}_{\bv,\bw}\bar{\bn}_{\bv,\bw}^{\top}}.
	\]
	Differentiating with respect to different individual parameter vectors, we 
	have
	\begin{align*}
	\frac{\partial^2 f\p{\bw,\bv}}{\partial w_i\partial v_i}
	&=\frac{1}{2\pi}\left(\frac{w_i}{\norm{\bw}}\p{\frac{v_i}{\norm{\bv}}\sin\p{\theta_{\bw,\bv}}-\norm{\bv}\frac{\frac{\bw^{\top}\bv}{\norm{\bw}\norm{\bv}}}{\sqrt{1-\p{\frac{\bw^{\top}\bv}{\norm{\bw}\norm{\bv}}}^2}}\p{\frac{w_i}{\norm{\bw}\norm{\bv}}-\frac{v_i}{\norm{\bv}^2}\frac{\bw^{\top}\bv}{\norm{\bw}\norm{\bv}}}}\right.\\
	&~~~~~~~~+ \left.\p{\pi-\theta_{\bw,\bv}}+\frac{v_i}{\sqrt{1-\p{\frac{\bw^{\top}\bv}{\norm{\bw}\norm{\bv}}}^2}}\p{\frac{w_i}{\norm{\bw}\norm{\bv}}-\frac{v_i}{\norm{\bv}^2}\frac{\bw^{\top}\bv}{\norm{\bw}\norm{\bv}}}\right)\\
	&= \frac{1}{2\pi}\left(-w_i^2\frac{\cos\p{\theta_{\bw,\bv}}}{\norm{\bw^2}\sin\p{\theta_{\bw,\bv}}}+w_iv_i\frac{1}{\norm{\bw}\norm{\bv}}\p{\sin\p{\theta_{\bw,\bv}}+\frac{1}{\sin\p{\theta_{\bw,\bv}}}+\frac{\cos^2\p{\theta_{\bw,\bv}}}{\sin\p{\theta_{\bw,\bv}}}}\right.\\
	&\left.-v_i^2\frac{\cos\p{\theta_{\bw,\bv}}}{\norm{\bv}^2\sin\p{\theta_{\bw,\bv}}}+\p{\pi-\theta_{\bw,\bv}}\right)\\
	&= \frac{1}{2\pi}\left(-w_i^2\frac{\cos\p{\theta_{\bw,\bv}}}{\norm{\bw^2}\sin\p{\theta_{\bw,\bv}}}+w_iv_i\frac{2}{\norm{\bw}\norm{\bv}\sin\p{\theta_{\bw,\bv}}}-v_i^2\frac{\cos\p{\theta_{\bw,\bv}}}{\norm{\bv}^2\sin\p{\theta_{\bw,\bv}}}+\p{\pi-\theta_{\bw,\bv}}\right)
	\end{align*}
	
	\begin{align*}
	\frac{\partial^2 f\p{\bw,\bv}}{\partial w_i\partial v_j}
	&=\frac{1}{2\pi}\left(\frac{w_i}{\norm{\bw}}\p{\frac{v_j}{\norm{\bv}}\sin\p{\theta_{\bw,\bv}}-\norm{\bv}\frac{\frac{\bw^{\top}\bv}{\norm{\bw}\norm{\bv}}}{\sqrt{1-\p{\frac{\bw^{\top}\bv}{\norm{\bw}\norm{\bv}}}^2}}\p{\frac{w_j}{\norm{\bw}\norm{\bv}}-\frac{v_j}{\norm{\bv}^2}\frac{\bw^{\top}\bv}{\norm{\bw}\norm{\bv}}}}\right.\\
	&~~~~~~~~+ \left.\frac{v_i}{\sqrt{1-\p{\frac{\bw^{\top}\bv}{\norm{\bw}\norm{\bv}}}^2}}\p{\frac{w_j}{\norm{\bw}\norm{\bv}}-\frac{v_j}{\norm{\bv}^2}\frac{\bw^{\top}\bv}{\norm{\bw}\norm{\bv}}}\right)\\
	&= \frac{1}{2\pi}\left(-w_iw_j\frac{\cos\p{\theta_{\bw,\bv}}}{\norm{\bw^2}\sin\p{\theta_{\bw,\bv}}}+w_iv_j\frac{1}{\norm{\bw}\norm{\bv}}\p{\sin\p{\theta_{\bw,\bv}}+\frac{\cos^2\p{\theta_{\bw,\bv}}}{\sin\p{\theta_{\bw,\bv}}}}\right.\\
	&\left.~~~~~~~~+w_jv_i\frac{1}{\norm{\bw}\norm{\bv}\sin\p{\theta_{\bw,\bv}}}-v_iv_j\frac{\cos\p{\theta_{\bw,\bv}}}{\norm{\bv}^2\sin\p{\theta_{\bw,\bv}}}\right)\\
	&= -\frac{1}{2\pi}\left(w_iw_j\frac{\cos\p{\theta_{\bw,\bv}}}{\norm{\bw^2}\sin\p{\theta_{\bw,\bv}}}-w_iv_j\frac{1}{\norm{\bw}\norm{\bv}\sin\p{\theta_{\bw,\bv}}}\right.\\
	&\left.~~~~~~~~-w_jv_i\frac{1}{\norm{\bw}\norm{\bv}\sin\p{\theta_{\bw,\bv}}}+v_iv_j\frac{\cos\p{\theta_{\bw,\bv}}}{\norm{\bv}^2\sin\p{\theta_{\bw,\bv}}}\right).
	\end{align*}
	Hence
	\begin{align*}
	\frac{\partial^2 f\p{\bw,\bv}}{\partial \bw\partial \bv}
	&= \p{\frac{\pi-\theta_{\bw,\bv}}{2\pi}}\mathbf{I}+\frac{1}{2\pi\sin\p{\theta_{\bw,\bv}}}\p{\p{\bar{\bw}+\bar{\bv}}\p{\bar{\bw}+\bar{\bv}}^{\top}-\p{1+\cos\p{\theta_{\bw,\bv}}}\p{\bar{\bw}\bar{\bw}^{\top}+\bar{\bv}\bar{\bv}^{\top}}}\\
	&= \p{\frac{\pi-\theta_{\bw,\bv}}{2\pi}}\mathbf{I}+\frac{1}{2\pi\sin\p{\theta_{\bw,\bv}}}\p{\bar{\bw}\bar{\bv}^{\top}+\bar{\bv}\bar{\bw}^{\top}-\cos\p{\theta_{\bw,\bv}}\bar{\bw}\bar{\bw}^{\top}-\cos\p{\theta_{\bw,\bv}}\bar{\bv}\bar{\bv}^{\top}}\\
	&= \p{\frac{\pi-\theta_{\bw,\bv}}{2\pi}}\mathbf{I}+\frac{1}{2\pi\sin\p{\theta_{\bw,\bv}}}\p{\p{\bar{\bw}-\cos\p{\theta_{\bw,\bv}}\bar{\bv}}\bar{\bv}^{\top}+\p{\bar{\bv}-\cos\p{\theta_{\bw,\bv}}\bar{\bw}}\bar{\bw}^{\top}}\\
	&=
	\frac{1}{2\pi}\p{\p{\pi-\theta_{\bw,\bv}}\bI+\bar{\bn}_{\bw,\bv}\bar{\bv}^{\top}+\bar{\bn}_{\bv,\bw}\bar{\bw}^{\top}}.
	\end{align*}
	Recall the objective in \eqref{eq:closedform_obj}, we have that its Hessian is comprised of $ n\times n $ blocks of size $ d\times d $ each. On the main diagonal we therefore have
	\begin{align*}
	\frac{\partial^2F}{\partial\bw_i^2}&=\frac{\partial^2}{\partial\bw_i^2}\p{\frac{1}{2}f\p{\bw_i,\bw_i} + \sum_{\substack{j=1\\j\neq i}}^{n}f\p{\bw_i,\bw_j} + \sum_{j=1}^kf\p{\bw_i,\bv_j}}\\
	&=\frac{1}{2}\bI + \sum_{\substack{j=1\\j\neq i}}^{n} h_1\p{\bw_i,\bw_j} - \sum_{j=1}^k h_1\p{\bw_i,\bv_j},
	\end{align*}
	and on the off diagonal we have
	\[
	\frac{\partial^2F}{\partial\bw_i\partial\bw_j}=h_2\p{\bw_i,\bw_j}.
	\]
\end{proof}
	
\subsubsection{The Spectral Norm of $ h_1 $ and $ h_2 $}

\begin{lemma}\label{lem:hess_sp_norm_bound}
	We have that
	\begin{itemize}
		\item
		$ \spnorm{h_1\p{\bw,\bv}} = 
		\frac{\sin\p{\theta_{\bw,\bv}}\norm{\bv}}{\pi\norm{\bw}}. $
		\item
		$ \spnorm{h_2\p{\bw,\bv}} = \frac{1}{2\pi}\p{\pi - \theta_{\bw,\bv} + 
		\sin\p{\theta_{\bw,\bv}}}. $
	\end{itemize}
\end{lemma}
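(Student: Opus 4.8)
The plan is to diagonalize both matrices explicitly, using that --- apart from a multiple of the identity --- everything lives on the two-dimensional subspace $\mathrm{span}\{\bar{\bw},\bar{\bv}\}$. Throughout I work in the regime $\theta_{\bw,\bv}\in(0,\pi)$, which is exactly where $h_1,h_2$ are defined (no two parallel vectors). The structural fact to establish first is that $\bar{\bw}$ and $\bar{\bn}_{\bv,\bw}$ are orthonormal: $\bar{\bw}^{\top}\bn_{\bv,\bw}=\bar{\bw}^{\top}\bar{\bv}-\cos\p{\theta_{\bw,\bv}}=0$ by definition of the angle, and $\norm{\bn_{\bv,\bw}}=\sin\p{\theta_{\bw,\bv}}$ was already computed in the Hessian derivation. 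Completing $\{\bar{\bw},\bar{\bn}_{\bv,\bw}\}$ to an orthonormal basis, I would also record the coordinate expressions $\bar{\bv}=\cos\p{\theta_{\bw,\bv}}\bar{\bw}+\sin\p{\theta_{\bw,\bv}}\bar{\bn}_{\bv,\bw}$ and, by a one-line substitution into the definition of $\bn_{\bw,\bv}$, $\bar{\bn}_{\bw,\bv}=\sin\p{\theta_{\bw,\bv}}\bar{\bw}-\cos\p{\theta_{\bw,\bv}}\bar{\bn}_{\bv,\bw}$, so all four vectors lie in that subspace.

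For $h_1$, factor out $\frac{\sin\p{\theta_{\bw,\bv}}\norm{\bv}}{2\pi\norm{\bw}}$ and consider $M_1:=\bI-\bar{\bw}\bar{\bw}^{\top}+\bar{\bn}_{\bv,\bw}\bar{\bn}_{\bv,\bw}^{\top}$, which is manifestly symmetric. In the orthonormal basis above, $M_1\bar{\bw}=\mathbf{0}$, $M_1\bar{\bn}_{\bv,\bw}=2\bar{\bn}_{\bv,\bw}$, and $M_1$ acts as the identity on the orthogonal complement; hence its eigenvalues are $0$, $2$ and $1$, so $\spnorm{M_1}=2$ and $\spnorm{h_1\p{\bw,\bv}}=\frac{\sin\p{\theta_{\bw,\bv}}\norm{\bv}}{\pi\norm{\bw}}$. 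For $h_2$, factor out $\frac{1}{2\pi}$ and consider $M_2:=\p{\pi-\theta_{\bw,\bv}}\bI+R$ with $R:=\bar{\bn}_{\bw,\bv}\bar{\bv}^{\top}+\bar{\bn}_{\bv,\bw}\bar{\bw}^{\top}$, which vanishes off $\mathrm{span}\{\bar{\bw},\bar{\bn}_{\bv,\bw}\}$. Substituting the coordinate expressions above, $R$ restricted to that subspace equals $\sin\p{\theta_{\bw,\bv}}$ times the symmetric $2\times2$ matrix with rows $\p{\cos\theta_{\bw,\bv},\sin\theta_{\bw,\bv}}$ and $\p{\sin\theta_{\bw,\bv},-\cos\theta_{\bw,\bv}}$ --- a reflection, with trace $0$ and determinant $-1$, hence eigenvalues $\pm1$. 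Thus $R$ (and $M_2$) is symmetric, $R$ has eigenvalues $\pm\sin\p{\theta_{\bw,\bv}}$ and $0$, so $M_2$ has eigenvalues $\pi-\theta_{\bw,\bv}\pm\sin\p{\theta_{\bw,\bv}}$ and $\pi-\theta_{\bw,\bv}$. Since $\pi-\theta_{\bw,\bv}\ge\sin\p{\pi-\theta_{\bw,\bv}}=\sin\p{\theta_{\bw,\bv}}\ge0$ for $\theta_{\bw,\bv}\in(0,\pi)$, all of these are nonnegative and the largest is $\pi-\theta_{\bw,\bv}+\sin\p{\theta_{\bw,\bv}}$; hence $\spnorm{h_2\p{\bw,\bv}}=\frac{1}{2\pi}\p{\pi-\theta_{\bw,\bv}+\sin\p{\theta_{\bw,\bv}}}$.

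The $h_1$ part is essentially immediate once the orthogonality of $\bar{\bw}$ and $\bar{\bn}_{\bv,\bw}$ is noted. The main obstacle is the $h_2$ computation: one has to rewrite $\bar{\bv}$ and $\bar{\bn}_{\bw,\bv}$ in the $\{\bar{\bw},\bar{\bn}_{\bv,\bw}\}$ frame and expand the two individually non-symmetric-looking rank-one terms so that the reflection structure emerges; after that, only the elementary inequality $\pi-\theta\ge\sin\theta$ on $(0,\pi)$ is needed to identify which eigenvalue realizes the spectral norm.
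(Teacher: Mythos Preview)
Your proposal is correct and follows essentially the same approach as the paper: split off the orthogonal complement of $\mathrm{span}\{\bar{\bw},\bar{\bv}\}$ where both matrices act as scalars, then diagonalize on the remaining two-dimensional subspace. The only cosmetic difference is in the $h_2$ computation: the paper directly verifies that $\bar{\bn}_{\bw,\bv}\pm\bar{\bn}_{\bv,\bw}$ are eigenvectors with eigenvalues $\frac{1}{2\pi}\p{\pi-\theta_{\bw,\bv}\pm\sin\theta_{\bw,\bv}}$, whereas you write the restricted operator as a $2\times2$ matrix and recognize it as $\sin\theta_{\bw,\bv}$ times a reflection --- your explicit check that $\pi-\theta_{\bw,\bv}\ge\sin\theta_{\bw,\bv}$ (so that all eigenvalues are nonnegative and the spectral norm equals the largest one) is a nice touch the paper leaves implicit.
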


\begin{proof}
	To find the spectral norm, we compute the spectra of $ h_1, h_2 $.
	\begin{itemize}
		\item
		Clearly, for any $ \bu\in\reals^d $ orthogonal to both $ \bar{\bw},\bar{\bn}_{\bv,\bw} $ we have
		\begin{equation*}
		h_1\p{\bw,\bv}\bu = \frac{\sin\p{\theta_{\bw,\bv}}\norm{\bv}}{2\pi\norm{\bw}}\p{\bI-\bar{\bw}\bar{\bw}^{\top}+\bar{\bn}_{\bv,\bw}\bar{\bn}_{\bv,\bw}^{\top}}\bu = \frac{\sin\p{\theta_{\bw,\bv}}\norm{\bv}}{2\pi\norm{\bw}}\bu.
		\end{equation*} 
		Thus $ \frac{\sin\p{\theta_{\bw,\bv}}\norm{\bv}}{2\pi\norm{\bw}} $ is an eigenvalue of $ h_1 $ with multiplicity at least $ d-2 $. Since $ \bar{\bw},\bar{\bn}_{\bv,\bw} $ are orthogonal, their corresponding eigenvalues comprise the rest of the spectrum of $ h_1 $. Compute
		\begin{align*}
		h_1\p{\bw,\bv}\bar{\bw} &= \frac{\sin\p{\theta_{\bw,\bv}}\norm{\bv}}{2\pi\norm{\bw}}\p{\bI-\bar{\bw}\bar{\bw}^{\top}+\bar{\bn}_{\bv,\bw}\bar{\bn}_{\bv,\bw}^{\top}}\bar{\bw}\\
		&= \frac{\sin\p{\theta_{\bw,\bv}}\norm{\bv}}{2\pi\norm{\bw}}\p{\bar{\bw}-\bar{\bw}\norm{\bar{\bw}}^2}\\
		&= 0.
		\end{align*}
		Hence $ 0 $ is the eigenvalue of $ \bar{\bw} $.
		Also,
		\begin{align*}
		h_1\p{\bw,\bv}\bar{\bn}_{\bv,\bw} &= \frac{\sin\p{\theta_{\bw,\bv}}\norm{\bv}}{2\pi\norm{\bw}}\p{\bI-\bar{\bw}\bar{\bw}^{\top}+\bar{\bn}_{\bv,\bw}\bar{\bn}_{\bv,\bw}^{\top}}\bar{\bn}_{\bv,\bw}\\
		&= \frac{\sin\p{\theta_{\bw,\bv}}\norm{\bv}}{2\pi\norm{\bw}}\p{\bar{\bn}_{\bv,\bw}+\bar{\bn}_{\bv,\bw}\norm{\bar{\bn}_{\bv,\bw}}^2}\\
		&= \frac{\sin\p{\theta_{\bw,\bv}}\norm{\bv}}{\pi\norm{\bw}}\bar{\bn}_{\bv,\bw}.
		\end{align*}
		Therefore $ \frac{\sin\p{\theta_{\bw,\bv}}\norm{\bv}}{\pi\norm{\bw}} $ is the largest eigenvalue of $ h_1 $.
		\item
		Once again, for any $ \bu\in\reals^d $ orthogonal to both $ \bar{\bv},\bar{\bw} $ we have
		\begin{equation*}
		h_2\p{\bw,\bv}\bu = \frac{1}{2\pi}\p{\p{\pi-\theta_{\bw,\bv}}\bI+\bar{\bn}_{\bw,\bv}\bar{\bv}^{\top}+\bar{\bn}_{\bv,\bw}\bar{\bw}^{\top}}\bu = \frac{1}{2\pi}\p{\pi-\theta_{\bw,\bv}}\bu.
		\end{equation*}
		Thus $ \frac{1}{2\pi}\p{\pi-\theta_{\bw,\bv}} $ is an eigenvalue of $ h_2 $ with multiplicity at least $ d-2 $. We now show the remaining two eigenvalues correspond to the eigenvectors $ \bar{\bn}_{\bw,\bv} + \bar{\bn}_{\bv,\bw} $ and $ \bar{\bn}_{\bw,\bv} - \bar{\bn}_{\bv,\bw} $.
		\begin{align*}
		& h_2\p{\bw,\bv}\p{\bar{\bn}_{\bw,\bv} - \bar{\bn}_{\bv,\bw}} \\
		=& \frac{1}{2\pi}\p{\p{\pi-\theta_{\bw,\bv}}\bI+\bar{\bn}_{\bw,\bv}\bar{\bv}^{\top}+\bar{\bn}_{\bv,\bw}\bar{\bw}^{\top}}\p{\bar{\bn}_{\bw,\bv} - \bar{\bn}_{\bv,\bw}}\\
		=& \frac{1}{2\pi}\p{\p{\pi-\theta_{\bw,\bv}}\p{\bar{\bn}_{\bw,\bv} - \bar{\bn}_{\bv,\bw}}+\bar{\bn}_{\bv,\bw}\bar{\bw}^{\top}\bar{\bn}_{\bw,\bv}-\bar{\bn}_{\bw,\bv}\bar{\bv}^{\top}\bar{\bn}_{\bv,\bw}}\\
		=& \frac{1}{2\pi}\p{\p{\pi-\theta_{\bw,\bv}}\p{\bar{\bn}_{\bw,\bv} - \bar{\bn}_{\bv,\bw}}+\bar{\bn}_{\bv,\bw}\bar{\bw}^{\top}\frac{\bar{\bw}-\cos\p{\theta_{\bw,\bv}}\bar{\bv}}{\sin\p{\theta_{\bw,\bv}}}-\bar{\bn}_{\bw,\bv}\bar{\bv}^{\top}\frac{\bar{\bv}-\cos\p{\theta_{\bw,\bv}}\bar{\bw}}{\sin\p{\theta_{\bw,\bv}}}}\\
		=& \frac{1}{2\pi}\p{\p{\pi-\theta_{\bw,\bv}}\p{\bar{\bn}_{\bw,\bv} - \bar{\bn}_{\bv,\bw}}+\bar{\bn}_{\bv,\bw}\frac{1-\cos^2\p{\theta_{\bw,\bv}}}{\sin\p{\theta_{\bw,\bv}}}-\bar{\bn}_{\bw,\bv}\frac{1-\cos^2\p{\theta_{\bw,\bv}}}{\sin\p{\theta_{\bw,\bv}}}}\\
		=& \frac{1}{2\pi}\p{\p{\pi-\theta_{\bw,\bv}}\p{\bar{\bn}_{\bw,\bv} - \bar{\bn}_{\bv,\bw}}-\sin\p{\theta_{\bw,\bv}}\p{\bar{\bn}_{\bw,\bv} - \bar{\bn}_{\bv,\bw}}}\\
		=& \frac{1}{2\pi}\p{\pi-\theta_{\bw,\bv}-\sin\p{\theta_{\bw,\bv}}}\p{\bar{\bn}_{\bw,\bv} - \bar{\bn}_{\bv,\bw}}.
		\end{align*}
		Hence $ \frac{1}{2\pi}\p{\pi-\theta_{\bw,\bv}-\sin\p{\theta_{\bw,\bv}}} $ is an eigenvalue of $ h_2 $. Similarly, we have
		\begin{align*}
		& h_2\p{\bw,\bv}\p{\bar{\bn}_{\bw,\bv} + \bar{\bn}_{\bv,\bw}} \\
		=& \frac{1}{2\pi}\p{\p{\pi-\theta_{\bw,\bv}}\bI+\bar{\bn}_{\bw,\bv}\bar{\bv}^{\top}+\bar{\bn}_{\bv,\bw}\bar{\bw}^{\top}}\p{\bar{\bn}_{\bw,\bv} + \bar{\bn}_{\bv,\bw}}\\
		=& \frac{1}{2\pi}\p{\p{\pi-\theta_{\bw,\bv}}\p{\bar{\bn}_{\bw,\bv} + \bar{\bn}_{\bv,\bw}}+\bar{\bn}_{\bv,\bw}\bar{\bw}^{\top}\bar{\bn}_{\bw,\bv}+\bar{\bn}_{\bw,\bv}\bar{\bv}^{\top}\bar{\bn}_{\bv,\bw}}\\
		=& \frac{1}{2\pi}\p{\p{\pi-\theta_{\bw,\bv}}\p{\bar{\bn}_{\bw,\bv} + \bar{\bn}_{\bv,\bw}}+\bar{\bn}_{\bv,\bw}\bar{\bw}^{\top}\frac{\bar{\bw}-\cos\p{\theta_{\bw,\bv}}\bar{\bv}}{\sin\p{\theta_{\bw,\bv}}}+\bar{\bn}_{\bw,\bv}\bar{\bv}^{\top}\frac{\bar{\bv}-\cos\p{\theta_{\bw,\bv}}\bar{\bw}}{\sin\p{\theta_{\bw,\bv}}}}\\
		=& \frac{1}{2\pi}\p{\p{\pi-\theta_{\bw,\bv}}\p{\bar{\bn}_{\bw,\bv} + \bar{\bn}_{\bv,\bw}}+\bar{\bn}_{\bv,\bw}\frac{1-\cos^2\p{\theta_{\bw,\bv}}}{\sin\p{\theta_{\bw,\bv}}}+\bar{\bn}_{\bw,\bv}\frac{1-\cos^2\p{\theta_{\bw,\bv}}}{\sin\p{\theta_{\bw,\bv}}}}\\
		=& \frac{1}{2\pi}\p{\p{\pi-\theta_{\bw,\bv}}\p{\bar{\bn}_{\bw,\bv} + \bar{\bn}_{\bv,\bw}}+\sin\p{\theta_{\bw,\bv}}\p{\bar{\bn}_{\bw,\bv} + \bar{\bn}_{\bv,\bw}}}\\
		=& \frac{1}{2\pi}\p{\pi-\theta_{\bw,\bv}+\sin\p{\theta_{\bw,\bv}}}\p{\bar{\bn}_{\bw,\bv} + \bar{\bn}_{\bv,\bw}}.
		\end{align*}
		Therefore $ \frac{1}{2\pi}\p{\pi-\theta_{\bw,\bv}+\sin\p{\theta_{\bw,\bv}}} $ is the largest eigenvalue of $ h_2 $.
	\end{itemize}
\end{proof}

\bibliographystyle{abbrvnat}
\bibliography{bib}

\end{document}